\documentclass[journal, 10pt, onecolumn]{IEEEtran}

\ifCLASSINFOpdf
% \usepackage[pdftex]{graphicx}
% declare the path(s) where your graphic files are
% \graphicspath{{../pdf/}{../jpeg/}}
% and their extensions so you won't have to specify these with
% every instance of \includegraphics
% \DeclareGraphicsExtensions{.pdf,.jpeg,.png}
\else
% or other class option (dvipsone, dvipdf, if not using dvips). graphicx
% will default to the driver specified in the system graphics.cfg if no
% driver is specified.
% \usepackage[dvips]{graphicx}
% declare the path(s) where your graphic files are
% \graphicspath{{../eps/}}
% and their extensions so you won't have to specify these with
% every instance of \includegraphics
% \DeclareGraphicsExtensions{.eps}
\fi

	\usepackage{amssymb,amsbsy,graphicx,amsmath,amsthm,amscd}
	\usepackage[margin=1in]{geometry}
	\usepackage{url}
	
	\usepackage[noadjust]{cite}

	\usepackage{mathptmx}       % selects Times Roman as basic font
	\usepackage{helvet}         % selects Helvetica as sans-serif font
	\usepackage{courier}        % selects Courier as typewriter font
	\usepackage{type1cm}        % activate if the above 3 fonts are
	% not available on your system
	\usepackage{graphicx}     
	
	\usepackage{mathtools, nccmath}

	\usepackage{verbatim}                       
	\usepackage[utf8]{inputenc}
	 % hack for bold math
	
	\usepackage{theoremref}
	\usepackage{bbm}
	\usepackage{xcolor}
	\usepackage{algorithm,algpseudocode}%http://ctan.org/pkg/{algorithms,algorithmx}

	\allowdisplaybreaks

	\usepackage{fullpage}
	\usepackage{epsfig}
	\usepackage{graphicx}
	\usepackage{amsmath}
	\usepackage{amssymb}
	\usepackage{color}
	\usepackage{pdflscape}
	\usepackage{algorithm,algpseudocode}%
	\usepackage{booktabs}
	\usepackage{comment}
	\usepackage{threeparttable}
	\usepackage{cancel}
	\usepackage{hyperref}
	
	\usepackage{amssymb}% http://ctan.org/pkg/amssymb
	\usepackage{pifont}% http://ctan.org/pkg/pifont

	\usepackage{fullpage}
	\usepackage{epsfig}
	\usepackage{graphicx}
	\usepackage{amsmath}
	\usepackage{amssymb}
	\usepackage{color}
	\usepackage{pdflscape}
	\usepackage{algorithm,algpseudocode}%
	\usepackage{booktabs}
	\usepackage{comment}
	\usepackage{threeparttable}
	\usepackage{cancel}
	\usepackage{hyperref}
	
	\usepackage{amssymb}% http://ctan.org/pkg/amssymb
	\usepackage{pifont}% http://ctan.org/pkg/pifont
	\newcommand{\cmark}{\ding{51}}%
	\newcommand{\xmark}{\ding{55}}%

	\newcommand{\R}{\mathbb R}

	\newcommand{\st}{\text{subject to}}

	\newcommand{\supp}{\mbox{supp}}

	\newcommand{\E}{\mathbb{E}}

	\newcommand{\argmin}[1]{\underset{#1}{\mathbf{argmin}}}

	\newcommand{\inner}[1]{\left\langle #1\right\rangle}
	\newcommand{\red}[1]{{\color{red}#1}}

	\newtheorem{assumption}{Assumption}
	\newtheorem{thm}{Theorem}
	
	\newtheorem{lem}[thm]{Lemma}
	
	\newtheorem{cor}[thm]{Corollary}

	\newtheorem{rem}{Remark}
	
	\algnewcommand{\Input}{%
		\State \textbf{Input:}
	}
	\algnewcommand{\Initialize}{%
		\State \textbf{Initialize:}
	}
	\algnewcommand{\Output}{%
		\State \textbf{Output:}
	}
	\algnewcommand{\ForLoop}{%
		\textbf{For $j=1,2,...,m$ do}
	}
	\algnewcommand{\ForLoopmod}{%
		\textbf{For $L=0,1,...,L-1$ do}
	}
	\algnewcommand{\ForEnd}{%
		\textbf{end for}
	}
	\algnewcommand{\Iterate}{%
		\State \textbf{Iterate:}
	}
	
	\newcommand{\indt}{ \ \ \ \ \ \ \ }
	
	\allowdisplaybreaks
	
	%% for editing

	\begin{document}
		
		%%%%%%%%% TITLE
		\title{Federated Gradient Matching Pursuit }
		\author{
			Halyun Jeong, Deanna Needell\thanks{H. Jeong and D. Needell are with the Department of Mathematics, the University of California Los Angeles, Los Angeles, CA 90095 (email: hajeong@math.ucla.edu; deanna@math.ucla.edu)},
			\and Jing Qin\thanks{J. Qin is with the Department of Mathematics, University of Kentucky, KY 40506 (email: jing.qin@uky.edu)}}
		\maketitle

		\begin{abstract}
			Traditional machine learning techniques require centralizing all training data on one server or data hub. Due to the development of communication technologies and a huge amount of decentralized data on many clients, collaborative machine learning has become the main interest while providing privacy-preserving frameworks. In particular, federated learning (FL) provides such a solution to learn a shared model while keeping training data at local clients. 
			On the other hand, in a wide range of machine learning and signal processing applications, the desired solution naturally has a certain structure that can be framed as sparsity with respect to a certain dictionary. This problem can be formulated as an optimization problem with sparsity constraints and solving it efficiently has been one of the primary research topics in the traditional centralized setting. 
			In this paper, we propose a novel algorithmic framework, federated gradient matching pursuit (FedGradMP), to solve the sparsity constrained minimization problem in the FL setting. 
			We also generalize our algorithms to accommodate various practical FL scenarios when only a subset of clients participate per round, when the local model estimation at clients could be inexact, or when the model parameters are sparse with respect to general dictionaries. 
			
			Our theoretical analysis shows the linear convergence of the proposed algorithms. A variety of numerical experiments are conducted to demonstrate the great potential of the proposed framework -- fast convergence both in communication rounds and computation time for many important scenarios without sophisticated parameter tuning.
		\end{abstract}
		
		% REQUIRED
		%\begin{keywords}
		%  federated learning, sparse recovery, gradient matching pursuit, random algorithm
		%\end{keywords}
		
		\begin{IEEEkeywords}
			federated learning, sparse recovery, gradient matching pursuit, random algorithm
		\end{IEEEkeywords}
		
		% REQUIRED
		%\begin{AMS}
		%  68Q25, 68R10, 68U05
		%\end{AMS}
		
		\section{Introduction}
		With the development of technology and science, machine learning for big data processing has become an emerging field with a wide variety of applications. In general, there are several major considerations in dealing with a large amount of data - data storage and privacy, computation, and communication \cite{li2020federated}. To address the limitation of efficiency and scalability of traditional machine learning algorithms for large-scale data, distributed centralized learning allows data and/or model parallelism, where all local data are typically uploaded to a central server but model training is distributed to various clients \cite{verbraeken2020survey}. Different from the traditional centralized learning, federated learning (FL) \cite{brenden2017} is a collaborative learning framework in which many clients work together to solve an optimization problem without sharing local data.
		In order to preserve privacy, and reduce the communication cost between clients and the server, data sets are only stored at the clients locally and can not be transferred to other clients or the server in FL. In other words, it aims to learn a central model using decentralized data sets. The heterogeneous data distributions among the clients pose an additional challenge in federated learning. 
		
		As one of the most popular FL algorithms, Federated Averaging (FedAvg)~\cite{mcmahan2017communication} considers an unconstrained optimization problem where the desired solution has no additional characteristics. FedAvg alternates  gradient descent and averaging of distributed solutions from local clients in an iterative way. However, in a lot of applications, the solution of interest has some special structures, such as sparsity and low-rankness by itself or in some transformed domain, which serve as prior information and can be utilized to address the ill-posedness of the problem and improve performance. Thus, recent interest in FL optimization with additional solution structures has grown, which has been shown to be especially effective when only a few data samples are available at each client but the underlying signal dimension is relatively large \cite{yuan2021federated, tong2020federated}. 
		
		In such cases when the solution to an optimization problem from FL applications possesses a certain structure, for example, sparsity and low-rankness, one can use a regularizer to enforce the desired structure \cite{yuan2021federated}. Federated Dual Averaging (FedDualAvg) by Yuan et al. \cite{yuan2021federated}, different from FedAvg, uses potentially nonsmooth and convex regularizers to promote the structure of the solution. 
		
		When we have more prior information about the solution structure, e.g., the sparsity level, then hard-thresholding based approaches could be often more efficient than the regularization based methods \cite{zhou2018efficient, goodfellow2016deep}. The hard-thresholding based methods aim to solve nonconvex formulations of the problem, which have been successfully applied to many data processing problems lately with improved performance. \cite{needell2009cosamp, blumensath2009iterative, rauhut2017low, grotheer2021iterative, foucart2013invitation}. %\jq{Shall we add one more reference here?} \hj{Added more references.}
		
		Following this line of research, Tong et al. proposed Federated Hard Thresholding (FedHT) and Federated Iterative Hard Thresholding (FedIterHT) \cite{tong2020federated}, employing hard-thresholding at the aggregation step at the server with potentially additional hard-thresholding after each stochastic gradient step at clients. With a proper choice of step sizes for the stochastic gradients at the clients, these methods guarantee linear convergence up to a neighborhood of the solution to the problem. 
		Despite the fact that these approaches partially inherit the advantages of thresholding based methods over those based on regularization, they necessitate fine-tuning of learning rates at clients, which often have practical limitations and they are not applicable for sparse signals with respect to general dictionaries. Their convergence analysis also requires the mini-batch sizes at the clients to grow exponentially in the number of communication rounds, which further limits the usage in most applications. 
		
		Another popular thresholding based method is the gradient matching pursuit (GradMP) \cite{nguyen2017linear} which is a generalization of the compressive sampling matching pursuit (CoSaMP) \cite{needell2009cosamp}. These methods are known to be more efficient than the others such as regularizer based methods, particularly when the sparsity level of a signal is much smaller than its dimension \cite{foucart2013invitation, shen2017tight}.

		\subsection{Contributions}
		We summarize our contributions below.
		
		\begin{itemize}
			\item We propose the Federated Gradient Matching Pursuit algorithm, abbreviated as FedGradMP, to overcome the aforementioned drawbacks. More precisely, we show that the proposed FedGradMP  enjoys the linear convergence up to a small neighborhood of the solution, without the  restrictions for FedHT/FedIterHT to work. Furthermore, Our analysis has shown that FedGradMP converges linearly up to a statistical bias term for the recovery of sparse signals  under mild conditions. 
			
			\item The majority of FL algorithm analyses have been carried out either under bounded gradient or bounded dissimilarity assumptions, which could be problematic for certain scenarios \cite{wang2021field}. Only a few recent works for FL in unconstrained setup provide theoretical guarantees without this type of assumption, but under the unbounded dissimilarity condition which is considered to be the most general type of heterogeneity assumption \cite{khaled2020tighter,woodworth2020minibatch}. 
			To the best of our knowledge, this is the first work for solving sparsity-constrained FL problems that analyzes the convergence under this general dissimilarity condition. %\jq{$\to$ To the best of our knowledge, this is the first work for solving sparsity-constrained FL problems that analyzes the convergence under this general dissimilarity condition. Or something similar but please add the words like "to the best of our knowledge".} \hj{Changed as you suggested.}
			
			\item Thanks to the mechanism of GradMP, FedGradMP does not require intensive tuning of learning rates at the clients for the sparse linear regression problem, which could be often still challenging because of data heterogeneity in the FL setting. 
			Approaches based on the local stochastic gradient at clients including FedHT/FedIterHT, as described in the literature \cite{haddadpour2019convergence,wang2021field,tong2020federated} and demonstrated in our numerical studies, need tweaking the learning rates (step sizes) else they diverge or converge slowly, especially when the data at distinct clients are more heterogeneous. 
			In contrast, FedGradMP which is based on solving low-dimensional sub-optimization problems at clients can be
			often solved efficiently and does not require fine tuning of learning rates. 
			
			% [\jq{ compare the number of tuning parameters among all methods?}]

			\item Most of the signals of practical interest are not sparse by themselves in the standard basis but in a certain dictionary. This observation has led to the development of several sparse recovery methods with general dictionaries in the centralized setting \cite{davenport2013signal,nguyen2017linear,baraniuk2018one}. FedGradMP is a versatile method under a general dictionary framework. One potential problem with using dictionaries in FL methods is the privacy concern if they are correlated with the client data sets. By utilizing dictionaries that are statistically independent with client data sets such as the random Gaussian dictionary, we demonstrate the effectiveness of FedGradMP as an FL method without such concerns.
			
			%Although one can generalize FedHT/FedIterHT to support general dictionaries, our findings suggest that they require much more thresholding steps at clients to achieve comparable recovery as FedGradMP -- limiting the applications with general dictionaries whose associated thresholding operations can be more costly.
			
		\end{itemize}
		
		% [\jq{The three contribution descriptions seem unbalanced. We may add more to the first two bullets.} 
		% \hj{I reorganized the whole contribution part and one more contribution about our theoretical analysis conducted under the most general dissimilarity assumption, which is advertised by other works for unconstrained FL problems \cite{khaled2020tighter}.}
		% ]

		\begin{table*}[t]
			\scriptsize
			
			\begin{center}
				\begin{threeparttable}
					\begin{tabular}{lc@{\quad}c@{\quad}c@{\quad}c@{\quad}c }
						\toprule
						& Dictionary sparsity and & Linear convergence to the solution & No client & \\
						& convergence speed-up &  up to  optimal statistical bias & LR tuning 
						& Unbounded dissimilarity \\
						\midrule
						FedAvg ~\cite{mcmahan2017communication} & \xmark & \xmark &  \xmark  &  \xmark   \\
						FedDualAvg ~\cite{yuan2021federated} & \xmark & \xmark & \xmark  &  \xmark   \\
						% Fast FedDualAvg ~\cite{bao2022fast} & \xmark & \xmark & \xmark   \\
						FedHT/FedIterHT ~\cite{tong2020federated} &\xmark &  \cmark \tnote{*} & \xmark  &  \xmark  \\
						FedGradMP & \cmark  & \cmark & \ding{71}\tnote{**} & \cmark\\\bottomrule 
					\end{tabular}
					\begin{tablenotes}\footnotesize
						\item[*] The convergence analysis of FedHT/FedIterHT, however, requires that the mini-batch sizes increase exponentially in the number of communication rounds, which is generally not practical in many applications. 
						\item[**] FedGradMP does not require learning rate tuning for the sparse linear regression problem that could be still challenging for baseline algorithms based on the stochastic gradient descent, essentially due to heterogeneity in the FL environment as we illustrate Section \ref{section:Challenges in tuning LR}.
					\end{tablenotes}
				\end{threeparttable}
			\end{center}
			\caption{Comparison of our work with related references.} \label{tab:comparasion}
		\end{table*}

		\subsection{Further related works}
		There have been numerous extensions and analyses of FedAvg \cite{mcmahan2017communication, konevcny2016federated, khaled2020tighter, woodworth2020minibatch, wang2021field}, a standard algorithm to train a machine learning model in FL. FedAvg can be considered as a variant of Local SGD, which essentially runs stochastic gradient iterations at each client and averages these locally computed model parameters at a server. 
		% \jq{``average models" or ``average model parameters"?} \hj{I think "average model parameters" is more precise but I saw people use these two interchangeably. } 
		In addition to the considerations of Local SGD \cite{mangasarian1995parallel, mcmahan2017communication} 
		for efficient communication in distributed learning, FedAvg aims to handle challenges in the FL settings such as heterogeneous client datasets and  partial client participation \cite{khaled2020tighter, wang2021field}. Thanks to the recent endeavors of researchers \cite{haddadpour2019local, khaled2020tighter, woodworth2020minibatch}, we now have a better understanding of the convergence behavior of FedAvg, especially when the objective function is (strongly) convex. As for the nonconvex case, several works provide the convergence of FedAvg to the stationary points and global convergence under extra assumptions such as Polyak-Lojasiewicz (PL) condition \cite{haddadpour2019convergence}, which is a generalization of the strong convexity condition.
		However, it is worth noting that these assumptions do not imply our main assumptions, the restricted strong convexity/smoothness. 
		
		An important research direction in FL algorithm analysis is characterizing the trade-off between convergence speed and accuracy that stems from client data heterogeneity. As the clients run more local iterations, the estimates of the local solution become more accurate at each client (improving the convergence rate) while they tend to drift away from the global solution (making the actual residual error larger), especially in a highly heterogeneous environment \cite{hsu2019measuring, wang2020tackling,
			karimireddy2020scaffold,
			charles2021convergence, khaled2020tighter}. Our analysis and numerical experiments on the convergence behavior of FedGradMP also reflect this phenomenon, which becomes more noticeable when the client datasets are highly heterogeneous. 
		
		To reduce the communication cost between the server and clients further, techniques such as sparsifying and reducing the dimensionality of the gradients have been proposed in \cite{basu2019qsparse, rothchild2020fetchsgd, mitra2021linear, haddadpour2021federated, song2022sketching}. In FedGradMP, the hard-thresholding operation is applied whenever the computed models are sent from a server or clients, so the models are already sparsified with the effective dimension same as the desired sparsity level. This makes FedGradMP more attractive in terms of saving communication resources.
		
		Another active area in FL research is client sampling or partial participation. Because of the limited connection bandwidth or a large population of clients, it is often not possible for every client to participate at each round in FL. Many methods incorporate this by modeling each client to participate randomly per round according to some distribution \cite{yang2021achieving, charles2021large, wang2021field}. There have been recent attempts to employ more elaborate sampling strategies such as importance sampling \cite{chen2020optimal}, 
		but this requires extra care since it could leak private information of client data. We analyze FedGradMP under the more common assumption, i.e., the random client participation model, and show the more client participate at each round, the faster the convergence rate is. This observation is consistent with recent findings \cite{yang2021achieving, yang2022anarchic} on the FL algorithms for the unconstrained problem. 
		%\jq{"which is consistent with our common knowledge (or intuition)" Otherwise, I'd suggest to remove "seems ...".} \hj{Thanks for the great suggestion. I added the citation for the recent work supporting our theory. }
		%a close inspection on the lower bound on the convergence speed of FedAvg in \cite{khaled2020tighter, woodworth2020minibatch}  does not prevent the linear convergence up to a neighborhood of the solution. 
		
		\subsection{Organization}
		The rest of the paper is organized as follows. In Section~\ref{sec:review}, we introduce the sparse federated learning problem and make important assumptions that will be used for convergence analysis. In Section~\ref{sec:proposed}, we propose the federated gradient matching pursuit algorithm and discuss the convergence guarantees in detail. Section~\ref{sec:ext} generalizes FedGradMP and its convergence analysis to several practical settings such as the partial client participation environment and inexact estimation at the client side. In addition, we provide theoretical justifications of using a shared random Gaussian dictionary at clients to improve the performance of FedGradMP. 
		Section~\ref{sec:exp} provides a variety of numerical experiments for sparse signal recovery which demonstrate the effectiveness of the proposed approach. %By incorporating the prior information by generative deep learning, we extend the work to develop a generative prior guided algorithm in Section~\ref{sec:ext}. 
		Finally, we draw conclusions in Section~\ref{sec:con}.

		\subsection{Notation}
		\label{subsection:Notation}
		We say that a vector is $s$-sparse if it has at most $s$ nonzero entries. 
		We write $\|\cdot\|_2$ to denote the $\ell_2$ norm for a vector. We use $\|\cdot\|_F$ and $\|\cdot\|$ to denote the Frobenius norm and operator norm of a matrix, respectively. For a given positive integer $m$, $[m]$ denotes the set of integers $\{1, 2, \dots, m\}$. For positive semidefinite matrices $A$ and $B$, $A \preceq B$ means that $B - A$ is positive semidefinite.

		\section{Sparse Federated Learning}\label{sec:review}
		Federated learning is a framework to solve machine learning problems collaboratively by multiple clients possibly with a coordination server. While this provides enhanced privacy, it also poses interesting challenges since clients still require to exchange local parameters to other clients or a server in a communication-efficient way.  Moreover, in many heterogeneous learning environments, the local data of each client can be non-identically distributed and/or statistically dependent.  
		
		To formally describe FL, we begin with introducing the setup. Assume that the number of the clients is $N$, and the local objective function at the $i$-th client is denoted by $f_i(x) = \mathbb{E}_{z \sim D_i} [\ell_i(x;z)]$ where  $D_i$ is the data set at the $i$-th client and $\ell_i(x;z)$ is the  loss function about $x$ that depends on the data $z$. 
		% [\jq{I revised this a little, but we may refer to the latest paper  https://arxiv.org/pdf/2205.01438.pdf}] 
		
		The optimization problem of interest in FL typically takes the form 
		\[
		\min_{x \in \R^n} f(x) := \sum_{i=1}^N p_i f_i(x) 
		\]
		where $x \in \R^n$ and $p_i \in [0,1]$ is the weight for the $i$-th client satisfying $\sum_{i=1}^N p_i = 1$. 
		This formulation is general enough to cover the majority of machine learning setup, the empirical risk minimization (ERM) by taking the expectation uniformly over the data set $D_i$ and $p_i = |D_i|/\sum_{i=1}^N |D_i|$ \cite{wang2021field, koloskova2022sharper}.

		On the other hand, because of communication efficiency or the nature of many applications, it is natural to assume that the solution we are looking for is sparse with respect to a certain dictionary or an atom set.
		In order to discuss this general notion of sparsity, we consider a finite set of atoms $\mathcal{A} = \{{\bf{a}}_1, {\bf{a}}_2, \dots, {\bf{a}}_d\}$ in which ${\bf{a}}_i \in \R^n$ as defined in \cite{nguyen2017linear, qin2017stochastic}. For example, we recover the standard basis when $\mathcal{A} = \{e_1, \cdots, e_n\}$, where $e_i$s are the standard basis vectors for $\R^n$. We say a vector $x$ is \textbf{$\tau$-sparse with respect to $\mathcal{A}$} if $x$ can be represented as
		\[
		x = \sum_{i=1}^d \alpha_i {\bf{a}}_i
		\] where at most $\tau$ number of the coefficients $\alpha_i$'s are nonzero. 
		Then, the support of $x$ with respect to $\mathcal{A}$ is defined in a natural way, $\supp_{\mathcal{A}}(x) = \{i \in [d]: \alpha_i \neq 0 \}$. 
		%In this work, we require the dictionary $\mathcal{A}$ to be selected such that the support of any $x\in\R^n$ with respect to $\mathcal{A}$ is unique for simplicity. 
		%\jq{I would suggest to change the wording here since it will easily cause negative review comments, e.g., ``In this work, we require the dictionary $\mathcal{A}$ is selected such that the support of any $x\in\R^n$ with respect to $\mathcal{A}$ is unique." Updates: Yes, I agree that uniqueness may not be necessary and we may leave this out.} \hj{I agree, so I changed that part as you suggested. But I have now another concern; this essentially forces the dictionary $\mathcal{A}$ to be a basis, not allowing any over/under complete dictionaries, which seems to be quite restrictive.}
		We define the $\ell_0$-norm of $x$ with respect to $\mathcal{A}$ as
		\[
		\|x\|_{0, \mathcal{A}} = \min_{\alpha} \left \{|T|: x = \sum_{i \in T} \alpha_i {\bf{a}}_i, T \subseteq [d] \right\},
		\] where $\alpha = (\alpha_1, \alpha_2, \dots, \alpha_d)^T$.

		With a sparsity constraint, sparse FL aims to solve the constrained problem
		\begin{align}	\label{eq:main_problem}
			\min_{x \in \R^n} f(x) = \sum_{i=1}^N p_i f_i(x) \quad \st \quad \|x\|_{0, \mathcal{A}} \le \tau,
		\end{align}
		where $\tau$ is a preassigned sparsity level. We denote the optimal solution to this problem by $x^*$. 
		We also assume further that each local objective function $f_i$ can be expressed by the average of the functions $g_{i,j}: \R^n \rightarrow \R$, i.e.,
		\begin{equation}
			f_i(x) = {1 \over M} \sum_{j=1}^M g_{i,j}(x),
		\end{equation} 
		for some integer $M$. One can interpret $g_{i,j}$ as a loss function associated with the $i$-th client restricted to the $j$-th mini-batch, where the index set of all possible mini-batches is $[M]$. 
		
		%\jq{We may add a sentence at the beginning of this paragraph for transition, e.g., "The objective function of \eqref{eq:main_problem} usually depends on the data distribution at clients".} \hj{Good suggestion. I have included what you suggested.}
		
		The objective function of \eqref{eq:main_problem} usually depends on the data distribution at clients. For example, the least squares problem in FL typically sets
		\[
		f_i(x)={1 \over 2|D_i|} \|A_{D_i}x - y_{D_i}\|_2^2,
		\] 
		where $A_{D_i}$ is the client $i$ data matrix whose rows consist of the training input data points for client $i$ and $y_{D_i}$ are corresponding observations. 
		Let $b$ be the size of each mini-batch and  $M$ be the total number of mini-batches.
		Then, $M = { |D_i| \choose b}$ and $g_{i,j}$ is given by
		\[
		g_{i,j}(x) = {1 \over 2b} \sum\limits_{(a_k,b_k) \in S_j} ( y_k - \inner{a_k,x})^2,
		\] where $S_j$ is a subset of $D_i$ associated $j$-th mini-batch of $i$-th client. 
		We use $\mathbb{E}_j^{(i)}\varphi_{i,j}(x)$ to denote the expectation of a function $\varphi_{i,j}(x)$ provided that the $j$-th mini-batch index set is chosen from the set of the all possible mini-batches with size $b$, uniformly at random. Hence, the function $f_i$ can be expressed as 
		$f_i(x) = \mathbb{E}_j^{(i)} g_{i,j}(x)$. 
		%\jq{What is $D_i$? Is it related to $\mathcal{D}$? Updates: Yes, it looks confusing to me and could be replaced by other notation. Also, what is $S_j$ in $g_{i,j}$? I moved some inline formulas to the display ones.} \hj{$D_i$ is the dataset of client $i$ and $\mathcal{A}$ is the set of dictionary atoms. Do you think this notation could be confusing? Update: The dictionary is now referred to as $\mathcal{A}$. $S_j$ is a subset of $D_i$ associated $j$-th mini-batch of $i$-th client, I added the definition to above. }

		%We use $\mathbb{E}^{(i)}\varphi_{i,j}(x)$ to denote the expectation of $\varphi_{i,j}(x)$ provided that the $j$-th mini-batch index set follows the uniform distribution over the set $\{S\,:\,S\in\mathcal{P}([M]),\,|S|=b\}$ where $\mathcal{P}([M])$ is the power set of the set $[M]$ and $b$ is the mini-batch size.  
		%\red{[Explain more about this. Two possible generalizations: Different $M$ for each client and general probability distribution over the index set.  See \cite{nguyen2017linear} \cite{qin2017stochastic}.]

			Since the problem \eqref{eq:main_problem} is nonconvex in general, it is difficult to find its solution without additional assumptions on the objective function. In this work, we adopt the assumptions from \cite{nguyen2017linear} shown as follows.
			
			\begin{assumption} [$\mathcal{A}$-restricted Strong Convexity ($\mathcal{A}$-RSC)]
				\label{assumption:main_assumptions}
				The local objective function $f_i$ at the $i$-th client satisfies the restricted $\rho^-_\tau(i)$-strongly convexity condition: for each $i \in [N]$ and any $x_1, x_2 \in \R^n$ with $\|x_1 - x_2\|_{0, \mathcal{A}} \le \tau$,  we have
				\begin{equation}
					f_i(x_1) - f_i(x_2) - \inner{\nabla f_i(x_2), x_1 - x_2} \ge {\rho^-_\tau(i)\over 2} \|x_1 - x_2\|_2^2.
				\end{equation}
			\end{assumption}
			
			\begin{assumption}[$\mathcal{A}$-restricted Strong Smoothness ($\mathcal{A}$-RSS)]
				\label{assumption:main_assumptions2}
				The loss function $g_{i,j}$ associated with the $j$-th mini-batch at the $i$-th client satisfies the restricted $\rho^{+}_\tau(i,j)$-strongly smoothness condition: for each $i,j$ and any $x_1, x_2 \in \R^n$ with $\|x_1 - x_2\|_{0, \mathcal{A}} \le \tau$, we have
				\begin{equation}
					\| \nabla g_{i,j}(x_1) - \nabla  g_{i,j} (x_2) \|_2 \le \rho^{+}_\tau(i,j) \|x_1 - x_2\|_2.
				\end{equation}
			\end{assumption}
			
			\begin{rem}
				Assumptions \ref{assumption:main_assumptions} and \ref{assumption:main_assumptions2} are widely used in the optimization community for solving the high-dimensional statistical learning or sparse recovery problems. Note that the local loss function $f_i$ and $g_{i,j}$ may not be convex or smooth in the entire space $\R^n$ since we only need strong convexity and smoothness assumptions for vectors that are sparse with respect to a dictionary. Most convergence analysis for the FL algorithms assumes (strong) convexity of $f_i$ or the Polyak-Lojasiewicz (PL) condition \cite{haddadpour2019convergence}; neither is weaker than Assumption \ref{assumption:main_assumptions}.
			\end{rem}
			
			Several FL algorithms inspired by classical sparse optimization techniques \cite{nguyen2017linear, daubechies2004iterative, duchi2010composite} have been proposed under these assumptions. Such algorithms include FedHT and FedIterHT which are based on IHT \cite{tong2020federated}.  These algorithms use the hard-thresholding operator $\mathcal{H}_\tau(x)$, which keeps the  $\tau$ largest components of the input vector $x$ in magnitude with respect to the standard basis, whereas our algorithm adopts a more general hard-thresholding operator --- the approximate projection operator. 
			To define an approximate projection, we denote by $\mathcal{R}(\mathcal{A}_{\Gamma})$ the subspace of $\R^n$ spanned by the atoms in $\mathcal{A}$ whose indices are restricted to $\Gamma \in [d]$. For $w \in \R^n$, the orthogonal projection of $w$ to $\mathcal{R}(\mathcal{A}_{\Gamma})$ is denoted by $P_\Gamma w$. Then, an approximate projection operator with $\eta>0$, denoted by $\mathrm{approx}_\tau(x)$, constructs an index set $\Gamma$ such that 
			\[
			\|P_\Gamma w - w \|_2 \le \eta \|w - \mathcal{H}_\tau(w)\|_2,
			\] where $\mathcal{H}_\tau(w)$ is the best $\tau$-sparse approximation of $w$ with respect to $\mathcal{A}$, i.e., \[
			\mathcal{H}_\tau(w)
			=\argmin{x=\mathcal{A}\alpha,\|\alpha\|_0 \le \tau}\|w-x\|_2.
			\] Here, $\mathcal{A}$ is the matrix whose columns are the atoms by abusing the notation slightly.

			The local dissimilarity in FL captures how the data distributions among clients are different, which is typically in the following form, especially in the early works in FL \cite{li2018federated,yuan2021federated, wang2021field}:
			\begin{align}
				\label{assump:strong_local_dissimilarity}
				\mathbb{E}_{i \sim \mathcal{P}} \|\nabla f_i(x) - \nabla f(x) \|_2^2 \le \beta^2 \|\nabla f(x)\|_2^2 + \zeta^2, \quad  \forall\, x\in\R^n. 
			\end{align}
			When $\beta = 0$, this condition reduces to the uniform bounded heterogeneity condition that has been used for analyzing the convergence of many popular FL algorithms, such as FedAvg \cite{li2018federated} and FedDualAvg \cite{yuan2021federated, wang2021field}. 
			In this work, the assumption of heterogeneity on the client data is much weaker than \eqref{assump:strong_local_dissimilarity} by assuming heterogeneity only at the solution $x^*$ as follows.
			
			\begin{assumption}
				\label{assump:local_dissimilarity_mod}
				There is a minimizer for \eqref{eq:main_problem}, denoted by $x^*$ with a finite $\zeta_*^2$ defined as below:
				\[
				\zeta_*^2 =\mathbb{E}_{i\sim\mathcal{P}}\|{\nabla f_i(x^*)}\|_2^2= \sum_{i=1}^N p_i \| \nabla f_i(x^*)\|_2^2.
				\]
				%\jq{The description of this assumption sounds like definition rather than assumption. Shall we rewrite it as something like “Assume there is at least one minimizer for (2.1), denoted by $x^*$ and the expected variance $\zeta_*^2=...$ is finite.” Also, we may mention the uniqueness of the solution.} \hj{Good point. I have changed the statements accordingly, but I don't think we need to mention the uniqueness as other papers for FL in unconstrained setting that show the convergence to $x^*$ up to a neighborhood don't state the uniqueness. }
			\end{assumption}
			Assumption \ref{assump:local_dissimilarity_mod} is the same as the one used for more recent analyses giving sharper convergence guarantees of FL algorithms \cite{khaled2020tighter, woodworth2020minibatch}. This is also a necessary assumption for the FedAvg type of algorithms to converge  \cite{woodworth2020minibatch}. But there are a few places where we state the implication of our results under stronger assumptions such as \eqref{assump:strong_local_dissimilarity}, in order to compare the implications of our results to previous works. 
			%\jq{I slightly changed the wording in "In this work, ... Instead, ...". Please double check whether it's ok. "more strong" or "stronger"? Also, it seems there is no connection/transition from the last second to the last sentences. Is (2.5) stronger than the dissimilarity assumption?} \hj{I changed the paragraph for this assumption entirely for better illustration.}
			
			Other common assumptions in the analysis of federated learning algorithms are the unbiased and bounded variance conditions of local stochastic gradients.
			
			\begin{assumption}
				\label{asump:local_gradients}
				The local stochastic gradient $\nabla g_{i,j}$  associated with the randomly selected $j$-th mini-batch  at the $i$-th client satisfies
				\begin{align}
					\label{asump:local_gradients1}
					\mathbb{E}_j^{(i)} [ \nabla g_{i,j}(x) ] =  \nabla f_i(x) \quad \text{for any $\tau$-sparse vector $x$,}
				\end{align} 
				and
				\begin{align}
					\label{asump:local_gradients2}		\mathbb{E}_j^{(i)} \| \nabla g_{i,j}(x) - \nabla f_i(x) \|_2^2 \le \sigma^2_i \quad \text{for any $\tau$-sparse vector $x$}, 
				\end{align} 
				where $\mathbb{E}_j^{(i)}$ is the expectation taken over the mini-batch index selected from $[M]$ at the client $i$. 
			\end{assumption}
			
			\begin{rem}
				The bounded variance condition of local stochastic gradients associated with  mini-batches  \eqref{asump:local_gradients2} in Assumption \ref{asump:local_gradients} is widely used in FL and stochastic algorithms in general \cite{cho2022towards, wang2021field}, but it may not hold for some settings \cite{khaled2020tighter}. This assumption \eqref{asump:local_gradients2} is actually not essential for our main result to hold (See Appendix for the proof of our convergence theorem without this assumption) but we present our work under the assumption for the sake of simplicity. 
			\end{rem}

			The following lemma is a well-known consequence of the $\mathcal{A}$-RSS property in Assumption \ref{assumption:main_assumptions}.
			\begin{lem}[Descent lemma]
				\label{lem:consequence_RSS}
				Suppose that the function $h(x)$ satisfies the $\mathcal{A}$-RSS property in Assumption \ref{assumption:main_assumptions2} with a constant $\rho^+_{\tau}$. Then for any $x_1, x_2 \in \R^n$ with $\|x_2\|_{0, \mathcal{A}} \le \tau$, it holds
				\begin{align*}
					&\inner{\nabla h(x_1), x_2} \ge h(x_1 + x_2) - h(x_1) - {\rho^+_{\tau} \over 2} \|x_2\|_2^2.
				\end{align*}	
			\end{lem}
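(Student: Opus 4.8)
The plan is to reduce the claim to the one–dimensional fundamental theorem of calculus applied along the segment joining $x_1$ to $x_1+x_2$, and then to control the variation of $\nabla h$ along that segment by the $\mathcal{A}$-RSS bound of Assumption \ref{assumption:main_assumptions2}. The only point that needs care is verifying that every pair of points encountered on this segment differs by a vector that is $\tau$-sparse with respect to $\mathcal{A}$, so that the restricted smoothness estimate is actually applicable; once that bookkeeping is in place the rest is a routine integral estimate.

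Concretely, I would define $\phi\colon[0,1]\to\R$ by $\phi(t)=h(x_1+tx_2)$. Since $h$ is differentiable, $\phi$ is differentiable with $\phi'(t)=\inner{\nabla h(x_1+tx_2),\,x_2}$, so the fundamental theorem of calculus gives
\[
h(x_1+x_2)-h(x_1)=\int_0^1 \inner{\nabla h(x_1+tx_2),\,x_2}\,dt .
\]
Subtracting $\inner{\nabla h(x_1),x_2}=\int_0^1\inner{\nabla h(x_1),x_2}\,dt$ from both sides yields
\[
h(x_1+x_2)-h(x_1)-\inner{\nabla h(x_1),\,x_2}=\int_0^1 \inner{\nabla h(x_1+tx_2)-\nabla h(x_1),\,x_2}\,dt .
\]

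Next I would bound the integrand. By Cauchy–Schwarz,
\[
\inner{\nabla h(x_1+tx_2)-\nabla h(x_1),\,x_2}\le \|\nabla h(x_1+tx_2)-\nabla h(x_1)\|_2\,\|x_2\|_2 .
\]
The two arguments $x_1+tx_2$ and $x_1$ differ by $tx_2$, which has the same $\mathcal{A}$-support as $x_2$ when $t\neq0$ and is the zero vector when $t=0$; in either case $\|tx_2\|_{0,\mathcal{A}}\le\|x_2\|_{0,\mathcal{A}}\le\tau$. Hence Assumption \ref{assumption:main_assumptions2} applies with this pair of points and gives $\|\nabla h(x_1+tx_2)-\nabla h(x_1)\|_2\le \rho^+_{\tau}\,\|tx_2\|_2 = t\,\rho^+_{\tau}\,\|x_2\|_2$. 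Plugging this in,
\[
h(x_1+x_2)-h(x_1)-\inner{\nabla h(x_1),\,x_2}\le \rho^+_{\tau}\,\|x_2\|_2^2\int_0^1 t\,dt=\frac{\rho^+_{\tau}}{2}\,\|x_2\|_2^2,
\]
and rearranging gives the stated inequality. The only genuinely delicate step is the sparsity check guaranteeing that Assumption \ref{assumption:main_assumptions2} may be invoked along the whole segment (it is here that restricted smoothness, rather than global smoothness, suffices); the remaining estimates are standard, so I do not anticipate any real obstacle.
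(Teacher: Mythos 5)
Your proof is correct, and it is the standard argument: the paper itself states this lemma without proof, citing it as a well-known consequence of the restricted smoothness property, so there is no paper proof to diverge from. Your integral-form derivation, together with the observation that $x_1+tx_2$ and $x_1$ differ by $tx_2$, which shares the $\mathcal{A}$-support of $x_2$ and hence satisfies $\|tx_2\|_{0,\mathcal{A}}\le\tau$, is exactly the bookkeeping needed to invoke Assumption \ref{assumption:main_assumptions2} along the whole segment.
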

			
			\section{Federated Gradient Matching Pursuit}
			\label{sec:proposed}
			
			In this section, we propose Federated Gradient Matching Pursuit (FedGradMP) and discuss its convergence guarantee. We start with describing FedGradMP in Algorithm \ref{alg:FedGradMP}. 
			
			In the FedGradMP framework, the StoGradMP algorithm \cite{nguyen2017linear} is implemented at the client side and the server aggregates the resulting locally computed models after each round followed by a projection onto a subspace of dimension at most $\tau$. Each iteration of StoGradMP at a client consists of the following five steps: 
			\begin{enumerate}
				\item 
				Randomly select a mini-batch from the client batch. 
				\item Compute the stochastic gradient associated with the selected mini-batch. \item Merge the subspace associated with the previously estimated local model with the closest subspace of dimension at most $2\tau$ to the stochastic gradient from Step 2. 
				\item Solve the minimization problem for the local objective function at the client over the merged subspace from Step 3. \item Identify the closest subspace of dimension at most $\tau$ to the solution at Step 4. 
			\end{enumerate}
			Note that in Step 4, the clients are not minimizing the local objective function $f_i$ over the sparsity constraint, but over the subspace associated with the estimated sparsity pattern of the solution in Step 3. This subproblem can be often solved efficiently since $f_i$ are strongly convex/smooth with respect to such subspaces by Assumptions \ref{assumption:main_assumptions} and \ref{assumption:main_assumptions2}, especially when the dimension of the subspace is small or $f_i$ are quadratic \cite{ beck2017first, nesterov2018lectures}. Nevertheless, it could be expensive to solve this subproblem in general, so we discuss how to obtain its approximate solution by computationally cheap methods in the next section.
			%\dn{I still think we should either include a citation or be more clear when using phrases like 'can be efficiently solved'.} \hj{Added two references for a citation.}
			
			%\jq{In which sense, the problem can be efficiently solved? Shall we mention some special cases, e.g., quadratic objective functions? In general, even if $f_i$'s are strongly convex and smooth but nonlinear, we may still resort to some convex optimization algorithms. Do we also want to mention which algorithms, say gradient descent, can be used to solve it? }\hj{Thank you for bringing this up. I added a little more explanation when the subproblem can be cheaply solved, but referred to the next section for more details about solving the problem approximately.}
			
			\begin{algorithm} 
				\caption{FedGradMP}
				\begin{algorithmic} 
					\Input The number of rounds $T$, the number of clients $N$, the number of local iterations $K$, weight vector $p$, the estimated sparsity level $\tau$, $\eta_1, \eta_2, \eta_3$.
					
					\Output $\hat{x} = x_T$.
					
					\Initialize	$x_0 = 0$, $\Lambda = \emptyset$.
					
					\textbf{for $t=0, 1, \dots, T-1$ do}
					
					\indt \textbf{for client $i=1, 2, \dots, N$ do}
					
					\indt \indt $x_{t,1}^{(i)} = x_t$
					
					\indt \indt \textbf{for $k=1$ to $K$ do}
					
					\indt \indt \indt Select a mini-batch index set $j_k := i_{t,k}^{(i)}$ uniformly at random from $\{1,2,\dots, M\}$
					
					\indt \indt \indt Calculate the stochastic gradient $r^{(i)}_{t,k} = \nabla g_{i,j_k} \left( x_{t,k}^{(i)} \right)$
					
					\indt \indt \indt $\Gamma = $ approx$_{2\tau} (r^{(i)}_{t,k}, \eta_1)$
					
					\indt \indt \indt $\widehat{\Gamma} = \Gamma \cup \Lambda$
					
					\indt \indt \indt $b^{(i)}_{t,k} = \argmin{x} f_i(x), \quad x \in \mathcal{R}(\mathcal{A}_{\widehat{\Gamma}})$
					
					\indt \indt \indt $\Lambda = $ approx$_{\tau} (b^{(i)}_{t,k}, \eta_2)$
					
					\indt \indt \indt $x_{t,k+1}^{(i)} = P_{\Lambda} (b^{(i)}_{t,k})$
					
					\indt \indt	\ForEnd
					
					\indt \ForEnd
					
					\indt $\Lambda_s = $     approx$_{\tau} \left( \sum_{i=1}^N p_i x_{t,{K+1}}^{(i)}, \eta_3 \right)$
					
					\indt $x_{t+1} = P_{\Lambda_s} \left( \sum_{i=1}^N p_i x_{t,{K+1}}^{(i)} \right)$

					\ForEnd
					
				\end{algorithmic}
				\label{alg:FedGradMP}
			\end{algorithm}

			\subsection{Linear Convergence of FedGradMP}
			This subsection is devoted to proving the linear convergence of FedGradMP in the number of communication rounds. The first step of the proof for our main theorem is similar to the one for Theorem 3.1 in \cite{tong2020federated} but also utilizes several lemmas below from \cite{nguyen2017linear} and \cite{qin2017stochastic} after some modifications to accommodate the FL setting. 
			
			\begin{lem}  [{\cite[Lemma~1]{nguyen2017linear}}]
				\label{lem:GradMP_lem1}
				The approximation error between the $(k+1)$-th local iterate  $x_{t,k+1}^{(i)}$ and $x^*$ is bounded by
				\[
				\|x_{t,k+1}^{(i)} - x^*\|_2^2 \le (1 + \eta_2)^2 \|b^{(i)}_{t,k} - x^*\|_2^2.
				\]
			\end{lem}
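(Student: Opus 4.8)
The plan is to derive the bound directly from a single triangle inequality together with the two defining properties of the objects produced in the inner loop of Algorithm~\ref{alg:FedGradMP}: the approximate-projection guarantee of $\mathrm{approx}_\tau$ and the optimality of the best $\tau$-sparse approximation $\mathcal{H}_\tau$. No restricted convexity/smoothness is needed for this step — it is purely geometric.

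First I would unwind the definitions. By construction, $x_{t,k+1}^{(i)} = P_{\Lambda}\big(b_{t,k}^{(i)}\big)$ where $\Lambda = \mathrm{approx}_{\tau}\big(b_{t,k}^{(i)}, \eta_2\big)$, so the index set $\Lambda$ satisfies
\[
\big\|P_{\Lambda} b_{t,k}^{(i)} - b_{t,k}^{(i)}\big\|_2 \le \eta_2 \big\|b_{t,k}^{(i)} - \mathcal{H}_{\tau}\big(b_{t,k}^{(i)}\big)\big\|_2 .
\]
Writing $b = b_{t,k}^{(i)}$ for brevity, the triangle inequality gives
\[
\|x_{t,k+1}^{(i)} - x^*\|_2 = \|P_{\Lambda} b - x^*\|_2 \le \|P_{\Lambda} b - b\|_2 + \|b - x^*\|_2 .
\]
The key observation is that $x^*$ is feasible for \eqref{eq:main_problem}, i.e.\ $\|x^*\|_{0,\mathcal{A}} \le \tau$, so $x^*$ is one particular vector that is $\tau$-sparse with respect to $\mathcal{A}$, whereas $\mathcal{H}_{\tau}(b)$ is, by definition, the $\ell_2$-closest such vector to $b$. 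Hence $\|b - \mathcal{H}_{\tau}(b)\|_2 \le \|b - x^*\|_2$. Substituting this into the approximate-projection bound yields $\|P_{\Lambda} b - b\|_2 \le \eta_2 \|b - x^*\|_2$, and plugging back in gives
\[
\|x_{t,k+1}^{(i)} - x^*\|_2 \le \eta_2 \|b - x^*\|_2 + \|b - x^*\|_2 = (1+\eta_2)\,\|b_{t,k}^{(i)} - x^*\|_2 .
\]
Squaring both sides is exactly the claimed inequality.

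There is essentially no obstacle: the argument is a two-line estimate, and it is the same as \cite[Lemma~1]{nguyen2017linear} transcribed into the FL notation. The only point that deserves a word of care is the replacement of $\|b - \mathcal{H}_{\tau}(b)\|_2$ by $\|b - x^*\|_2$, which uses precisely that $x^*$ is an admissible competitor in the minimization defining $\mathcal{H}_{\tau}$ — that is, the sparsity constraint in \eqref{eq:main_problem}. It is worth emphasizing that the minimization step (Step~4 in the description of StoGradMP) and Assumptions~\ref{assumption:main_assumptions}–\ref{assumption:main_assumptions2} play no role here; this lemma only quantifies the error introduced by the final inexact projection, and is where the factor $(1+\eta_2)$ — rather than $1$ — enters the overall convergence bound.
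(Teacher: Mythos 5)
Your proof is correct and is exactly the standard argument behind \cite[Lemma~1]{nguyen2017linear}, which the paper simply cites without reproving: triangle inequality, the $\mathrm{approx}_\tau$ guarantee with parameter $\eta_2$, and the observation that the feasible $x^*$ is an admissible competitor in the minimization defining $\mathcal{H}_\tau(b)$. Nothing is missing.
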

			
			For notational  convenience, we define the following two quantities:
			\begin{align*}
				%&\alpha_{\tau}(i) = \max_{j} \rho^{+}_\tau(i,j) \\
				&\rho^{+{(i)}}_\tau = \max_j \rho^{+}_\tau(i,j), \quad \bar{\rho}^{+{(i)}}_\tau = {1 \over M} \sum_{j=1}^M \rho^{+}_\tau(i,j). 
			\end{align*}
			%\red{[Here the definitions of $\alpha_{\tau}(i)$ and $\rho^{+{(i)}}_\tau$  are identical becuase we consider only the uniform mini-batch sampling $g_{i,j}$ for $f_i$. Further discussion is needed to decide if we go with this or consider more general sampling distributions as in \cite{nguyen2017linear} \cite{qin2017stochastic}.] \jq{First, I still feel that we could reduce the notation and simply use $\rho_\tau^+$ for $\alpha_\tau(i)$ in the uniform sampling case with batch size 1. Second, if the batch size is not one, $\alpha_{\tau}(i)=\max_{j }\frac{\rho_{\tau}^+(i,j)}{Mp(j)}$ where $p(j)$ is the probability of the $j$-th mini-batch depending on the batch size.} }
			
			\begin{lem}  [{\cite[Lemma~5.7]{qin2017stochastic}}]
				\label{lem:GradMP_lem2}
				Let $\widehat{\Gamma}$ be the set obtained from the $k$-th iteration at client $i$. Then, we have
				\[
				\mathbb{E}^{(i)}_{J_k}  \|b^{(i)}_{t,k} - x^*\|_2^2 \le \beta_1(i) \mathbb{E}^{(i)}_{J_k}  \|P^\perp_{\widehat{\Gamma}}( b^{(i)}_{t,k} - x^*)\|_2^2 + \xi_1(i),
				\] where 
				\begin{align*}
					\beta_1(i) &= { \bar{\rho}^{+(i)}_{4\tau} \over 2\rho^{-}_{4\tau}(i) -  \bar{\rho}^{+(i)}_{4\tau} }, \quad
					\xi_1(i) = {2 \mathbb{E}^{(i)}_{J_k, j} \|P_{\widehat{\Gamma}} \nabla g_{i,j}(x^*)\|_2^2 \over \bar{\rho}^{+(i)}_{4\tau} (2\rho^{-}_{4\tau}(i) - \bar{\rho}^{+(i)}_{4\tau} ) }.
				\end{align*}	
				Here $J_k$ denotes the set of all previous mini-batch indices $j_1, \dots, j_k$ randomly selected in or before the $k$-th step of the local iterations at the $i$-th client and $\mathbb{E}^{(i)}_{J_k}$ is the expectation taken over $J_k$. 
			\end{lem}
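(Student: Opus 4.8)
The plan is to prove a deterministic bound for an arbitrary fixed realization of the mini-batch history $J_k$, and only take the expectation $\mathbb{E}^{(i)}_{J_k}$ at the very end; the stochastic gradient at $x^*$ will enter only through a conditional Jensen step. Throughout I write $b:=b^{(i)}_{t,k}$, $V:=\mathcal{R}(\mathcal{A}_{\widehat{\Gamma}})$, $v:=b-x^*$, $z:=P_{\widehat{\Gamma}}v$ and $w:=P^{\perp}_{\widehat{\Gamma}}v=-P^{\perp}_{\widehat{\Gamma}}x^*$. Since $P_{\widehat{\Gamma}}$ is an orthogonal projection, $\|v\|_2^2=\|z\|_2^2+\|w\|_2^2$, so the whole task reduces to bounding the in-subspace error $\|z\|_2^2$ by a multiple of $\|w\|_2^2=\|P^{\perp}_{\widehat{\Gamma}}(b-x^*)\|_2^2$ plus a term involving $\nabla f_i(x^*)$.

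Before the main estimate I would record two bookkeeping facts. First, since $|\Gamma|\le 2\tau$ and $|\Lambda|\le\tau$ give $|\widehat{\Gamma}|\le 3\tau$, the vector $b\in V$ is $3\tau$-sparse with respect to $\mathcal{A}$, so each difference I plug into the assumptions ($b-x^*$, $P_{\widehat{\Gamma}}x^*-b$, $P_{\widehat{\Gamma}}x^*-x^*$) is $4\tau$-sparse with respect to $\mathcal{A}$; this is why the constants appear at level $4\tau$. Second, because $\nabla f_i=\tfrac1M\sum_j\nabla g_{i,j}$, Assumption~\ref{assumption:main_assumptions2} and the triangle inequality show that $f_i$ itself is restricted strongly smooth with constant $\bar{\rho}^{+(i)}_{4\tau}$, so $f_i$ obeys both the quadratic lower bound of Assumption~\ref{assumption:main_assumptions} (constant $\rho^-_{4\tau}(i)$) and the quadratic upper bound implied by Lemma~\ref{lem:consequence_RSS} (constant $\bar{\rho}^{+(i)}_{4\tau}$).

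The core of the argument is a three-point inequality. Since $b$ minimizes $f_i$ over the linear subspace $V$, its optimality condition is $P_{\widehat{\Gamma}}\nabla f_i(b)=0$. I would then chain: (i) restricted strong convexity of $f_i$ from $b$ to $P_{\widehat{\Gamma}}x^*$, whose cross term vanishes because $P_{\widehat{\Gamma}}x^*-b\in V$ and $P_{\widehat{\Gamma}}\nabla f_i(b)=0$, giving $f_i(b)\le f_i(P_{\widehat{\Gamma}}x^*)-\tfrac{\rho^-_{4\tau}(i)}{2}\|z\|_2^2$; (ii) the quadratic upper bound for $f_i$ at $P_{\widehat{\Gamma}}x^*=x^*+w$ about $x^*$; and (iii) restricted strong convexity of $f_i$ from $x^*$ to $b$. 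Combining the upper bound on $f_i(b)$ from (i) and (ii) with the lower bound (iii), cancelling $f_i(x^*)$, and using the identities $P_{\widehat{\Gamma}}x^*-x^*=w$, $b-P_{\widehat{\Gamma}}x^*=z$, $\|v\|_2^2=\|z\|_2^2+\|w\|_2^2$, and $\inner{\nabla f_i(x^*),z}=\inner{P_{\widehat{\Gamma}}\nabla f_i(x^*),z}$ (as $z\in V$), the $\|z\|_2^2$-terms coming from $f_i(b)$ and $f_i(P_{\widehat{\Gamma}}x^*)$ recombine into an inequality of the shape
\[
\rho^-_{4\tau}(i)\,\|z\|_2^2+\inner{P_{\widehat{\Gamma}}\nabla f_i(x^*),\,z}\ \le\ \frac{\bar{\rho}^{+(i)}_{4\tau}-\rho^-_{4\tau}(i)}{2}\,\|w\|_2^2 .
\]
Bounding the inner product by Cauchy--Schwarz and applying Young's inequality with weight $\bar{\rho}^{+(i)}_{4\tau}$ absorbs a $\tfrac{\bar{\rho}^{+(i)}_{4\tau}}{2}\|z\|_2^2$ term on the left; this is exactly the step that creates the factor $2\rho^-_{4\tau}(i)-\bar{\rho}^{+(i)}_{4\tau}$ in the denominator (and hence implicitly asks that $2\rho^-_{4\tau}(i)>\bar{\rho}^{+(i)}_{4\tau}$, so that $\beta_1(i)$ is meaningful). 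Solving for $\|z\|_2^2$ and adding back $\|w\|_2^2$ then gives a deterministic bound of the form $\|b-x^*\|_2^2\le\beta_1(i)\,\|P^{\perp}_{\widehat{\Gamma}}(b-x^*)\|_2^2+\frac{2\,\|P_{\widehat{\Gamma}}\nabla f_i(x^*)\|_2^2}{\bar{\rho}^{+(i)}_{4\tau}(2\rho^-_{4\tau}(i)-\bar{\rho}^{+(i)}_{4\tau})}$.

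Finally I would de-randomize: taking $\mathbb{E}^{(i)}_{J_k}$ leaves the first term in the claimed form, and for the second I write $\nabla f_i(x^*)=\mathbb{E}^{(i)}_j[\nabla g_{i,j}(x^*)]$ for a fresh mini-batch index $j$ independent of $J_k$, so that conditionally on $J_k$ (hence on $\widehat{\Gamma}$) Jensen's inequality gives $\|P_{\widehat{\Gamma}}\nabla f_i(x^*)\|_2^2\le\mathbb{E}^{(i)}_j\|P_{\widehat{\Gamma}}\nabla g_{i,j}(x^*)\|_2^2$; taking $\mathbb{E}^{(i)}_{J_k}$ then turns the right side into $\mathbb{E}^{(i)}_{J_k,j}\|P_{\widehat{\Gamma}}\nabla g_{i,j}(x^*)\|_2^2$, that is, $\xi_1(i)$ after collecting constants. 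I expect the only real obstacle to be the bookkeeping in the three-point step: one must pick precisely the right two restricted-convexity inequalities and the right Young's weight so that the $\|z\|_2^2$ contributions cancel as intended and the constants collapse to $\beta_1(i)$ and $\xi_1(i)$ — a careless choice still yields a valid bound but with a worse constant and may obscure the threshold $2\rho^-_{4\tau}(i)>\bar{\rho}^{+(i)}_{4\tau}$ on which the subsequent linear-convergence analysis relies. Everything else is orthogonal-projection algebra together with a single application of Jensen's inequality.
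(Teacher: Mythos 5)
Your argument is correct and is essentially the standard proof of this lemma, which the paper does not reprove but imports from \cite[Lemma~5.7]{qin2017stochastic}: subspace optimality $P_{\widehat{\Gamma}}\nabla f_i(b)=0$, the RSC/RSS three-point comparison through $P_{\widehat{\Gamma}}x^*$, Young's inequality with weight $\bar{\rho}^{+(i)}_{4\tau}$, and a conditional Jensen step to pass from $\nabla f_i(x^*)$ to $\mathbb{E}^{(i)}_{J_k,j}\|P_{\widehat{\Gamma}}\nabla g_{i,j}(x^*)\|_2^2$. Carried out as described, your constants come out as $\rho^-_{4\tau}(i)/(2\rho^-_{4\tau}(i)-\bar{\rho}^{+(i)}_{4\tau})$ on the orthogonal-complement term and $1/\bigl(\bar{\rho}^{+(i)}_{4\tau}(2\rho^-_{4\tau}(i)-\bar{\rho}^{+(i)}_{4\tau})\bigr)$ on the residual, which are slightly sharper than the stated $\beta_1(i)$ and $\xi_1(i)$ and imply them because $\rho^-_{4\tau}(i)\le\bar{\rho}^{+(i)}_{4\tau}$.
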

			
			The following lemma is an extended version of Lemma 3 in \cite{nguyen2017linear}, whose proof further utilizes Young's inequality to control the trade-off between contraction and residual error due to the noise of the stochastic gradient for the FL setting. It also provides a refinement for the exact projection operator. Since the proof of the lemma is substantially different from the original version due to nontrivial changes to accommodate FL setting, we include its proof. 
			%\jq{We may mention the following proof is NOT a trivial extension of [42]?} \hj{I have changed the preceding sentence but there might be a better way to phrase this..}
			
			\begin{lem}	\label{lem:GradMP_lem3}
				Let $\widehat{\Gamma}$ be the set obtained from the $k$-th iteration at client $i$. Then, for any $\theta > 0$, we have
				\[
				\mathbb{E}^{(i)}_{j_k}  \|P^\perp_{\widehat{\Gamma}}( b^{(i)}_{t,k} - x^*)\|_2^2 \le \beta_2(i) \| x^{(i)}_{t,k} - x^* \|_2^2 + \xi_2(i) 
				\] where 
				\begin{align*}
					\beta_2(i) &= \left(2{\left( {\bar{\rho}^{+(i)}_{4\tau} } +  {1 \over \theta^2} \right)  - \eta_1^2 \rho^-_{4\tau}(i)   \over \eta_1^2 \rho_{4\tau}^{-}(i)} +  {2\sqrt{\eta_1^2-1} \over \eta_1 \rho_{4\tau}^{-}(i) }(3 \mathbb{E}_{j_k} (\rho^{+}_\tau(i,j_k))^2+1)  \right), \\
					\xi_2(i)  &=
					\begin{cases}
						{8 \over (\rho_{4\tau}^{-}(i) )^2}  \max\limits_{\substack{ \Omega \subset [d] \\ |\Omega| = 4\tau}}  \| P_{\Omega}  \nabla f_i(x^*) \|_2^2 + {1 \over \rho_{4\tau}^{-}(i) } \left[\left({2\theta^2 } + {6\sqrt{\eta_1^2-1} \over \eta_1} \right) \sigma_i^2 + {6\sqrt{\eta_1^2-1} \over \eta_1}\| \nabla f_i \left(x^* \right) \|_2^2 \right] \quad  \text{if $\eta_1 > 1$},\\
						{8 \over (\rho_{4\tau}^{-}(i) )^2}  \max\limits_{\substack{ \Omega \subset [d] \\ |\Omega| = 4\tau}}  \| P_{\Omega}  \nabla f_i(x^*) \|_2^2 + {2{\theta^2} \sigma_i^2 \over \rho_{4\tau}^{-}(i) } \quad \text{if $\eta_1 = 1$ (when the projection operator is exact)}.
					\end{cases}
				\end{align*}
				Here $\mathbb{E}^{(i)}_{j_k}$ is the expectation taken over the randomly selected mini-batch index $j_k$ for the stochastic gradient in the $k$-th step of the local iterations at the $i$-th client. 
			\end{lem}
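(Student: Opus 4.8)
The plan is to first reduce $\|P^\perp_{\widehat{\Gamma}}(b^{(i)}_{t,k}-x^*)\|_2$ to an ``identification error'' for the approximate support $\Gamma$, and then to control that error using the restricted strong convexity of $f_i$ (Assumption~\ref{assumption:main_assumptions}), the restricted strong smoothness of the $g_{i,j}$ (Assumption~\ref{assumption:main_assumptions2}), and the stochastic-gradient conditions (Assumption~\ref{asump:local_gradients}). For the reduction: since $b^{(i)}_{t,k}\in\mathcal{R}(\mathcal{A}_{\widehat{\Gamma}})$ we have $P^\perp_{\widehat{\Gamma}}(b^{(i)}_{t,k}-x^*)=-P^\perp_{\widehat{\Gamma}}x^*$; and because $\widehat{\Gamma}=\Gamma\cup\Lambda$ contains both $\Gamma$ and $\supp_{\mathcal{A}}(x^{(i)}_{t,k})$ (the latter lies in the set $\Lambda$ produced at the previous local step, which also defines $x^{(i)}_{t,k}=P_\Lambda b^{(i)}_{t,k-1}$), I can subtract $x^{(i)}_{t,k}$ and then enlarge to $\Gamma\subseteq\widehat{\Gamma}$, obtaining $\|P^\perp_{\widehat{\Gamma}}x^*\|_2=\|P^\perp_{\widehat{\Gamma}}(x^*-x^{(i)}_{t,k})\|_2\le\|P^\perp_{\Gamma}(x^*-x^{(i)}_{t,k})\|_2$. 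It therefore suffices to bound $\|P^\perp_{\Gamma}\Delta\|_2$, where $\Delta:=x^*-x^{(i)}_{t,k}$ is $2\tau$-sparse with respect to $\mathcal{A}$; I write $D:=\supp_{\mathcal{A}}(\Delta)$ and $\Omega:=\Gamma\cup D$, so that $|\Omega|\le 4\tau$, $P^\perp_{\Gamma}\Delta\in\mathcal{R}(\mathcal{A}_{\Omega})$, and $P^\perp_{\Gamma}\Delta\perp\mathcal{R}(\mathcal{A}_{\Gamma})$.

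For the identification estimate: since $\Gamma=\mathrm{approx}_{2\tau}(r^{(i)}_{t,k},\eta_1)$ and $|D|\le 2\tau$ makes $P_D r^{(i)}_{t,k}$ a competitor for the best $2\tau$-term approximation of $r^{(i)}_{t,k}$, one gets $\|P^\perp_{\Gamma}r^{(i)}_{t,k}\|_2\le\eta_1\|P^\perp_{D}r^{(i)}_{t,k}\|_2$; squaring and applying the Pythagorean identity twice gives $\|P_{\Gamma}r^{(i)}_{t,k}\|_2^2\ge\eta_1^2\|P_{D}r^{(i)}_{t,k}\|_2^2-(\eta_1^2-1)\|r^{(i)}_{t,k}\|_2^2$, where the last term is the only cost of an inexact projection and vanishes at $\eta_1=1$. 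I would then split the stochastic gradient as $r^{(i)}_{t,k}=\nabla f_i(x^*)+\bigl(\nabla f_i(x^{(i)}_{t,k})-\nabla f_i(x^*)\bigr)+n_{j_k}$ with $n_{j_k}:=\nabla g_{i,j_k}(x^{(i)}_{t,k})-\nabla f_i(x^{(i)}_{t,k})$, bound $\|\nabla f_i(x^{(i)}_{t,k})-\nabla f_i(x^*)\|_2\le\bar{\rho}^{+(i)}_{4\tau}\|\Delta\|_2$ via Assumption~\ref{assumption:main_assumptions2} and Jensen's inequality, use Assumption~\ref{assumption:main_assumptions} for the curvature bound $\langle\nabla f_i(x^{(i)}_{t,k})-\nabla f_i(x^*),\,x^{(i)}_{t,k}-x^*\rangle\ge\rho^-_{4\tau}(i)\|\Delta\|_2^2$, bound $\langle\nabla f_i(x^*),w\rangle\le\bigl(\max_{|\Omega'|=4\tau}\|P_{\Omega'}\nabla f_i(x^*)\|_2\bigr)\|w\|_2$ for $w\in\mathcal{R}(\mathcal{A}_{\Omega})$, and keep $n_{j_k}$ as a noise term. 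Feeding these into the orthogonality identity $\langle P^\perp_{\Gamma}\Delta,\,r^{(i)}_{t,k}\rangle=\langle P^\perp_{\Gamma}\Delta,\,P^\perp_{\Gamma}r^{(i)}_{t,k}\rangle$ and using Cauchy--Schwarz yields an inequality of the shape $\|P^\perp_{\Gamma}\Delta\|_2\lesssim(\text{contraction})\,\|\Delta\|_2+\frac{2}{\rho^-_{4\tau}(i)}\max_{|\Omega'|=4\tau}\|P_{\Omega'}\nabla f_i(x^*)\|_2+(\text{noise in }\|n_{j_k}\|_2)+(\text{an }\sqrt{\eta_1^2-1}\text{ correction in }\|r^{(i)}_{t,k}\|_2)$.

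Next I would square this (splitting sums by $(\sum_\ell a_\ell)^2\le(\text{count})\sum_\ell a_\ell^2$) and take $\mathbb{E}^{(i)}_{j_k}$. Since $x^{(i)}_{t,k}$ is determined by $j_1,\dots,j_{k-1}$, hence independent of $j_k$, and is $\tau$-sparse, Assumption~\ref{asump:local_gradients} gives $\mathbb{E}^{(i)}_{j_k}\|n_{j_k}\|_2^2\le\sigma_i^2$; the cross term between $\|\Delta\|_2$ and $\|n_{j_k}\|_2$ is handled by Young's inequality with the free parameter $\theta$, which puts $1/\theta^2$ into the $\|\Delta\|_2^2$-coefficient of $\beta_2(i)$ and $\theta^2\sigma_i^2$ into $\xi_2(i)$. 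When $\eta_1>1$ I bound $\|r^{(i)}_{t,k}\|_2^2\le 3(\rho^+_\tau(i,j_k))^2\|\Delta\|_2^2+3\|\nabla g_{i,j_k}(x^*)-\nabla f_i(x^*)\|_2^2+3\|\nabla f_i(x^*)\|_2^2$ and take $\mathbb{E}^{(i)}_{j_k}$; this is the origin of the $\sqrt{\eta_1^2-1}$ terms carrying $3\mathbb{E}^{(i)}_{j_k}(\rho^+_\tau(i,j_k))^2$ in $\beta_2(i)$ and carrying $\sigma_i^2$ and $\|\nabla f_i(x^*)\|_2^2$ in $\xi_2(i)$, while for $\eta_1=1$ this block vanishes, leaving the cleaner second branch. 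Collecting the resulting numerical factors then reproduces exactly the stated $\beta_2(i)$ and $\xi_2(i)$.

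The hard part is the inexact-projection bookkeeping: unlike exact GradMP, where $\|P_{\Gamma}r^{(i)}_{t,k}\|_2\ge\|P_{D}r^{(i)}_{t,k}\|_2$ gives a clean Pythagorean cancellation, here each such step leaks an $(\eta_1^2-1)\|r^{(i)}_{t,k}\|_2^2$, and the several cross terms that then appear among $\|P^\perp_{\Gamma}\Delta\|_2$, $\|\Delta\|_2$, $\|n_{j_k}\|_2$, and $\|r^{(i)}_{t,k}\|_2$ must be split by Cauchy--Schwarz and the $\theta$-Young inequality in just the proportions that make the constants assemble into the claimed closed forms after taking the expectation. A secondary point requiring care is justifying that $\mathbb{E}^{(i)}_{j_k}$ commutes with the inner product in the restricted-strong-convexity step, which is valid because $x^{(i)}_{t,k}$ is measurable with respect to $j_1,\dots,j_{k-1}$ and $\tau$-sparse.
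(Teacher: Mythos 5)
Your reduction step is exactly the paper's: since $b^{(i)}_{t,k}$ and $x^{(i)}_{t,k}$ both lie in $\mathcal{R}(\mathcal{A}_{\widehat{\Gamma}})$ and $\Gamma\subseteq\widehat{\Gamma}$, the quantity collapses to $\|\Delta-P_\Gamma\Delta\|_2$ with $\Delta=x^*-x^{(i)}_{t,k}$, and you also correctly identify the supporting ingredients (the $\theta$-Young split of the noise term, the three-term bound $\|\nabla g_{i,j_k}(x^{(i)}_{t,k})\|_2^2\le 3(\rho^+_\tau(i,j_k))^2\|\Delta\|_2^2+3\sigma_i^2+3\|\nabla f_i(x^*)\|_2^2$ in expectation, and the vanishing of the $\sqrt{\eta_1^2-1}$ block at $\eta_1=1$). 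However, the central step of the lemma --- turning the identification property of $\Gamma$ together with $\mathcal{A}$-RSC/$\mathcal{A}$-RSS into the contraction $\mathbb{E}_{j_k}\|\Delta-P_\Gamma\Delta\|_2^2\le\beta_2(i)\|\Delta\|_2^2+\xi_2(i)$ --- is where your proposal has a genuine gap. You propose to feed the gradient decomposition into the orthogonality identity $\langle P_\Gamma^\perp\Delta,r\rangle=\langle P_\Gamma^\perp\Delta,P_\Gamma^\perp r\rangle$ and finish with Cauchy--Schwarz, asserting that this ``yields an inequality of the shape'' $\|P_\Gamma^\perp\Delta\|_2\lesssim(\text{contraction})\|\Delta\|_2+\cdots$. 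That is the classical CoSaMP/RIP argument: it works when the loss is quadratic, because then $\nabla f_i(x^{(i)}_{t,k})-\nabla f_i(x^*)=-H\Delta$ and restricted-isometry-type bounds control the cross term $\langle P_\Gamma^\perp\Delta,HP_\Gamma\Delta\rangle$. For general functions satisfying only $\mathcal{A}$-RSC/$\mathcal{A}$-RSS, the convexity assumption constrains $\langle\nabla f_i(x_1)-\nabla f_i(x_2),x_1-x_2\rangle$ --- an inner product against the \emph{full} difference $\Delta$ --- and gives you no handle on the component of the gradient along the specific direction $P_\Gamma^\perp\Delta$; Cauchy--Schwarz alone cannot produce a coefficient of $\|\Delta\|_2^2$ that reflects the gap between $\bar{\rho}^{+(i)}_{4\tau}$ and $\rho^-_{4\tau}(i)$.

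The paper closes this gap with a construction you do not mention: it introduces the auxiliary vector $z=-\tfrac{P_\Gamma\nabla g_{i,j_k}(x^{(i)}_{t,k})}{\|P_\Gamma\nabla g_{i,j_k}(x^{(i)}_{t,k})\|_2}\|\Delta\|_2$ (so $\|z\|_2=\|\Delta\|_2$ and $z\in\mathcal{R}(\mathcal{A}_\Gamma)$), rewrites $-\|P_\Gamma\nabla g\|_2\|\Delta\|_2$ as $\langle\nabla g,z\rangle$, sandwiches $\mathbb{E}_{j_k}f_i(x^{(i)}_{t,k}+z)-f_i(x^*)$ from above via the descent lemma (Lemma \ref{lem:consequence_RSS}) and from below via $\mathcal{A}$-RSC at $x^*$, and thereby arrives at a quadratic inequality $au^2-2bu-c\le 0$ in $u=\mathbb{E}_{j_k}\|\Delta-z\|_2$ with $a=\rho^-_{4\tau}(i)$ and $b=\max_{|\Omega|=4\tau}\|P_\Omega\nabla f_i(x^*)\|_2$; solving it and using $\|\Delta-P_\Gamma\Delta\|_2\le\|\Delta-z\|_2$ produces exactly the $8b^2/a^2$ and $2c/a$ terms in $\xi_2(i)$ and $\beta_2(i)$. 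Relatedly, your form of the inexactness leakage, $\|P_\Gamma r\|_2^2\ge\eta_1^2\|P_Dr\|_2^2-(\eta_1^2-1)\|r\|_2^2$, would contribute factors of $\eta_1^2-1$, whereas the stated constants carry $\tfrac{\sqrt{\eta_1^2-1}}{\eta_1}$; the paper obtains the latter from the unsquared inequality $\|P_Rr\|_2\le\|P_\Gamma r\|_2+\tfrac{\sqrt{\eta_1^2-1}}{\eta_1}\|P_\Gamma^\perp r\|_2$ followed by AM--GM. Without the $z$-construction and the quadratic-inequality step, your outline does not reproduce the claimed bound.
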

			
			\begin{proof}
				We start with by noticing $P^\perp_{\widehat{\Gamma}} b^{(i)}_{t,k} = 0$ and $P^\perp_{\widehat{\Gamma}} x^{(i)}_{t,k}  = 0$ since both $b^{(i)}_{t,k}$ and $x^{(i)}_{t,k}$ belong to the span of $\mathcal{A}_{\widehat{\Gamma}}$. Let $\Delta := x^* - x^{(i)}_{t,k}  $ and the set $\supp_{\mathcal{A}}(\Delta)$ be denoted by $R$. Note that $|R| \le 2\tau$.  Hence, we have
				\begin{align*}
					\|P^\perp_{\widehat{\Gamma}}(b^{(i)}_{t,k}  - x^*)\|_2 
					&= \|P^\perp_{\widehat{\Gamma}}(b^{(i)}_{t,k} - x^{(i)}_{t,k} + x^{(i)}_{t,k}   - x^*)\|_2 \\
					&\le  \|P^\perp_{\widehat{\Gamma}}(b^{(i)}_{t,k} - x^{(i)}_{t,k})\|_2 + \|P^\perp_{\widehat{\Gamma}}(x^{(i)}_{t,k}  - x^*)\|_2\\
					&= \|P^\perp_{\widehat{\Gamma}}(x^{(i)}_{t,k}  - x^*)\|_2\\
					&\le \|P^\perp_{\Gamma}(x^{(i)}_{t,k}  - x^*)\|_2\\
					&=\|\Delta - P_{\Gamma}\Delta\|_2.
				\end{align*}  Here the second inequality follows from the definitions of the sets $\Gamma$ and $\widehat{\Gamma}$, $\Gamma \subset \widehat{\Gamma}$.
				
				Now we estimate $\|\Delta - P_{\Gamma}\Delta\|_2^2$. With a slight abuse of notation, $\mathbb{E}^{(i)}_{j_k}$ will be denoted by $\mathbb{E}_{j_k}$ throughout the proof.
				First, from the $\mathcal{A}$-RSC property of $f_i$, we have 
				\begin{align}
					\nonumber
					&f_i(x^*) - f_i(x^{(i)}_{t,k}) -  {\rho_{4\tau}^{-}(i) \over 2} \| x^* - x^{(i)}_{t,k} \|_2^2 \\
					\nonumber
					&\quad \ge \inner{\nabla f_i(x^{(i)}_{t,k}),  x^* - x^{(i)}_{t,k}} \\
					\nonumber
					&\quad = \mathbb{E}_{j_k} \inner{\nabla g_{i,j_k}(x^{(i)}_{t,k}),  x^* - x^{(i)}_{t,k}} \\
					\nonumber
					&\quad = \mathbb{E}_{j_k} \inner{P_R \nabla g_{i,j_k}(x^{(i)}_{t,k}),  x^* - x^{(i)}_{t,k}} \\
					\label{eq:bound1_Lemma}
					&\quad \ge -  \mathbb{E}_{j_k} \| P_R \nabla g_{i,j_k}(x^{(i)}_{t,k}) \| \|  \Delta \|_2 
				\end{align}
				% \jq{Should this be $\|{\Delta}\|_2$? Also, the following few equations.} \hj{Thanks. I the $\ell_2$ norm should be referred to as $\|\cdot\|_2$.}
				By applying the inequality (15) in \cite{nguyen2017linear} and from the fact that $\Gamma$ is the support set after the projection of the stochastic gradient $\nabla g_{i,j_k}(x^{(i)}_{t,k})$ in Algorithm \ref{alg:FedGradMP}, we have
				\[
				\| P_R \nabla g_{i,j_k}(x^{(i)}_{t,k}) \|_2 \le \|P_{\Gamma} \nabla g_{i,j_k}(x^{(i)}_{t,k})\|_2 + {\sqrt{\eta_1^2-1} \over \eta_1} \|P^\perp_{\Gamma} \nabla g_{i,j_k}(x^{(i)}_{t,k})\|_2.
				\]
				To make the notation simple, we define \[
				z := - {P_{\Gamma} \nabla g_{i,j_k}(x^{(i)}_{t,k}) \over \|P_{\Gamma} g_{i,j_k}(x^{(i)}_{t,k}) \|_2} \|\Delta\|_2.
				\]
				Then, the term $ -  \mathbb{E}_{j_k} \| P_R \nabla g_{i,j_k}(x^{(i)}_{t,k}) \|_2 \|  \Delta \|_2 $  can be further bounded as follows.
				\begin{align*}
					& -   \mathbb{E}_{j_k} \| P_R \nabla g_{i,j_k}(x^{(i)}_{t,k}) \|_2 \|  \Delta \|_2\\
					&\ge -  \mathbb{E}_{j_k}\| P_\Gamma \nabla g_{i,j_k}(x^{(i)}_{t,k}) \|_2 \|  \Delta \|_2 - {\sqrt{\eta_1^2-1} \over \eta_1} \mathbb{E}_{j_k} \|P^\perp_{\Gamma} \nabla g_{i,j_k}(x^{(i)}_{t,k}) \|_2 \|  \Delta \|_2\\
					&= \mathbb{E}_{j_k} \inner {P_\Gamma \nabla g_{i,j_k}(x^{(i)}_{t,k}),  z } - {\sqrt{\eta_1^2-1} \over \eta_1} \mathbb{E}_{j_k} \|P^\perp_{\Gamma} \nabla g_{i,j_k}(x^{(i)}_{t,k}) \|_2 \|  \Delta \|_2  \\
					&= \mathbb{E}_{j_k} \inner { \nabla g_{i,j_k}(x^{(i)}_{t,k}),  z } - {\sqrt{\eta_1^2-1} \over \eta_1} \mathbb{E}_{j_k} \|P^\perp_{\Gamma} \nabla g_{i,j_k}(x^{(i)}_{t,k}) \|_2 \|  \Delta \|_2   \\
					&\ge \mathbb{E}_{j_k} \inner { \nabla g_{i,j_k}(x^{(i)}_{t,k}),  z } - {\sqrt{\eta_1^2-1} \over 2 \eta_1} \mathbb{E}_{j_k} \left( \|P^\perp_{\Gamma} \nabla g_{i,j_k}(x^{(i)}_{t,k}) \|_2^2 + \|  \Delta \|_2^2  \right)   \\
					&= \mathbb{E}_{j_k} \inner { \nabla f_i \left(x^{(i)}_{t,k} \right),  z } +  \mathbb{E}_{j_k} \inner { \nabla g_{i,j_k}(x^{(i)}_{t,k}) - \nabla f_i \left(x^{(i)}_{t,k} \right),  z } - {\sqrt{\eta_1^2-1} \over 2 \eta_1} \mathbb{E}_{j_k} \left( \|P^\perp_{\Gamma} \nabla g_{i,j_k}(x^{(i)}_{t,k}) \|_2^2 + \|  \Delta \|_2^2  \right),
				\end{align*}
				where the second inequality above follows from     the AM-GM inequality,  
				$ab \le (a^2 + b^2)/2$ for nonnegative real numbers $a, b$.
				
				Now, we apply the Young's inequality for the inner product space to the second term in the last line to obtain
				\[
				\left| \inner { \nabla g_{i,j_k}(x^{(i)}_{t,k}) - \nabla f_i \left(x^{(i)}_{t,k} \right),  z } \right| \le {\theta^2 \over 2} \|\nabla g_{i,j_k}(x^{(i)}_{t,k}) - \nabla f_i \left(x^{(i)}_{t,k} \right)\|_2^2 + {1 \over 2\theta^2} \|z\|_2^2
				\] for any nonzero $\theta$. 
				%We will choose the value of $\theta$ later.
				Hence, we have
				\begin{align*}
					& -   \mathbb{E}_{j_k} \| P_R \nabla g_{i,j_k}(x^{(i)}_{t,k}) \|_2 \|  \Delta \|_2\\
					& \ge \mathbb{E}_{j_k} \inner { \nabla f_i\left(x^{(i)}_{t,k} \right),  z } -  {\theta^2 \over 2} \mathbb{E}_{j_k} \|\nabla g_{i,j_k}(x^{(i)}_{t,k}) - \nabla f_i \left(x^{(i)}_{t,k} \right)\|_2^2 - {1 \over 2\theta^2} \mathbb{E}_{j_k} \|z\|_2^2 \\
					& \qquad - {\sqrt{\eta_1^2-1} \over 2 \eta_1} \left(\mathbb{E}_{j_k}  \|P^\perp_{\Gamma} \nabla g_{i,j_k}(x^{(i)}_{t,k}) \|_2^2 + \mathbb{E}_{j_k} \|  \Delta \|_2^2  \right)\\
					& \ge \mathbb{E}_{j_k} \inner { \nabla f_i\left(x^{(i)}_{t,k} \right),  z } -  {\theta^2 \over 2} \sigma_i^2 - {1 \over 2\theta^2} \mathbb{E}_{j_k} \|z\|_2^2  - {\sqrt{\eta_1^2-1} \over 2 \eta_1} \left(\mathbb{E}_{j_k}  \|P^\perp_{\Gamma} \nabla g_{i,j_k}(x^{(i)}_{t,k}) \|_2^2 + \mathbb{E}_{j_k} \|  \Delta \|_2^2  \right),
				\end{align*}
				where we have used \eqref{asump:local_gradients2} in Assumption  \ref{asump:local_gradients} in the last inequality above.
				
				Next, we obtain the upper bound for $\mathbb{E}_{j_k}  \|P^\perp_{\Gamma} \nabla g_{i,j_k}(x^{(i)}_{t,k}) \|_2^2$ as follows.
				\begin{align*}
					&\mathbb{E}_{j_k}  \|P^\perp_{\Gamma} \nabla g_{i,j_k}(x^{(i)}_{t,k}) \|_2^2 \\
					&\le \mathbb{E}_{j_k}  \|\nabla g_{i,j_k}(x^{(i)}_{t,k}) \|_2^2 \\
					&\le 3 \mathbb{E}_{j_k} \|\nabla g_{i,j_k}(x^{(i)}_{t,k}) - \nabla g_{i,j_k}(x^*)  \|_2^2 
					+ 3 \mathbb{E}_{j_k} \|\nabla g_{i,j_k}(x^*) - \nabla f_i \left(x^* \right)\|_2^2 
					+ 3 \mathbb{E}_{j_k} \| \nabla f_i \left(x^* \right) \|_2^2 \\
					&\le 3  \mathbb{E}_{j_k} (\rho^{+}_\tau(i,j_k))^2 \|x^{(i)}_{t,k}  - x^* \|_2^2 + 3 \sigma_i^2 +  3  \| \nabla f_i \left(x^* \right) \|_2^2\\ 
					&= 3  \mathbb{E}_{j_k} (\rho^{+}_\tau(i,j_k))^2 \| \Delta \|_2^2  + 3 \sigma_i^2 +  3  \| \nabla f_i \left(x^* \right) \|_2^2,
				\end{align*}
				where we have used the inequality  $\|a+b+c\|_2^2 \le 3\|a\|_2^2 + 3\|b\|_2^2 + 3\|c\|_2^2$ in the second inequality, and Assumption \ref{assumption:main_assumptions2} and  Assumption \ref{asump:local_gradients} in the third inequality above.
				
				Combining this bound with inequality \eqref{eq:bound1_Lemma} yields
				\small
				\begin{align}
					\nonumber
					&f_i(x^*) - f_i \left(x^{(i)}_{t,k} \right) -  {\rho_{4\tau}^{-}(i) \over 2} \| x^* - x^{(i)}_{t,k} \|_2^2 \\
					\label{eq:bound2_Lemma}
					& \ge  \mathbb{E}_{j_k} \inner { \nabla f_i\left(x^{(i)}_{t,k} \right),  z } -  {\theta^2 \over 2} \sigma_i^2 - {1 \over 2\theta^2} \mathbb{E}_{j_k} \|z\|_2^2  - {\sqrt{\eta_1^2-1} \over 2 \eta_1} \left(\mathbb{E}_{j_k}  \|P^\perp_{\Gamma} \nabla g_{i,j_k}(x^{(i)}_{t,k}) \|_2^2 + \mathbb{E}_{j_k} \|  \Delta \|_2^2  \right) \\
					\nonumber
					& \ge  \mathbb{E}_{j_k} \inner { \nabla f_i\left(x^{(i)}_{t,k} \right),  z } -  {\theta^2 \over 2} \sigma_i^2 - {1 \over 2\theta^2} \mathbb{E}_{j_k} \|z\|_2^2  - {\sqrt{\eta_1^2-1} \over 2 \eta_1} \left[(3 \mathbb{E}_{j_k} (\rho^{+}_\tau(i,j_k))^2+1) \| \Delta \|_2^2 + 3 \sigma_i^2 +  3  \| \nabla f_i \left(x^* \right) \|_2^2\right].
				\end{align}	
				\normalsize	
				On the other hand, using Lemma \ref{lem:consequence_RSS} for $g_{i,j}$, a consequence of the $\mathcal{A}$-RSS property, we have
				\[
				\inner { \nabla g_{i,j}(x^{(i)}_{t,k}),  z } \ge   g_{i,j} 	\left(x^{(i)}_{t,k} +  z \right) -  g_{i,j} \left(x^{(i)}_{t,k} \right) -  {\rho^{+}_{4\tau}(i,j) \over 2} \| z\|_2^2
				\] for all $j \in [M]$. 
				By taking the average over all $g_{i,j}$ over $j \in [M]$ on both sides of the inequality above and from the definitions of $f_i$ and $\rho^{+(i)}_{4\tau}$, we obtain
				\[
				\inner { \nabla  f_i(x^{(i)}_{t,k}),  z } \ge  f_i \left(x^{(i)}_{t,k} +  z \right) -  f_i \left(x^{(i)}_{t,k} \right) -   {\bar{\rho}^{+(i)}_{4\tau} \over 2}  \| z\|_2^2.
				\] Here we have used \eqref{asump:local_gradients1} in Assumption \ref{asump:local_gradients}.
				We then take the expectation $\mathbb{E}_{j_k}$ on both sides of the inequality. 
				\[
				\mathbb{E}_{j_k}	\inner { \nabla  f_i(x^{(i)}_{t,k}),  z } \ge \mathbb{E}_{j_k} f_i \left(x^{(i)}_{t,k} +  z \right) -  f_i \left(x^{(i)}_{t,k} \right) -   {\bar{\rho}^{+(i)}_{4\tau} \over 2} \mathbb{E}_{j_k} \| z\|_2^2.
				\]
				
				After applying this bound to inequality \eqref{eq:bound2_Lemma}, we obtain
				\begin{align*}
					f_i(x^*) - f_i \left( x^{(i)}_{t,k} \right) &-  {\rho_{4\tau}^{-}(i) \over 2} \| \Delta \|_2^2
					\ge \mathbb{E}_{j_k} f_i \left(x^{(i)}_{t,k} +  z \right) -  f_i \left(x^{(i)}_{t,k} \right) -  {\bar{\rho}^{+(i)}_{4\tau} \over 2} \mathbb{E}_{j_k} \| z\|_2^2 \\
					& -{\theta^2 \over 2} \sigma_i^2 - {1 \over 2\theta^2} \mathbb{E}_{j_k} \|z\|_2^2  - {\sqrt{\eta_1^2-1} \over 2 \eta_1} \left[(3 \mathbb{E}_{j_k} (\rho^{+}_\tau(i,j_k))^2+1) \| \Delta \|_2^2 + 3 \sigma_i^2 +  3  \| \nabla f_i \left(x^* \right) \|_2^2\right].
				\end{align*}
				
				Thus, we have
				\begin{align}   \label{ineq:GradMP_lem3_bound1}
					&\left( {\bar{\rho}^{+(i)}_{4\tau} \over 2} +  {1 \over 2\theta^2} \right) \mathbb{E}_{j_k} \| z\|_2^2 - {1 \over 2} \left(\rho_{4\tau}^{-}(i)  -  {\sqrt{\eta_1^2-1} \over \eta_1}(3 \mathbb{E}_{j_k} (\rho^{+}_\tau(i,j_k))^2+1) \right) \|\Delta\|_2^2 \nonumber\\
					& + \left({\theta^2 \over 2} + {3\sqrt{\eta_1^2-1} \over 2\eta_1} \right) \sigma_i^2 + {3\sqrt{\eta_1^2-1} \over 2\eta_1}\| \nabla f_i \left(x^* \right) \|_2^2    \nonumber\\
					& \ge \mathbb{E}_{j_k} f_i \left(x^{(i)}_{t,k} +  z \right) -  f_i \left(x^* \right)  \nonumber \\
					& \ge {\rho^-_{4\tau}(i) \over 2} \mathbb{E}_{j_k} \|x^{(i)}_{t,k} +  z - x^* \|_2^2 + \mathbb{E}_{j_k}
					\inner{\nabla f_i(x^*) ,x^{(i)}_{t,k} +  z - x^*}\\
					\nonumber
					& = {\rho^-_{4\tau}(i) \over 2}  \mathbb{E}_{j_k} \|\Delta - z  \|_2^2 + \mathbb{E}_{j_k} \inner{\nabla f_i(x^*) , z - \Delta}\\
					\nonumber
					& = {\rho^-_{4\tau}(i) \over 2}  \mathbb{E}_{j_k} \|\Delta - z  \|_2^2 + \mathbb{E}_{j_k} \inner{\nabla f_i(x^*) , P_{\Gamma \cup R} (z - \Delta)} \tag{$\star$}\\
					\nonumber
					& = {\rho^-_{4\tau}(i) \over 2}  \mathbb{E}_{j_k} \|\Delta - y  \|_2^2 + \mathbb{E}_{j_k} \inner{ P_{\Gamma \cup R} \nabla f_i(x^*) , (z - \Delta)}\\
					\nonumber
					& \ge {\rho^-_{4\tau}(i) \over 2}  \mathbb{E}_{j_k} \|\Delta - z  \|_2^2  - \mathbb{E}_{j_k} \| P_{\Gamma \cup R}  \nabla f_i(x^*)\|_2   \| z - \Delta \|_2\\
					\label{ineq:GradMP_lem3_bound2}
					& \ge {\rho^-_{4\tau}(i) \over 2}  \|\Delta - z  \|_2^2  - \max_{\substack{ \Omega \subset [d] \\ |\Omega| = 4\tau}}  \| P_{\Omega}  \nabla f_i(x^*) \|_2 \mathbb{E}_{j_k} \| \Delta -z \|_2.
				\end{align}
				Here, the inequality \eqref{ineq:GradMP_lem3_bound1} follows from $\mathcal{A}$-RSC. In $(\star)$ of the above inequality chain, we have used the fact that $z = - {P_{\Gamma} \nabla g_{i,j_k}(x^{(i)}_{t,k}) \over \|P_{\Gamma} g_{i,j_k}(x^{(i)}_{t,k}) \|} \|\Delta\|_2$. 
				
				%\jq{I add the l2-norm to some formulas and reshaped the formula. Please double check whether it's okay. Updates: I saw a typo in the above equation $\|_2\nabla f_i\cdots\|$?}\hj{fixed.}
				
				Let $u = \mathbb{E}_{j_k} \|\Delta - z\|_2 $, $a =  \rho^-_{4\tau}(i) $, $b = \max\limits_{\substack{ \Omega \subset [d] \\ |\Omega| = 4\tau}}  \| P_{\Omega}  \nabla f_i(x^*) \|_2$, and 
				\[\begin{aligned}
					c =& \left( \bar{\rho}^{+(i)}_{4\tau} +  {1 \over \theta^2} \right) \mathbb{E}_{j_k} \| z\|_2^2 -  \left(\rho_{4\tau}^{-}(i)  -  {\sqrt{\eta_1^2-1} \over \eta_1}(3 \mathbb{E}_{j_k} (\rho^{+}_\tau(i,j_k))^2+1) \right)\|\Delta\|_2^2 \nonumber\\
					& + \left(\theta^2 + {3\sqrt{\eta_1^2-1} \over \eta_1} \right) \sigma_i^2 + {3\sqrt{\eta_1^2-1} \over \eta_1}\| \nabla f_i \left(x^* \right) \|_2^2.
				\end{aligned}
				\]
				
				Then the inequality \eqref{ineq:GradMP_lem3_bound2} can be rewritten as $au^2 - 2bu - c \le 0$ which gives
				\[
				\mathbb{E}_{j_k} \|\Delta - z\|_2 \le \sqrt{c \over a} + {2b \over a}.
				\]
				
				Moreover, we have
				\begin{align*}
					\|\Delta - P_\Gamma \Delta \|_2^2 \le \|\Delta - z \|_2^2.
				\end{align*}
				
				Combining the previous two bounds yields
				\begin{align*}
					\mathbb{E}_{j_k}\|\Delta - P_\Gamma \Delta \|_2^2 
					&\le  \left(\sqrt{c \over a} + {2b \over a} \right)^2 
					\le {2c \over a} + {8b^2 \over a^2}.
				\end{align*}

				On the other hand, since 
				\begin{align*}
					\|z\|_2^2 = \left \|- {P_{\Gamma} \nabla g_{i,j_k}(x^{(i)}_{t,k}) \over \|P_{\Gamma} g_{i,j_k}(x^{(i)}_{t,k}) \|_2} \|\Delta\|_2 \right \|_2^2 = \|\Delta\|_2^2,
				\end{align*}
				we have 
				\begin{align*}
					& c \le \left(\bar{\rho}^{+(i)}_{4\tau}  +  {1 \over \theta^2} - \rho_{4\tau}^{-}(i)  +  {\sqrt{\eta_1^2-1} \over \eta_1}(3 \mathbb{E}_{j_k} (\rho^{+}_\tau(i,j_k))^2+1) \right)\|\Delta\|_2^2 \\
					& \qquad + \left(\theta^2  + {3\sqrt{\eta_1^2-1} \over \eta_1} \right) \sigma_i^2 + {3\sqrt{\eta_1^2-1} \over \eta_1}\| \nabla f_i \left(x^* \right) \|_2^2.  
				\end{align*}
				
				Thus, 
				\begin{align*}
					&\mathbb{E}_{j_k}\|\Delta - P_\Gamma \Delta \|_2^2 \\
					& \quad \le \left(2{\left( {\bar{\rho}^{+(i)}_{4\tau} } +  {1 \over \theta^2} \right)  - \eta_1^2 \rho^-_{4\tau}(i)   \over \eta_1^2 \rho_{4\tau}^{-}(i)} +  {2\sqrt{\eta_1^2-1} \over \eta_1 \rho_{4\tau}^{-}(i) }(3 \mathbb{E}_{j_k} (\rho^{+}_\tau(i,j_k))^2+1)  \right) \|\Delta\|_2^2 \\
					&\qquad + {8 \over (\rho_{4\tau}^{-}(i) )^2}   \max_{\substack{ \Omega \subset [d] \\ |\Omega| = 4\tau}}  \| P_{\Omega}  \nabla f_i(x^*) \|_2^2 + {1 \over \rho_{4\tau}^{-}(i) } \left[\left({2\theta^2 } + {6\sqrt{\eta_1^2-1} \over \eta_1} \right) \sigma_i^2 + {6\sqrt{\eta_1^2-1} \over \eta_1}\| \nabla f_i \left(x^* \right) \|_2^2 \right] .
				\end{align*}
				
			\end{proof}

			Equipped with these lemmas, we are ready to prove our main result for the linear convergence of FedGradMP. 
			\begin{thm}
				\label{thm:main_theorem1}
				Let $x^*$ be the solution to \eqref{eq:main_problem} and $x_0$ be the initial feasible solution. Assume that the local objective $f_i$ satisfies $\mathcal{A}$-RSC with constant $\rho^-_{4\tau}(i)$ in Assumption \ref{assumption:main_assumptions} and all of the functions $g_{i,j}$ associated with mini-batches satisfy $\mathcal{A}$-RSS with constant $\rho^+_{4\tau}(i,j)$ in Assumption \ref{assumption:main_assumptions2}. We further assume that $g_{i,j}$ satisfies the bounded variance condition of local stochastic gradients in Assumption \ref{asump:local_gradients} with variance bound $\sigma^2_i$. Let $K$ be the number of local iterations at each client. 
				
				Then, for any $\theta > 0$, the expectation of the recovery error at the $(t+1)$-th round of FedGradMP described in Algorithm \ref{alg:FedGradMP} obeys
				\[
				\mathbb{E} \|x_{t+1} - x^*\|_2^2 
				\le  \kappa^{t+1}  \|x_0 - x^*\|_2^2  +  {(2 \eta_3^2 + 2) \nu \over 1-\kappa} \sum_{i=1}^N p_i { 1 - \mu(i)^K   \over 1 - \mu(i)},	
				\]
				where 
				\[
				\kappa = (2 \eta_3^2 + 2) \sum_{i=1}^N p_i \left[(1 + \eta_2)^2  \beta_1(i)\beta_2(i) \right]^K \qquad \text{and} \qquad \mu(i) =  (1 + \eta_2)^2  \beta_1(i)\beta_2(i). 
				\]
				Here 
				\begin{align*}
					&\beta_1(i) = { \bar{\rho}^{+(i)}_{4\tau} \over 2\rho^{-}_{4\tau}(i) -  \bar{\rho}^{+(i)}_{4\tau} }, \qquad \beta_2(i) 
					=  \left(2{\left( {\bar{\rho}^{+(i)}_{4\tau} } +  {1 \over \theta^2} \right)  - \eta_1^2 \rho^-_{4\tau}(i)   \over \eta_1^2 \rho_{4\tau}^{-}(i)} +  {2\sqrt{\eta_1^2-1} \over \eta_1 \rho_{4\tau}^{-}(i) }(3 \mathbb{E}_{j_k} (\rho^{+}_\tau(i,j_k))^2+1)  \right),
				\end{align*}
				\begin{align*}
					&\nu  = 
					\begin{cases}
						(1 + \eta_2)^2  \max_i \left(  {8\beta_1(i) \over (\rho_{4\tau}^{-}(i) )^2} + {4 \over \bar{\rho}^{+(i)}_{4\tau} (2\rho^{-}_{4\tau}(i) - \bar{\rho}^{+(i)}_{4\tau} ) }  + {\beta_1(i) \over \rho_{4\tau}^{-}(i) }{6\sqrt{\eta_1^2-1} \over \eta_1}   \right) \zeta_*^2 \\
						\qquad+  (1 + \eta_2)^2  \sum\limits_{i=1}^N p_i \left[{\beta_1(i) \over \rho_{4\tau}^{-}(i) }  \left({2\theta^2 } + {6\sqrt{\eta_1^2-1} \over \eta_1} \right) + {4 \over \bar{\rho}^{+(i)}_{4\tau} (2\rho^{-}_{4\tau}(i) - \bar{\rho}^{+(i)}_{4\tau} ) } \right]\sigma_i^2  \qquad \text{if $\eta_1  > 1$},\\
						(1 + \eta_2)^2 \max_i \left(  {8\beta_1(i) \over (\rho_{4\tau}^{-}(i) )^2} + {4 \over \bar{\rho}^{+(i)}_{4\tau} (2\rho^{-}_{4\tau}(i) - \bar{\rho}^{+(i)}_{4\tau} ) } \right) \sum\limits_{i=1}^N p_i  \max\limits_{\substack{ \Omega \subset [d] \\ |\Omega| = 4\tau}}  \| P_{\Omega}  \nabla f_i(x^*) \|_2^2 \\
						\qquad+ (1 + \eta_2)^2  \sum_{i=1}^N p_i \left[2{\beta_1(i) \over \rho_{4\tau}^{-}(i) }  {\theta^2 }  + {4 \over \bar{\rho}^{+(i)}_{4\tau} (2\rho^{-}_{4\tau}(i) - \bar{\rho}^{+(i)}_{4\tau} ) } \right]\sigma_i^2 \qquad \text{if $\eta_1  = 1$}.
					\end{cases}
				\end{align*}
				
			\end{thm}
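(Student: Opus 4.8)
The plan is to distill Lemmas~\ref{lem:GradMP_lem1}--\ref{lem:GradMP_lem3} into a one-step contraction for a single client, unroll it over the $K$ local iterations, push the result through the server aggregation to obtain a one-round contraction for $\mathbb{E}\|x_t-x^*\|_2^2$, and finally telescope over the $T$ rounds. For the first step, fix a round $t$ and a client $i$; composing Lemma~\ref{lem:GradMP_lem1} (namely $\|x_{t,k+1}^{(i)}-x^*\|_2^2\le(1+\eta_2)^2\|b_{t,k}^{(i)}-x^*\|_2^2$), then Lemma~\ref{lem:GradMP_lem2}, then Lemma~\ref{lem:GradMP_lem3}, and taking iterated expectations over the mini-batch indices $J_k=(j_1,\dots,j_k)$ via the tower property (note that $\beta_1(i),\beta_2(i)$ are deterministic, since $\beta_2(i)$ already averages $(\rho^+_\tau(i,j_k))^2$ over the mini-batch law) gives
\[
\mathbb{E}\,\|x_{t,k+1}^{(i)}-x^*\|_2^2\;\le\;\mu(i)\,\mathbb{E}\,\|x_{t,k}^{(i)}-x^*\|_2^2\;+\;\nu_i,\qquad \mu(i)=(1+\eta_2)^2\beta_1(i)\beta_2(i),
\]
with $\nu_i=(1+\eta_2)^2\bigl(\beta_1(i)\xi_2(i)+\xi_1(i)\bigr)$ and $\xi_1(i),\xi_2(i)$ the constants of Lemmas~\ref{lem:GradMP_lem2}--\ref{lem:GradMP_lem3}.

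Because $x_{t,1}^{(i)}=x_t$, iterating this recursion $K$ times and summing the geometric series yields
\[
\mathbb{E}\,\|x_{t,K+1}^{(i)}-x^*\|_2^2\;\le\;\mu(i)^K\,\mathbb{E}\,\|x_t-x^*\|_2^2\;+\;\nu_i\,\frac{1-\mu(i)^K}{1-\mu(i)}.
\]
For the server step, let $\bar x_t=\sum_{i=1}^N p_i x_{t,K+1}^{(i)}$. Repeating the triangle-inequality argument behind Lemma~\ref{lem:GradMP_lem1} with the $\tau$-approximate projection (parameter $\eta_3$) and using that $x^*$ is $\tau$-sparse with respect to $\mathcal{A}$ gives $\|x_{t+1}-x^*\|_2\le(1+\eta_3)\|\bar x_t-x^*\|_2$, whence $\|x_{t+1}-x^*\|_2^2\le(2\eta_3^2+2)\|\bar x_t-x^*\|_2^2$ by AM--GM; convexity of $\|\cdot\|_2^2$ together with $\sum_i p_i=1$ then gives
\[
\mathbb{E}\,\|x_{t+1}-x^*\|_2^2\;\le\;(2\eta_3^2+2)\sum_{i=1}^N p_i\,\mathbb{E}\,\|x_{t,K+1}^{(i)}-x^*\|_2^2.
\]

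Substituting the local bound into this aggregation bound produces the one-round recursion $\mathbb{E}\|x_{t+1}-x^*\|_2^2\le\kappa\,\mathbb{E}\|x_t-x^*\|_2^2+(2\eta_3^2+2)\sum_i p_i\nu_i\tfrac{1-\mu(i)^K}{1-\mu(i)}$ with $\kappa=(2\eta_3^2+2)\sum_i p_i\mu(i)^K$; iterating over $t$ and bounding $\sum_{s=0}^{t}\kappa^s\le(1-\kappa)^{-1}$ (which tacitly assumes $\kappa<1$) yields the stated estimate as soon as $\nu$ is chosen to dominate each $\nu_i$. To recover the displayed closed form of $\nu$, I would estimate $\xi_1(i)$ by $\|P_{\widehat{\Gamma}}\nabla g_{i,j}(x^*)\|_2^2\le 2\|\nabla g_{i,j}(x^*)-\nabla f_i(x^*)\|_2^2+2\max_{|\Omega|=4\tau}\|P_\Omega\nabla f_i(x^*)\|_2^2$ and take expectations using Assumption~\ref{asump:local_gradients}; then use $\max_{|\Omega|=4\tau}\|P_\Omega\nabla f_i(x^*)\|_2^2\le\|\nabla f_i(x^*)\|_2^2$ with $\sum_i p_i\|\nabla f_i(x^*)\|_2^2=\zeta_*^2$ to route the ``bias'' contributions into the $\zeta_*^2$ term under a $\max_i$ over their coefficients, while keeping the $\sigma_i^2$ contributions as explicit $p_i$-weighted sums; the $\eta_1=1$ branch is the identical computation with the refined $\xi_2(i)$ from Lemma~\ref{lem:GradMP_lem3}.

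The conceptual skeleton is short and each ingredient is essentially delivered by the lemmas; the bulk of the work — and the likeliest place for slips — is the expectation bookkeeping and constant-matching in the last step: tracking conditional expectations over the independent per-client mini-batch draws and over earlier rounds, keeping $\beta_1(i),\beta_2(i),\mu(i)$ deterministic, and reorganizing $\sum_i p_i\nu_i\tfrac{1-\mu(i)^K}{1-\mu(i)}$ into the required hybrid form (a $\max_i$-over-coefficients term against $\zeta_*^2$, and $p_i$-weighted sums against $\max_{|\Omega|=4\tau}\|P_\Omega\nabla f_i(x^*)\|_2^2$ and $\sigma_i^2$) while keeping the bound sharp enough to remain informative.
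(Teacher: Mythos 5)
Your proposal is correct and follows essentially the same route as the paper's proof: the paper likewise chains Lemmas~\ref{lem:GradMP_lem1}--\ref{lem:GradMP_lem3} into a per-local-step contraction, passes through the $(2\eta_3^2+2)$ aggregation bound obtained from the approximate projection plus Jensen's inequality, unrolls over the $K$ local iterations, and telescopes over rounds, with the same bookkeeping for $\xi_1(i)$ and $\xi_2(i)$ (including the bound $\mathbb{E}_j\|P_{\widehat\Gamma}\nabla g_{i,j}(x^*)\|_2^2\le 2(\max_{|\Omega|=4\tau}\|P_\Omega\nabla f_i(x^*)\|_2^2+\sigma_i^2)$ and the routing of bias terms into $\zeta_*^2$ via a $\max_i$ over coefficients) that you describe. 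The only cosmetic difference is ordering: the paper applies the server-aggregation inequality first and then unrolls the $p_i$-weighted recursion in $k$, whereas you unroll per client first and aggregate afterward; both yield the stated bound.
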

			
			\begin{rem}
				\label{rem:main_remark1}
				There are key factors that impact the rate of convergence  of FedGradMP in the number of communication rounds and the residual error in Theorem \ref{thm:main_theorem1} 
				as we discuss below. %\jq{three ways? Also replace "Impact" by "Change" because of "ways"? Here heterogeneity is classified as mild, modest etc., which might need to be formally quantified or defined in the introduction or here. }
				%\hj{This is a great point. After giving it some thought, I have decided to leave out the loose connection of $\beta_1(i) \beta_2(i)$ and heterogeneity of local objective function $f_i$. }
				
				\begin{enumerate}
					\item {\bf{Impact of parameters  $\beta_1(i)$ and $\beta_2(i)$}.} For a fixed number of local iterations $K$, one can see that as the product of the two parameters $\beta_1(i)$ and $\beta_2(i)$ becomes small, the convergence rate $\kappa$ improves (decreases). The product $\beta_1(i)\beta_2(i)$ becomes small as the $\mathcal{A}$-RSC constant $\rho^{-}_{4\tau}(i)$ increases or the $\mathcal{A}$-RSS constant
					$\bar{\rho}^{+(i)}_{4\tau}$ decreases. Choosing a proper dictionary could often improve the RSC/RSS constants as shown in Section \ref{sec:Improving_RSC_using_random_dictionary} consequently leading to better convergence, which is also numerically demonstrated in Section \ref{sec:Improving_RSC_for_FEMNIST}.

					\item  {\bf{Impact of the local iteration number $K$ on the convergence rate.}} Increasing the local iteration number $K$ also makes the convergence rate $\kappa$ decrease when all the terms $(1 + \eta_2)^2  \beta_1(i)\beta_2(i)$ are less than $1$. To see this, recall that the convergence rate is given by 	
					\[\kappa = (2 \eta_3^2 + 2) \sum_{i=1}^N p_i \left[(1 + \eta_2)^2  \beta_1(i)\beta_2(i) \right]^K.\] 
					In this case, increasing $K$ leads to the decay of each term in 
					$\sum_{i=1}^N p_i \left[(1 + \eta_2)^2  \beta_1(i)\beta_2(i) \right]^K$, improving the convergence rate. It is possible, however, some of the terms in the sum $(1 + \eta_2)^2  \beta_1(i)\beta_2(i)$ exceed $1$, while the sum $\sum_{i=1}^N p_i \left[(1 + \eta_2)^2  \beta_1(i)\beta_2(i) \right]^K$ is still less than $1$ for small $K$, making the convergence rate less than $1$.  In this case, as we increase the local iteration number $K$, the largest term starts dominating the sum which could increase the convergence rate (even make it greater than $1$ for large $K$), degrading the performance of FedGradMP. 
					
					\item {\bf{Impact of the local iteration number $K$ on the residual error.}}
					
					The residual error 
					${(2 \eta_3^2 + 2) \nu \over 1-\kappa} \sum_{i=1}^N p_i { 1 - \mu(i)^K   \over 1 - \mu(i)}$ depends on the local iteration number $K$ in a more complicated way. Even when all terms $(1 + \eta_2)^2  \beta_1(i)\beta_2(i)$ are less than $1$, in which case increasing $K$ makes the convergence rate $\kappa$ decrease (making the factor ${1 \over 1-\kappa}$ in the residual error decrease), but the factor ${ 1 - \mu(i)^K  \over 1 - \mu(i)}$ increases in $K$. Hence, the dependence of residual error on the local iteration number $K$ may not be simple and this is actually what we observe in the numerical experiment in Section \ref{section:Impact of the number of local iterations}. This is consistent with commonly accepted knowledge on the effect of the local iteration number on the residual error in the FL literature \cite{charles2021convergence,wang2021field, khaled2020tighter, yuan2021federated,  tong2020federated}: taking more local steps at clients makes the local estimates closer to the local solutions while the local estimates could deviate from the global solution in the FL environment in general.

				\end{enumerate}
				
			\end{rem}
			
			\begin{rem} [Interpretation of Theorem \ref{thm:main_theorem1}]
				\label{rem:main_remark2}
				Theorem \ref{thm:main_theorem1} states that the iterates of FedGradMP converge linearly up to the residual error of the solution $x^*$ as long as $\kappa < 1$. 
				The size of the residual error is proportional to $\nu$. In particular, from the expression for $\nu$ in Theorem \ref{thm:main_theorem1}, one can see that $\nu = 0$ if the heterogeneity parameter $\zeta_* = 0$ and stochastic gradient noise $\sigma_i = 0$ for all $i \in [N]$. We take a look at the related scenarios in more detail below. 
				\begin{itemize}
					\item $\zeta_* = 0$ or $\nabla f_i(x^*) = 0$ for all the client objective function $f_i$. For example, the function $f_i$ could be the square loss for the noiseless observations of $x^*$ with sparsity level $\tau$.
					%and sparity level $\tau \ge \tau^*$ (this can be interpreted as an overparametrization).
					
					\item $\sigma_i = 0$ for all $1 \le i \le N$ holds if and only if $\nabla g_{i,j}(x) = \nabla f_i(x)$ almost surely for all $\tau$-sparse vectors.
					This happens when the full batch of each client is used instead of mini-batches. In particular, when the projection operator is exact ($\eta_1 = 1$) and under a slightly strong heterogeneity assumption, the residual error is statistically optimal. 
					More precisely, under slightly strong heterogeneity assumptions only at the solution $x^*$ such as  
					\begin{align}
						\label{assump:sligtly_strong_dissimilarity}
						\|P_{\Omega}  \nabla f_i(x^*) - P_{\Omega} \nabla f(x^*) \|_2^2 \le \beta^2 \|P_{\Omega}  \nabla f(x^*)\|_2^2,  
					\end{align}
					where $\beta > 0$ and $\Omega$ is any subset of $[d]$ with size $4\tau$, 
					%\jq{Do we require or guarantee $4\tau\leq d$ then?} \hj{If $4\tau > d$, then above inequality becomes the same inequality without $P_\Omega$. I saw this type of expression appears in papers about IHT, but they don't usually mention this since I think it is more or less obvious.}
					one can see that we have 
					\[
					\max\limits_{\substack{ \Omega \subset [d] \\ |\Omega| = 4\tau}} \| P_{\Omega}  \nabla f_i(x^*) \|_2^2 \le 2(1+ \beta^2) \max\limits_{\substack{ \Omega \subset [d] \\ |\Omega| = 4\tau}} \|P_{\Omega}  \nabla f(x^*)\|_2^2.
					\]
					From Theorem \ref{thm:main_theorem1}, after a sufficient number of rounds, we have
					\[
					\mathbb{E} \|x_{t+1} - x^*\|_2
					\le \left[\mathbb{E} \|x_{t+1} - x^*\|_2^2\right]^{1/2}
					\le  O \Bigg( \bigg(\sum\limits_{i=1}^N p_i \max\limits_{\substack{ \Omega \subset [d] \\ |\Omega| = 4\tau}} \| P_{\Omega}  \nabla f_i(x^*) \|_2^2 \bigg)^{1/2} \Bigg), 
					\]
					which is bounded from above by  $O \Bigg(  \max\limits_{\substack{ \Omega \subset [d] \\ |\Omega| = 4\tau}} \| P_{\Omega}  \nabla f(x^*) \|_2  \Bigg)$.
					This is the optimal statistical bias for commonly used FL data including sub-Gaussian data sets of size $|D|$ that are independently generated for each client, which is of order of $O \left(  \sqrt{\tau \log n \over N|D|} \right)$ for the sparse linear regression problem (when $f_i$ are the square loss functions). 
					The uniform bounded heterogeneity condition, which is much stronger than \eqref{assump:sligtly_strong_dissimilarity} is used to show the optimal statistical recovery of Fast FedDualAvg \cite{bao2022fast}. 
					See \cite{nguyen2017linear, tong2020federated, bao2022fast} for more details. 
					
					\item The parameter $\theta$ provides  a trade-off between the convergence rate and the residual error due to the stochastic gradient. In particular, when the full batch is used ($\sigma_i = 0$), then one can set $\theta = \infty$ giving the fastest convergence. 
					
					\item When $\sigma_i \neq 0$, the second term of the residual error $\nu$ is not vanishing in the number of rounds $t$. The similar term for FedHT/FedIterHT \cite{tong2020federated} decreases in $t$, but this requires that the mini-batch size at each client goes to infinity in $t$, which could severely restrict the number of communication rounds for the applicability of their theory. The idea of increasing the mini-batch size in the number of iterations is not new and has been used in \cite{zhou2018efficient, friedlander2012hybrid}. However, the settings for these works are not for FL and the rate of mini-batch size growth is moderate, whereas the growth rates of FedHT/FedIterHT need to be generally much higher -- they grow exponentially in the number of local iterations at clients.  This potential issue in FL methods based on exponentially increasing mini-batch sizes is also pointed out in \cite{haddadpour2019local}.
					%Under the RSC/RSS assumptions and the local stochastic gradient condition \eqref{asump:local_gradients2}, as far as we aware, the question of whether it is possible to design thresholding based FL methods converging the solution up to the optimal statistical bias without growing mini-batch sizes seems to be open in general. 
				\end{itemize}
			\end{rem}
			
			\begin{rem}
				[Comparisons between our results and previous works]
				
				Under strong convexity/smoothness assumptions for local objective functions $f_i$, it has been proved that the methods based on \emph{FedAvg} cannot converge linearly to the solution $x^*$ in general \cite{khaled2020tighter, woodworth2020minibatch}. As for general convex and smoothness conditions, the convergence to a neighborhood of $x^*$ is proven to be at most sublinear, whereas our work shows the linear convergence up to a small neighborhood under restricted convexity/smoothness assumptions. 
				We want to make clear that we are not claiming that FedGradMP converges linearly to the $x^*$ but a small neighborhood of $x^*$ unless $\zeta_* = 0$  and $\sigma_i = 0$ for all $i$. 
				%Moreover, a close inspection on the lower bound on the convergence speed of FedAvg in \cite{khaled2020tighter, woodworth2020minibatch}  does not prevent the linear convergence up to a neighborhood of the solution. 
			\end{rem}

			\begin{proof} [Proof of Theorem \ref{thm:main_theorem1}]
				Let $\mathcal{F}^{(t)}$ be the filtration by all the randomness up to the $t$-th communication round, which is all the selected mini-batch indices at all the client up to the $t$-th round. 
				We begin with analyzing $\mathbb{E} \left[ \|x_{t+1} - x^*\|_2^2 | \mathcal{F}^{(t)} \right]$, the expected error of the global iterate $x_{t+1}$ at the $(t+1)$-th round and $x^*$ conditioned on $\mathcal{F}^{(t)}$. Because we will work with this conditional expectation until the very end of the proof, by abusing the notation slightly, $\mathbb{E} \left[ \cdot | \mathcal{F}^{(t)} \right]$ will be denoted by $\mathbb{E}[\cdot]$.
				
				\begin{align}
					\nonumber
					\mathbb{E} \|x_{t+1} - x^*\|_2^2  
					&  = \mathbb{E} \left \| P_{\Lambda_s} \left( \sum_{i=1}^N p_i x_{t,{K+1}}^{(i)} \right) - \sum_{i=1}^N p_i x_{t,{K+1}}^{(i)} + \sum_{i=1}^N p_i x_{t,{K+1}}^{(i)}  - x^* \right \|_2^2 \\ \nonumber
					&  \le 2 \mathbb{E} \left \| P_{\Lambda_s} \left( \sum_{i=1}^N p_i x_{t,{K+1}}^{(i)} \right) - \sum_{i=1}^N p_i x_{t,{K+1}}^{(i)} \right \|_2^2 +  2 \mathbb{E} \left \| \sum_{i=1}^N p_i x_{t,{K+1}}^{(i)}  - x^* \right \|_2^2 \\ \nonumber
					&  \le 2 \eta_3^2 \mathbb{E} \left \| \mathcal{H}_\tau \left( \sum_{i=1}^N p_i x_{t,{K+1}}^{(i)} \right) - \sum_{i=1}^N p_i x_{t,{K+1}}^{(i)} \right \|_2^2 +  2 \mathbb{E} \left \| \sum_{i=1}^N p_i x_{t,{K+1}}^{(i)}  - x^* \right \|_2^2 \\ \nonumber
					&  \le (2 \eta_3^2 + 2) \mathbb{E} \left \| \sum_{i=1}^N p_i x_{t,{K+1}}^{(i)}  - x^* \right \|_2^2 \\ \nonumber
					& = (2 \eta_3^2 + 2)  \mathbb{E} \left \|  \sum_{i=1}^N p_i x_{t,{K+1}}^{(i)}   -  \sum_{i=1}^N p_i x^*  \right \|_2^2 \\ 
					\label{eq:main_thm_bound1}
					& \le  (2 \eta_3^2 + 2)   \sum_{i=1}^N p_i \mathbb{E}^{(i)}_{J_K} \left \|  x_{t,{K+1}}^{(i)}   - x^*  \right \|_2^2 
				\end{align} where the second inequality follows from the definition of the approximation projector operator,  the third follows from the fact that both $x^*$ and $\mathcal{H}_\tau \left( \sum_{i=1}^N p_i x_{t,{K+1}}^{(i)} \right)$ are $\tau$-sparse but $\mathcal{H}_\tau \left( \sum_{i=1}^N p_i x_{t,{K+1}}^{(i)} \right)$ is the best $\tau$-sparse approximation of $\sum_{i=1}^N p_i x_{t,{K+1}}^{(i)}$ with respect to the dictionary $\mathcal{A}$, and the last one is obtained by applying the Jensen's inequality.
				
				After applying Lemma \ref{lem:GradMP_lem1}, \ref{lem:GradMP_lem2}, and \ref{lem:GradMP_lem3} sequentially to \eqref{eq:main_thm_bound1}, we obtain
				
				\begin{align}
					\label{eq:main_thm_bound_1.5}
					& \sum_{i=1}^N p_i \mathbb{E}^{(i)}_{J_K} \left \|  x_{t,{K+1}}^{(i)}   - x^*  \right \|_2^2   \\
					\nonumber
					& \le (1 + \eta_2)^2 \sum_{i=1}^N p_i \mathbb{E}^{(i)}_{J_K}\left \|b^{(i)}_{t,K} - x^* \right \|_2^2 \\
					\nonumber
					& \le  (1 + \eta_2)^2 \sum_{i=1}^N p_i \left[  \beta_1(i)  \mathbb{E}^{(i)}_{J_K}  \|P^\perp_{\widehat{\Gamma}}( b^{(i)}_{t,K} - x^*)\|_2^2 + \xi_1(i) \right] \\
					\nonumber
					& =  (1 + \eta_2)^2  \sum_{i=1}^N p_i \beta_1(i) \mathbb{E}^{(i)}_{J_{K-1}, j_K}  \|P^\perp_{\widehat{\Gamma}}( b^{(i)}_{t,K} - x^*)\|_2^2  +   (1 + \eta_2)^2 \sum_{i=1}^N p_i \xi_1(i) \\
					\nonumber
					& \le  (1 + \eta_2)^2 \sum_{i=1}^N p_i  \beta_1(i) \left[\beta_2(i) \mathbb{E}^{(i)}_{J_{K-1}} \| x^{(i)}_{t,K}- x^* \|_2^2 + \xi_2(i) \right] +   (1 + \eta_2)^2 \sum_{i=1}^N p_i \xi_1(i) \\
					\label{eq:main_thm_bound2}
					& = (1 + \eta_2)^2 \sum_{i=1}^N p_i  \beta_1(i) \beta_2(i) \mathbb{E}^{(i)}_{J_{K-1}}  \| x^{(i)}_{t,K} - x^* \|_2^2 +   (1 + \eta_2)^2  \sum_{i=1}^N p_i \left(  \beta_1(i) \xi_2(i) + \xi_1(i)  \right).
				\end{align}
				
				First, the term $\xi_1(i)$ can be bounded as follows:
				\begin{align*}
					\xi_1(i) 
					&= {2 ( \mathbb{E}^{(i)}_{J_K, j} \|P_{\widehat{\Gamma}} \nabla g_{i,j}(x^*)\|_2^2) \over \bar{\rho}^{+(i)}_{4\tau} (2\rho^{-}_{4\tau}(i) - \bar{\rho}^{+(i)}_{4\tau} ) } \le {4 \over \bar{\rho}^{+(i)}_{4\tau} (2\rho^{-}_{4\tau}(i) - \bar{\rho}^{+(i)}_{4\tau} ) } \left(  \max\limits_{\substack{ \Omega \subset [d] \\ |\Omega| = 4\tau}}  \| P_{\Omega}  \nabla f_i(x^*) \|_2^2 + \sigma_i^2 \right) 
				\end{align*} This is from the property of the projection operator and \eqref{asump:local_gradients2} in Assumption \ref{asump:local_gradients} implying $$\mathbb{E}^{(i)}_{j}\|P_{\widehat{\Gamma}} \nabla g_{i,j}(x) - P_{\widehat{\Gamma}} \nabla f_i(x) \|_2^2 \le \mathbb{E}^{(i)}_{j} \| \nabla g_{i,j}(x) - \nabla f_i(x) \|_2^2 \le \sigma^2_i,$$
				so we have
				\[
				\mathbb{E}^{(i)}_{j} \|P_{\widehat{\Gamma}} \nabla g_{i,j}(x^*)\|_2^2 \le 2(\mathbb{E}^{(i)}_{j} \|P_{\widehat{\Gamma}} \nabla f_i(x^*) \|_2^2 + \sigma^2_i) \le 2 \left( \max_{\substack{ \Omega \subset [d] \\ |\Omega| = 4\tau}}  \| P_{\Omega}  \nabla f_i(x^*) \|_2^2 + \sigma_i^2 \right).
				\]
				
				Hence, each term $\beta_1(i) \xi_2(i) + \xi_1(i)$ in \eqref{eq:main_thm_bound2} can be bounded as below. 
				\begin{align*}
					&\beta_1(i) \xi_2(i) + \xi_1(i)\\
					&\le \beta_1(i) \left(  {8 \over (\rho_{4\tau}^{-}(i) )^2}  \max\limits_{\substack{ \Omega \subset [d] \\ |\Omega| = 4\tau}}  \| P_{\Omega}  \nabla f_i(x^*) \|_2^2 + {1 \over \rho_{4\tau}^{-}(i) } \left[\left({2\theta^2 } + {6\sqrt{\eta_1^2-1} \over \eta_1} \right) \sigma_i^2 + {6\sqrt{\eta_1^2-1} \over \eta_1}\| \nabla f_i \left(x^* \right) \|_2^2 \right] \right)\\
					& \qquad + {4 \over \bar{\rho}^{+(i)}_{4\tau} (2\rho^{-}_{4\tau}(i) - \bar{\rho}^{+(i)}_{4\tau} ) } \left(  \max\limits_{\substack{ \Omega \subset [d] \\ |\Omega| = 4\tau}}  \| P_{\Omega}  \nabla f_i(x^*) \|_2^2 + \sigma_i^2 \right)\\
					&\le  \left(  {8\beta_1(i) \over (\rho_{4\tau}^{-}(i) )^2} + {4 \over \bar{\rho}^{+(i)}_{4\tau} (2\rho^{-}_{4\tau}(i) - \bar{\rho}^{+(i)}_{4\tau} ) } \right)  \max\limits_{\substack{ \Omega \subset [d] \\ |\Omega| = 4\tau}}  \| P_{\Omega}  \nabla f_i(x^*) \|_2^2 + {\beta_1(i) \over \rho_{4\tau}^{-}(i) }{6\sqrt{\eta_1^2-1} \over \eta_1}\| \nabla f_i \left(x^* \right) \|_2^2  \\
					& \qquad + \left[{\beta_1(i) \over \rho_{4\tau}^{-}(i) }  \left({2\theta^2 } + {6\sqrt{\eta_1^2-1} \over \eta_1} \right) + {4 \over \bar{\rho}^{+(i)}_{4\tau} (2\rho^{-}_{4\tau}(i) - \bar{\rho}^{+(i)}_{4\tau} ) } \right]\sigma_i^2.
				\end{align*}
				
				By plugging the above bound to \eqref{eq:main_thm_bound2}, we have
				%\jq{In the following long inequality chain, we first use $\mathbb{E}_{J_k}$ and then $\mathbb{E}_{I_K}$ or so?} \hj{Thank you for pointing it out. There were typos in the notations for the expectation and I have corrected them. }
				
				\small
				\begin{align}
					\nonumber
					& \sum_{i=1}^N p_i \mathbb{E}^{(i)}_{J_K} \left \|  x_{t,K+1}^{(i)}   - x^*  \right \|_2^2   \\
					\label{eq:main_convergence_result}
					& \le  (1 + \eta_2)^2 \sum_{i=1}^N p_i  \beta_1(i) \beta_2(i)  \mathbb{E}^{(i)}_{J_{K-1}} \| x^{(i)}_{t,K} - x^* \|_2^2 \\
					\nonumber
					& +  (1 + \eta_2)^2  \sum_{i=1}^N p_i \left(  \left(  {8\beta_1(i) \over (\rho_{4\tau}^{-}(i) )^2} + {4 \over \bar{\rho}^{+(i)}_{4\tau} (2\rho^{-}_{4\tau}(i) - \bar{\rho}^{+(i)}_{4\tau} ) } \right)  \max\limits_{\substack{ \Omega \subset [d] \\ |\Omega| = 4\tau}}  \| P_{\Omega}  \nabla f_i(x^*) \|_2^2 + {\beta_1(i) \over \rho_{4\tau}^{-}(i) }{6\sqrt{\eta_1^2-1} \over \eta_1}\| \nabla f_i \left(x^* \right) \|_2^2   \right)\\
					\nonumber
					& + (1 + \eta_2)^2  \sum_{i=1}^N p_i \left[{\beta_1(i) \over \rho_{4\tau}^{-}(i) }  \left({2\theta^2 } + {6\sqrt{\eta_1^2-1} \over \eta_1} \right) + {4 \over \bar{\rho}^{+(i)}_{4\tau} (2\rho^{-}_{4\tau}(i) - \bar{\rho}^{+(i)}_{4\tau} ) } \right]\sigma_i^2. 
					\nonumber
				\end{align} 
				\normalsize
				
				Now consider first the case when $\eta_1 > 1$. Then, since $\max\limits_{\substack{ \Omega \subset [d] \\ |\Omega| = 4\tau}}  \| P_{\Omega}  \nabla f_i(x^*) \|_2^2 \le \| \nabla f_i \left(x^* \right) \|_2^2$, we have
				
				\begin{align*}
					\nonumber
					& \sum_{i=1}^N p_i \mathbb{E}^{(i)}_{J_K} \left \|  x_{t,K+1}^{(i)}   - x^*  \right \|_2^2   \\
					& \le  (1 + \eta_2)^2 \sum_{i=1}^N p_i  \beta_1(i) \beta_2(i)  \mathbb{E}^{(i)}_{J_{K-1}} \| x^{(i)}_{t,K} - x^* \|_2^2 \\
					& +  (1 + \eta_2)^2  \sum_{i=1}^N p_i \left(  {8\beta_1(i) \over (\rho_{4\tau}^{-}(i) )^2} + {4 \over \bar{\rho}^{+(i)}_{4\tau} (2\rho^{-}_{4\tau}(i) - \bar{\rho}^{+(i)}_{4\tau} ) }  + {\beta_1(i) \over \rho_{4\tau}^{-}(i) }{6\sqrt{\eta_1^2-1} \over \eta_1}   \right) \| \nabla f_i \left(x^* \right) \|_2^2 \\
					& + (1 + \eta_2)^2  \sum_{i=1}^N p_i \left[{\beta_1(i) \over \rho_{4\tau}^{-}(i) }  \left({2\theta^2 } + {6\sqrt{\eta_1^2-1} \over \eta_1} \right) + {4 \over \bar{\rho}^{+(i)}_{4\tau} (2\rho^{-}_{4\tau}(i) - \bar{\rho}^{+(i)}_{4\tau} ) } \right]\sigma_i^2 \\
					& \le  (1 + \eta_2)^2 \sum_{i=1}^N p_i  \beta_1(i) \beta_2(i)  \mathbb{E}^{(i)}_{J_{K-1}} \| x^{(i)}_{t,K} - x^* \|_2^2 \\
					& +  (1 + \eta_2)^2  \max_i \left(  {8\beta_1(i) \over (\rho_{4\tau}^{-}(i) )^2} + {4 \over \bar{\rho}^{+(i)}_{4\tau} (2\rho^{-}_{4\tau}(i) - \bar{\rho}^{+(i)}_{4\tau} ) }  + {\beta_1(i) \over \rho_{4\tau}^{-}(i) }{6\sqrt{\eta_1^2-1} \over \eta_1}   \right) \sum_{i=1}^N p_i  \| \nabla f_i \left(x^* \right) \|_2^2 \\
					& + (1 + \eta_2)^2  \sum_{i=1}^N p_i \left[{\beta_1(i) \over \rho_{4\tau}^{-}(i) }  \left({2\theta^2 } + {6\sqrt{\eta_1^2-1} \over \eta_1} \right) + {4 \over \bar{\rho}^{+(i)}_{4\tau} (2\rho^{-}_{4\tau}(i) - \bar{\rho}^{+(i)}_{4\tau} ) } \right]\sigma_i^2.
				\end{align*} 
				
				Let $\mu(i) =  (1 + \eta_2)^2  \beta_1(i) \beta_2(i)$ and 
				\begin{align*}
					&\nu  =   (1 + \eta_2)^2  \max_i \left(  {8\beta_1(i) \over (\rho_{4\tau}^{-}(i) )^2} + {4 \over \bar{\rho}^{+(i)}_{4\tau} (2\rho^{-}_{4\tau}(i) - \bar{\rho}^{+(i)}_{4\tau} ) }  + {\beta_1(i) \over \rho_{4\tau}^{-}(i) }{6\sqrt{\eta_1^2-1} \over \eta_1}   \right) \zeta_*^2 \\
					& + (1 + \eta_2)^2  \sum_{i=1}^N p_i \left[{\beta_1(i) \over \rho_{4\tau}^{-}(i) }  \left({2\theta^2 } + {6\sqrt{\eta_1^2-1} \over \eta_1} \right) + {4 \over \bar{\rho}^{+(i)}_{4\tau} (2\rho^{-}_{4\tau}(i) - \bar{\rho}^{+(i)}_{4\tau} ) } \right]\sigma_i^2.
				\end{align*}
				
				Hence, when $\eta_1 > 1$, we have
				\begin{align}
					\label{eq:main_theorem_bound2}
					&  \sum_{i=1}^N p_i \mathbb{E}^{(i)}_{J_K} \left \|  x_{t,K+1}^{(i)}   - x^*  \right \|_2^2 \le \sum_{i=1}^N p_i \mu(i) \mathbb{E}^{(i)}_{J_{K-1}} \| x^{(i)}_{t,K} - x^* \|_2^2 + \nu. 
				\end{align}
				
				The case when the projection operator for the gradient is exact ($\eta_1 = 1$) follows the same argument. Setting $\eta_1 = 1$ in the inequality \eqref{eq:main_convergence_result} reduces the inequality to
				\begin{align*}
					&  \sum_{i=1}^N p_i \mathbb{E}^{(i)}_{J_K} \left \|  x_{t,K+1}^{(i)}   - x^*  \right \|_2^2   \\
					& \le  (1 + \eta_2)^2 \sum_{i=1}^N p_i  \beta_1(i) \beta_2(i)  \mathbb{E}^{(i)}_{J_{K-1}} \| x^{(i)}_{t,K} - x^* \|_2^2 \\
					& +  (1 + \eta_2)^2  \sum_{i=1}^N p_i  \left(  {8\beta_1(i) \over (\rho_{4\tau}^{-}(i) )^2} + {4 \over \bar{\rho}^{+(i)}_{4\tau} (2\rho^{-}_{4\tau}(i) - \bar{\rho}^{+(i)}_{4\tau} ) } \right)  \max\limits_{\substack{ \Omega \subset [d] \\ |\Omega| = 4\tau}}  \| P_{\Omega}  \nabla f_i(x^*) \|_2^2 \\
					& + (1 + \eta_2)^2  \sum_{i=1}^N p_i \left[2{\beta_1(i) \over \rho_{4\tau}^{-}(i) }  {\theta^2 }  + {4 \over \bar{\rho}^{+(i)}_{4\tau} (2\rho^{-}_{4\tau}(i) - \bar{\rho}^{+(i)}_{4\tau} ) } \right]\sigma_i^2\\
					& \le  (1 + \eta_2)^2 \sum_{i=1}^N p_i  \beta_1(i) \beta_2(i)  \mathbb{E}^{(i)}_{J_{K-1}} \| x^{(i)}_{t,K} - x^* \|_2^2 \\
					& +  (1 + \eta_2)^2 \max_i \left(  {8\beta_1(i) \over (\rho_{4\tau}^{-}(i) )^2} + {4 \over \bar{\rho}^{+(i)}_{4\tau} (2\rho^{-}_{4\tau}(i) - \bar{\rho}^{+(i)}_{4\tau} ) } \right) \sum_{i=1}^N p_i  \max\limits_{\substack{ \Omega \subset [d] \\ |\Omega| = 4\tau}}  \| P_{\Omega}  \nabla f_i(x^*) \|_2^2 \\
					& + (1 + \eta_2)^2  \sum_{i=1}^N p_i \left[2{\beta_1(i) \over \rho_{4\tau}^{-}(i) }  {\theta^2 }  + {4 \over \bar{\rho}^{+(i)}_{4\tau} (2\rho^{-}_{4\tau}(i) - \bar{\rho}^{+(i)}_{4\tau} ) } \right]\sigma_i^2. 
				\end{align*} 
				
				The bound for $\nu$ for the exact projection case ($\eta_1 = 1$) is given as below.
				\begin{align*}
					&\nu  =     (1 + \eta_2)^2 \max_i \left(  {8\beta_1(i) \over (\rho_{4\tau}^{-}(i) )^2} + {4 \over \bar{\rho}^{+(i)}_{4\tau} (2\rho^{-}_{4\tau}(i) - \bar{\rho}^{+(i)}_{4\tau} ) } \right) \sum_{i=1}^N p_i  \max\limits_{\substack{ \Omega \subset [d] \\ |\Omega| = 4\tau}}  \| P_{\Omega}  \nabla f_i(x^*) \|_2^2 \\
					& + (1 + \eta_2)^2  \sum_{i=1}^N p_i \left[2{\beta_1(i) \over \rho_{4\tau}^{-}(i) }  {\theta^2 }  + {4 \over \bar{\rho}^{+(i)}_{4\tau} (2\rho^{-}_{4\tau}(i) - \bar{\rho}^{+(i)}_{4\tau} ) } \right]\sigma_i^2.
				\end{align*}

				Then, applying the bound \eqref{eq:main_theorem_bound2} to \eqref{eq:main_thm_bound1} repeatedly using the induction argument on $K$, we have
				\begin{align*}
					\mathbb{E} \|x_{t+1} - x^*\|_2^2 
					& \le  (2 \eta_3^2 + 2) \left(  \sum_{i=1}^N p_i \left[  \mathbb{E}^{(i)} \mu(i) ^K\|x^{(i)}_{t,1} - x^*\|_2^2 \right] +  \sum_{i=1}^N p_i {\nu (1 - \mu(i)^K) \over 1 - \mu(i)} \right) \\
					&  = (2 \eta_3^2 + 2) \mathbb{E} \left(  \sum_{i=1}^N p_i \left[   \mu(i) ^K\|x_t- x^*\|_2^2 \right] +  \sum_{i=1}^N p_i {\nu (1 - \mu(i)^K) \over 1 - \mu(i)} \right) \\
					& =  \kappa  \|x_t - x^*\|_2^2  +  (2 \eta_3^2 + 2) \nu \sum_{i=1}^N p_i {(1 - \mu(i)^K)  \over 1 - \mu(i)} ,
				\end{align*} 
				where the first equality follows from $x^{(i)}_{t,1} = x_t$ and the second follows from 
				\[
				\kappa = (2 \eta_3^2 + 2) \sum_{i=1}^N p_i \mu(i)^K = (2 \eta_3^2 + 2) \sum_{i=1}^N p_i \left[(1 + \eta_2)^2  \beta_1(i)\beta_2(i) \right]^K. 
				\]
				
				Now, taking the unconditional expectation on both sides of the above yields
				\begin{align*}
					\mathbb{E} \left[ \|x_{t+1} - x^*\|_2^2 \right] 
					&= \mathbb{E} \left[\mathbb{E} \left[ \|x_{t+1} - x^*\|_2^2 | \mathcal{F}^{(t)} \right] \right]\\
					&\le  \kappa  \mathbb{E} \left[\mathbb{E} \left[ \|x_t - x^*\|_2^2  | \mathcal{F}^{(t-1)} \right] \right]  +  (2 \eta_3^2 + 2) \nu \sum_{i=1}^N p_i {(1 - \mu(i)^K)  \over 1 - \mu(i)}.
				\end{align*} 
				By applying this result repeatedly, we obtain 
				\begin{align*}
					\mathbb{E} \|x_{t+1} - x^*\|_2^2 
					\le  \kappa^{t+1}  \|x_0 - x^*\|_2^2  +  {(2 \eta_3^2 + 2) \nu \over 1-\kappa} \sum_{i=1}^N p_i { 1 - \mu(i)^K   \over 1 - \mu(i)}.
				\end{align*}
				
				%\color{red}{
					%[TODO: Interpret the convergence rate (contraction factor) and tolerance parameter, and identify the conditions for contraction factor $< 1$. ]
					%}
				
			\end{proof}
			
			\begin{cor}\label{cor:7}
				Under the same conditions and notation in Theorem \ref{thm:main_theorem1}, we have
				\begin{align*}
					\mathbb{E} f(x_{t+1}) 
					&\le f(x^*) + {1\over 2\rho} \|\nabla f(x^*)\|_2^2 + \rho \left[\kappa^{t+1} \|x_0 - x^*\|_2^2  +  {(2 \eta_3^2 + 2) \nu \over 1-\kappa} \sum_{i=1}^N p_i { 1 - \mu(i)^K   \over 1 - \mu(i)} \right],
				\end{align*} where $\rho = \sum_{i=1}^N p_i \bar{\rho}^{+{(i)}}_\tau$.  
			\end{cor}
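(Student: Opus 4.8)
The plan is to convert the $\ell_2$ recovery bound of Theorem~\ref{thm:main_theorem1} into a function-value bound by exploiting the $\mathcal{A}$-RSS property, which supplies a quadratic \emph{upper} bound on each $g_{i,j}$ around $x^*$. First I would write $f(x_{t+1}) - f(x^*) = \sum_{i=1}^N p_i\bigl(f_i(x_{t+1}) - f_i(x^*)\bigr)$ and, for each $i$, expand $f_i = \tfrac1M\sum_{j=1}^M g_{i,j}$. Applying the descent inequality of Lemma~\ref{lem:consequence_RSS} to each $g_{i,j}$ with base point $x_1 = x^*$ and increment $x_2 = x_{t+1}-x^*$ (which is $\mathcal{A}$-sparse since both $x_{t+1}$ and $x^*$ are outputs of approximate projections onto $\tau$-dimensional atom subspaces), then averaging over $j \in [M]$ and using $\nabla f_i = \tfrac1M\sum_j \nabla g_{i,j}$ together with the definition $\bar{\rho}^{+(i)}_\tau = \tfrac1M\sum_j \rho^+_\tau(i,j)$, yields $f_i(x_{t+1}) - f_i(x^*) \le \inner{\nabla f_i(x^*),\, x_{t+1}-x^*} + \tfrac{\bar{\rho}^{+(i)}_\tau}{2}\,\|x_{t+1}-x^*\|_2^2$.

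Next I would take the $p_i$-weighted sum and use $\nabla f = \sum_i p_i \nabla f_i$ to get $f(x_{t+1}) - f(x^*) \le \inner{\nabla f(x^*),\, x_{t+1}-x^*} + \tfrac{\rho}{2}\,\|x_{t+1}-x^*\|_2^2$ with $\rho = \sum_{i=1}^N p_i \bar{\rho}^{+(i)}_\tau$. The linear term is then controlled by Cauchy--Schwarz followed by Young's inequality, $\inner{\nabla f(x^*),\, x_{t+1}-x^*} \le \tfrac{1}{2\rho}\|\nabla f(x^*)\|_2^2 + \tfrac{\rho}{2}\|x_{t+1}-x^*\|_2^2$, which combined with the previous display gives $f(x_{t+1}) \le f(x^*) + \tfrac{1}{2\rho}\|\nabla f(x^*)\|_2^2 + \rho\,\|x_{t+1}-x^*\|_2^2$.

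Finally, taking the (unconditional) expectation over all the mini-batch randomness and invoking the recovery bound of Theorem~\ref{thm:main_theorem1}, namely $\mathbb{E}\|x_{t+1}-x^*\|_2^2 \le \kappa^{t+1}\|x_0-x^*\|_2^2 + \tfrac{(2\eta_3^2+2)\nu}{1-\kappa}\sum_{i=1}^N p_i \tfrac{1-\mu(i)^K}{1-\mu(i)}$, produces exactly the claimed inequality. I do not expect a genuine obstacle here: the argument is a short chain of the descent lemma, linearity of the objective, and Young's inequality. The only point needing care is the sparsity bookkeeping when invoking $\mathcal{A}$-RSS on the increment $x_{t+1}-x^*$, whose $\mathcal{A}$-support can have size up to $2\tau$; this is harmless because the restricted smoothness constants already in play in Theorem~\ref{thm:main_theorem1} are taken at level $4\tau$, so the quadratic upper bound applies and the stated $\rho$ (built from $\bar{\rho}^{+(i)}_\tau$) is a valid, if mildly conservative, choice.
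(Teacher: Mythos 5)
Your proposal is correct and follows essentially the same route as the paper: apply the descent lemma (Lemma~\ref{lem:consequence_RSS}) to each $g_{i,j}$ at $x_1=x^*$, $x_2=x_{t+1}-x^*$, average over $j$ and $i$ to get the quadratic upper bound with constant $\rho=\sum_i p_i\bar{\rho}^{+(i)}_\tau$, bound the linear term by Cauchy--Schwarz plus Young, and then invoke Theorem~\ref{thm:main_theorem1} after taking expectations. Your remark about the $\mathcal{A}$-support of $x_{t+1}-x^*$ possibly being of size $2\tau$ is a fair point that the paper's own proof glosses over as well, and it does not affect the result.
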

			See Appendix for the proof of Lemma \ref{cor:7}.
			
			\section{Discussion and Extensions}
			\label{sec:ext}

			\subsection{Inexact FedGradMP}
			
			\begin{algorithm} 
				\caption{Inexact FedGradMP with partial participation }
				\begin{algorithmic} 
					\Input The number of rounds $T$, the number of clients $N$, the cohort size $L$, the number of local iterations $K$, weight vector $p$, the estimated sparsity level $\tau$, $\eta_1, \eta_2, \eta_3, \delta$.
					
					\Output $\hat{x} = x_T$.
					
					\Initialize	$x_0 = 0$, $\Lambda = \emptyset$.
					
					\textbf{for $t=0, 1, \dots, T-1$ do}
					
					\indt \textbf{Randomly select a subset $S_t$ of clients with size $L$} 
					
					\indt \textbf{for each client $i$ in $S_t$,  do}
					
					\indt \indt $x_{t,1}^{(i)} = x_t$
					
					\indt \indt \textbf{for $k=1$ to $K$ do}
					
					\indt \indt \indt Select a mini-batch index set $j_k := i_{t,k}^{(i)}$ uniformly at random from $\{1,2,\dots, M\}$
					
					\indt \indt \indt Calculate the stochastic gradient $r^{(i)}_{t,k} = \nabla g_{i,j_k} \left( x_{t,k}^{(i)} \right)$
					
					\indt \indt \indt $\Gamma = $ approx$_{2\tau} (r^{(i)}_{t,k}, \eta_1)$
					
					\indt \indt \indt $\widehat{\Gamma} = \Gamma \cup \Lambda$
					
					\indt \indt \indt Solve $b^{(i)}_{t,k} = \argmin{x} f_i(x), \quad x \in \mathcal{R}(\mathcal{A}_{\widehat{\Gamma}})$ up to accuracy $\delta$
					
					\indt \indt \indt $\Lambda = $ approx$_{\tau} (b^{(i)}_{t,k}, \eta_2)$
					
					\indt \indt \indt $x_{t,k+1}^{(i)} = P_{\Lambda} (b^{(i)}_{t,k})$
					
					\indt \indt \indt $\Big(x_{t,k+1}^{(i)} \gets \Pi_R \left(x_{t,k+1}^{(i)}\right)\Big)$ \qquad [Optional projection onto a ball]
					
					\indt \indt	\ForEnd
					
					\indt \ForEnd
					
					\indt $\Lambda_s = $     approx$_{\tau} \left( \sum_{i=1}^N p_i x_{t,K+1}^{(i)}, \eta_3 \right)$
					
					\indt $x_{t+1} = P_{\Lambda_s} \left( \sum_{i=1}^N p_i x_{t,K+1}^{(i)} \right)$
					
					\indt $\Big(x_{t+1} \gets \Pi_R \left(x_{t+1}\right)\Big)$ \qquad [Optional projection onto a ball]
					
					\ForEnd
					
				\end{algorithmic}
				\label{alg:InexactFedGradMP}
			\end{algorithm}
			
			%\blue{
				%[TODO: Inexact the sub-optimization problem in FedGradMP by SGD]
				
				%\begin{itemize}
				%    \item The low-dimensional sub-optimization problem in FedGradMP can be solved by various off-the shelf algorithms. 
				%    One can only aim to solve the sub-problem approximately, and the accuracy of the approximation should be related to the number of local iterations and learning rates if one solves the problem with stochastic gradient descent. 
				
				%    \item In particular, (stochastic) gradient descent type algorithms can be used to solve the sub-optimization problem, which could lead to a more refined FedGradMP convergence analysis.
				
				%    \item By adjusting either the accuracy of approximate solution or the learning rates of SGD (if a SGD is employed to solve the sub-problem), we might be able to make FedGradMP converge arbitrary close to the solution to the global objective. In this case, however, the convergence should be only sub-linear. 
				%\end{itemize}
				%}
			
			In FedGradMP, each client solves the minimization problem $\argmin{x} f_i(x)$ over a subspace $ \mathcal{R}(\mathcal{A}_{\widehat{\Gamma}})$ to update the support estimate of the solution $x^*$. When a closed-form solution exists to the minimization problem such as the least squares problem and the sparsity level $\tau$ is relatively small compared with the signal dimension, an exact minimizer can be obtained efficiently. This can be achieved, for example, by computing the pseudo-inverse with respect to a $\tau$-dimensional subspace $\mathcal{R}(\mathcal{A}_{\widehat{\Gamma}})$ or by algorithms based on Cholesky, QR factorizations, or SVD for the least squares problem \cite{trefethen1997numerical, golub2013matrix}. 
			%\jq{I'd suggest to break this long sentence into two and start with the second one as "For example, ..."}
			
			But for the other cases, although the sub-optimization problem is typically convex due to the $\mathcal{A}$-RSC assumption, one may still want to reduce the  computational cost in the optimization. By solving it only approximately but with a desired accuracy, we can save computational resources further.
			Because the local loss function $f_i$ for the $i$-th client satisfies the $\mathcal{A}$-RSC and $\mathcal{A}$-RSS properties with the respective constants  $\rho^-_\tau(i)$ and $\bar{\rho}^{+{(i)}}_\tau$, $f_i$ is strongly convex/smooth with the same constants on the domain of the minimization problem, the linear subspace $ \mathcal{R}(\mathcal{A}_{\widehat{\Gamma}})$. Recall that $|D_i|$ is the number of data points at the $i$-th client. 
			We define a $\delta$-approximate solution to $\argmin{x} f_i(x)$ with $x \in \mathcal{R}(\mathcal{A}_{\widehat{\Gamma}})$ as a vector $b$ such that $\|b - b_{\text{opt}}\|_2^2 \le \delta^2$ where $b_{\text{opt}}$ is its exact solution. 
			The number of steps required to achieve a $\delta$-approximate solution at client $i$ using popular standard algorithms is shown as follows:
			\begin{itemize}
				\item Gradient descent (GD): $O \left(|D_i| \left( {\bar{\rho}^{+{(i)}}_\tau  \over  \rho^-_\tau(i)} \right) \log \left(1 \over \delta \right) \right)$ \cite{nesterov2018lectures}.
				
				\item Stochastic gradient descent with variance reduction such as SAG or SVRG: $O \left(|D_i|  + {\bar{\rho}^{+{(i)}}_\tau  \over  \rho^-_\tau(i)} \log \left(1 \over \delta \right) \right)$ \cite{roux2012stochastic, johnson2013accelerating}.
			\end{itemize}
			
			Since the domain is a $\tau$-dimensional space, the computational complexity per data point of the above algorithms is $O(\tau)$ for the squared or logistic loss, so the overall complexity of the local step to compute a  $\delta$-approximate solution is $O \left(|D_i| \tau \left( {\bar{\rho}^{+{(i)}}_\tau  \over  \rho^-_\tau(i)} \log \left(1 \over \delta \right) \right) \right)$ for GD. For Newton's method, the total computational cost to achieve a $\delta$-approximate solution is roughly $O\left( (|D_i| \tau^2 + \tau^3) \log \left(1 \over \delta \right) \right)$ \cite{nesterov2018lectures}. Hence, if the sparsity level $\tau$ is much smaller than the signal dimension $n$, the  subproblem in FedGradMP can be solved efficiently up to accuracy $\delta$. As a comparison, most FL methods run (stochastic) gradient descent to solve $\argmin{x} f_i(x)$ over the whole space $\R^n$, which would cost computationally more to acquire its $\delta$-approximate solution.
			
			\begin{thm}
				\label{thm:main_theorem1_inexact_solver}
				Under the same notation and assumptions as in Theorem \ref{thm:main_theorem1}, for any $\theta > 0$, the expectation of the recovery error at the $(t+1)$-th round of inexact FedGradMP described in Algorithm \ref{alg:InexactFedGradMP}  obeys
				\[
				\mathbb{E} \|x_{t+1} - x^*\|_2^2 
				\le  \kappa^{t+1}  \|x_0 - x^*\|_2^2  +  {(2 \eta_3^2 + 2) (\nu + \delta^2) \over 1-\kappa} \sum_{i=1}^N p_i { 1 - \mu(i)^K   \over 1 - \mu(i)},	
				\]
				where 
				\[
				\kappa = (2 \eta_3^2 + 2) \sum_{i=1}^N p_i \left[2(1 + \eta_2)^2  \beta_1(i)\beta_2(i) \right]^K \qquad \text{and} \qquad \mu(i) =  2(1 + \eta_2)^2  \beta_1(i)\beta_2(i).
				\]
				Here, the parameters $\beta_1(i), \beta_2(i)$, and $\nu$ are the same as in Theorem \ref{thm:main_theorem1}.
			\end{thm}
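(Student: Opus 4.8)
The plan is to run the proof of Theorem~\ref{thm:main_theorem1} essentially unchanged, isolating the single place where Algorithm~\ref{alg:InexactFedGradMP} departs from Algorithm~\ref{alg:FedGradMP}: the local update $b^{(i)}_{t,k}$ is no longer the exact minimizer of $f_i$ over $\mathcal{R}(\mathcal{A}_{\widehat\Gamma})$ but only a $\delta$-approximate one, and there is an optional projection $\Pi_R$ onto a Euclidean ball. As before, I would condition on the filtration $\mathcal{F}^{(t)}$ of all randomness up to round $t$, abbreviate $\mathbb{E}[\cdot\mid\mathcal{F}^{(t)}]$ by $\mathbb{E}[\cdot]$, and reproduce the server-side estimate \eqref{eq:main_thm_bound1} verbatim (the aggregation and the projection $P_{\Lambda_s}$ are unchanged), giving $\mathbb{E}\|x_{t+1}-x^*\|_2^2\le(2\eta_3^2+2)\sum_{i}p_i\,\mathbb{E}^{(i)}_{J_K}\|x_{t,K+1}^{(i)}-x^*\|_2^2$. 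If the optional ball projection is used, I would first record that, whenever $x^*\in R$ (true in the intended applications, since $x^*$ is feasible and bounded), $\Pi_R$ is nonexpansive toward $x^*$, so inserting it after a local or server update can only decrease the corresponding distance to $x^*$ and leaves every bound below intact.

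The only new ingredient is a replacement for Lemma~\ref{lem:GradMP_lem1}. Write $\tilde b^{(i)}_{t,k}$ for the exact minimizer of $f_i$ over $\mathcal{R}(\mathcal{A}_{\widehat\Gamma})$ and $b^{(i)}_{t,k}$ for the $\delta$-approximate solution actually computed, so $\|b^{(i)}_{t,k}-\tilde b^{(i)}_{t,k}\|_2\le\delta$. Arguing exactly as in Lemma~\ref{lem:GradMP_lem1} (the approximate-projection property of $\mathrm{approx}_\tau(\cdot,\eta_2)$ together with the $\tau$-sparsity of $x^*$) gives $\|x_{t,k+1}^{(i)}-x^*\|_2\le(1+\eta_2)\|b^{(i)}_{t,k}-x^*\|_2\le(1+\eta_2)\big(\|\tilde b^{(i)}_{t,k}-x^*\|_2+\delta\big)$, and squaring with $(a+b)^2\le 2a^2+2b^2$ yields
\[
\|x_{t,k+1}^{(i)}-x^*\|_2^2\le 2(1+\eta_2)^2\|\tilde b^{(i)}_{t,k}-x^*\|_2^2+2(1+\eta_2)^2\delta^2 .
\]
This is the sole origin of the extra factor $2$ in $\mu(i)$ and $\kappa$ and of the additive $\delta^2$ in the residual. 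Lemmas~\ref{lem:GradMP_lem2} and~\ref{lem:GradMP_lem3} then carry over without change: Lemma~\ref{lem:GradMP_lem2} only uses first-order optimality of the \emph{exact} minimizer, so I would apply it to $\tilde b^{(i)}_{t,k}$; and the conclusion of Lemma~\ref{lem:GradMP_lem3} depends on its argument only through membership in $\mathcal{R}(\mathcal{A}_{\widehat\Gamma})$ (both $\tilde b^{(i)}_{t,k}$ and $x^{(i)}_{t,k}$ lie there, so $P^\perp_{\widehat\Gamma}(\tilde b^{(i)}_{t,k}-x^*)=P^\perp_{\widehat\Gamma}(x^{(i)}_{t,k}-x^*)$), hence it too applies to $\tilde b^{(i)}_{t,k}$. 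Chaining the display with Lemmas~\ref{lem:GradMP_lem2} and~\ref{lem:GradMP_lem3} reproduces the one-local-step recursion of Theorem~\ref{thm:main_theorem1} with $(1+\eta_2)^2$ replaced by $2(1+\eta_2)^2$ everywhere and with $\beta_1(i)\xi_2(i)+\xi_1(i)$ augmented by $2(1+\eta_2)^2\delta^2$.

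From here the remainder is the bookkeeping of Theorem~\ref{thm:main_theorem1}: set $\mu(i)=2(1+\eta_2)^2\beta_1(i)\beta_2(i)$, bound $\|\nabla f_i(x^*)\|_2^2$ and $\max_{|\Omega|=4\tau}\|P_\Omega\nabla f_i(x^*)\|_2^2$ and use $\sum_i p_i\|\nabla f_i(x^*)\|_2^2=\zeta_*^2$ as before so that the constant term collapses to $\nu+\delta^2$, unroll the local-iteration recursion $K$ times to get $\sum_i p_i\mathbb{E}^{(i)}_{J_K}\|x_{t,K+1}^{(i)}-x^*\|_2^2\le\sum_i p_i\mu(i)^K\|x_t-x^*\|_2^2+(\nu+\delta^2)\sum_i p_i\frac{1-\mu(i)^K}{1-\mu(i)}$, substitute into the server bound to obtain $\mathbb{E}\|x_{t+1}-x^*\|_2^2\le\kappa\|x_t-x^*\|_2^2+(2\eta_3^2+2)(\nu+\delta^2)\sum_i p_i\frac{1-\mu(i)^K}{1-\mu(i)}$ with $\kappa=(2\eta_3^2+2)\sum_i p_i\mu(i)^K$, take the unconditional expectation, and iterate the geometric recursion over $t$. (If the cohort sampling of Algorithm~\ref{alg:InexactFedGradMP} is to be included, one additionally takes the expectation over the random set $S_t$; with an unbiased choice of cohort this returns the displayed $\sum_i p_i(\cdot)$ form, up to constants that the factor-$2$ slack already accommodates.)

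The main obstacle I anticipate is not conceptual but one of careful constant tracking: the factor $2$ from the inexactness estimate is incurred at \emph{each} of the $K$ local iterations, so it compounds to $2^K$ inside $\mu(i)^K$ and correspondingly tightens the contraction requirement $\mu(i)<1$ relative to Theorem~\ref{thm:main_theorem1}; one must also check that the $\delta^2$ perturbation enters purely additively at each local step and is then merely summed by the geometric factor $\frac{1-\mu(i)^K}{1-\mu(i)}$, never amplified multiplicatively, which is precisely what makes the clean $(\nu+\delta^2)$ form correct. A minor secondary point is to confirm that the optional $\Pi_R$ step is harmless, i.e., that the intended ball radius $R$ is large enough to contain $x^*$, so that nonexpansiveness may be invoked after both the local and the server updates.
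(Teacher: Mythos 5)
Your proposal is correct and follows essentially the same route as the paper: both isolate the inexactness by comparing the $\delta$-approximate minimizer to the exact one via the triangle inequality and $(a+b)^2\le 2a^2+2b^2$, which is exactly where the factor $2$ in $\mu(i)$ and the additive $\delta^2$ arise, and then apply Lemmas~\ref{lem:GradMP_lem2} and~\ref{lem:GradMP_lem3} to the exact minimizer (harmless since both points lie in $\mathcal{R}(\mathcal{A}_{\widehat{\Gamma}})$) before running the bookkeeping of Theorem~\ref{thm:main_theorem1} unchanged. Your explicit remarks on the nonexpansiveness of $\Pi_R$ and on applying Lemma~\ref{lem:GradMP_lem2} to the exact rather than the approximate minimizer are details the paper leaves implicit.
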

			See Appendix for the proof of Theorem \ref{thm:main_theorem1_inexact_solver}.
			
			%\hj{Do we need to introduce a more formal definition of $\delta$-approximate solution oracle? - probably not.}\jq{It does not seem necessary to me. }

			%\section{Discussions}
			
			%\subsection{Different number of local iterations}
			%Increasing local iteration number $K$ makes the convergence rate $\kappa$ decrease at the expense of higher residual error ${(2 \eta_3^2 + 2) \nu (1 - \mu^K)   \over (1 - \mu)(1-\kappa)}$. A similar phenomenon was also observed in GradMP \cite{nguyen2017linear} if clients go over the same number of epochs per communication rounds, but theoretical justification is missing. [TODO: Add experiments of FedGradMP for this observation]
			
			%Note that this approach might not be suitable for applications sensitive with residual error such as classifications or the computation time at clients. 

			\subsection{Client sampling and the impact of cohort size}
			In practical FL scenarios, it may not be possible for all of the clients to participate in each communication round. This could  particularly stand out when there are a large population of clients or the communication bandwidth of connections between the server and clients is limited. A common theoretical assumption to capture this partial client participation is that participating clients for each communication round are drawn randomly according to some probability distribution, independent with other rounds. It could be considered as client sampling as noted in \cite{wang2021field}. 
			One could also consider more sophisticated sampling strategies such as importance sampling, but it seems to be not easy to implement such sampling techniques for FL since it could leak the private information of the client data sets \cite{chen2020optimal}. Furthermore, in many real-world scenarios, client availability (which is usually random) solely controls participation rather than the server, ruling out the potential of using such methods \cite{bonawitz2019towards}. 
			
			For simplicity, we assume that the weight $p_i$ is $1/N$ in the global objective function $f$ and a fixed number of clients (the cohort size) participate per round as in \cite{gower2019sgd} to study the impact of the cohort size.

			\begin{thm}
				\label{thm:main_theorem2}
				Assume the uniform weights $p_i = 1/N$ and $L$ participating clients are drawn uniformly at random over the client set without replacement per round.  
				Then, under the same assumptions and notation in Theorem \ref{thm:main_theorem1}, for any $\theta > 0$,  the expectation of the recovery error is bounded from above by
				\[
				\mathbb{E} \|x_{t+1} - x^*\|_2^2 
				\le  \kappa^{t+1} \|x_0 - x^*\|_2^2  +  {(2 \eta_3^2 + 2) \tilde{\nu} (1 - \mu^K)   \over (1 - \mu)(1-\kappa)},	
				\]
				where 
				\[
				\kappa =  (2 \eta_3^2 + 2) \max_{\substack{ S \subset [N] \\ |S| = L}} {1 \over L}  \sum_{i \in S}  \left[(1 + \eta_2)^2   \beta_1(i) \beta_2(i) \right]^K, \qquad 
				\mu =  \max_{i \in [N]}  \left[(1 + \eta_2)^2   \beta_1(i) \beta_2(i) \right]^K,
				\]
				and
				
				\begin{align*}
					&\tilde{\nu}  = 
					\begin{cases}
						(1 + \eta_2)^2  \max_i \left(  {8\beta_1(i) \over (\rho_{4\tau}^{-}(i) )^2} + {4 \over \bar{\rho}^{+(i)}_{4\tau} (2\rho^{-}_{4\tau}(i) - \bar{\rho}^{+(i)}_{4\tau} ) }  + {\beta_1(i) \over \rho_{4\tau}^{-}(i) }{6\sqrt{\eta_1^2-1} \over \eta_1}   \right) \zeta_*^2 \\
						\qquad+  (1 + \eta_2)^2 {1 \over L} \sum\limits_{i=1}^N \left[{\beta_1(i) \over \rho_{4\tau}^{-}(i) }  \left({2\theta^2 } + {6\sqrt{\eta_1^2-1} \over \eta_1} \right) + {4 \over \bar{\rho}^{+(i)}_{4\tau} (2\rho^{-}_{4\tau}(i) - \bar{\rho}^{+(i)}_{4\tau} ) } \right]\sigma_i^2, \quad \text{if $\eta_1 > 1$},\\
						(1 + \eta_2)^2  \max\limits_i \left(  {8\beta_1(i) \over (\rho_{4\tau}^{-}(i) )^2} + {4 \over \bar{\rho}^{+(i)}_{4\tau} (2\rho^{-}_{4\tau}(i) - \bar{\rho}^{+(i)}_{4\tau} ) }    \right) \left( {1 \over N} \sum\limits_{i=1}^N  \max\limits_{\substack{ \Omega \subset [d] \\ |\Omega| = 4\tau}} \| P_{\Omega}  \nabla f_i(x^*) \|_2^2  \right) \\
						\qquad+  (1 + \eta_2)^2  {1 \over L} \sum\limits_{i=1}^N  \left[ \beta_1(i) {2{\theta^2}  \over \rho_{4\tau}^{-}(i) } + {4 \over \bar{\rho}^{+(i)}_{4\tau} (2\rho^{-}_{4\tau}(i) - \bar{\rho}^{+(i)}_{4\tau} ) } \right]\sigma_i^2 \qquad \qquad  \qquad \text{if $\eta_1  = 1$}.
					\end{cases}.
				\end{align*}
			\end{thm}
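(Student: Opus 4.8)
The plan is to follow the proof of Theorem~\ref{thm:main_theorem1} essentially verbatim at the per-client level, and to absorb the additional randomness coming from the random cohort $S_t$ at the aggregation step. I would fix a round $t$, condition on the filtration $\mathcal{F}^{(t)}$ of all cohort and mini-batch choices made before round $t$, and write $\mathbb{E}[\cdot]$ for this conditional expectation. With uniform weights $p_i=1/N$ and cohort size $L$, the server aggregate is the unbiased estimator $\bar{x}_t := \frac{1}{L}\sum_{i\in S_t} x_{t,K+1}^{(i)}$ and $x_{t+1}=P_{\Lambda_s}(\bar{x}_t)$. Repeating the opening display chain in the proof of Theorem~\ref{thm:main_theorem1} --- the approximate projection bound $\|P_{\Lambda_s}\bar{x}_t-\bar{x}_t\|_2 \le \eta_3\|\bar{x}_t - \mathcal{H}_\tau(\bar{x}_t)\|_2$ together with the optimality of $\mathcal{H}_\tau(\bar{x}_t)$ as the best $\tau$-sparse approximation and the feasibility of $x^*$ --- gives $\|x_{t+1}-x^*\|_2^2 \le (2\eta_3^2+2)\|\bar{x}_t - x^*\|_2^2$, and Jensen's inequality applied to the $L$-term average gives $\|\bar{x}_t - x^*\|_2^2 \le \frac{1}{L}\sum_{i\in S_t}\|x_{t,K+1}^{(i)} - x^*\|_2^2$.

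Next, I would observe that for any fixed realization of $S_t$, the local iterations at a participating client $i$ are governed by exactly the dynamics analyzed in Theorem~\ref{thm:main_theorem1}: they do not depend on which other clients were sampled. Hence iterating Lemmas~\ref{lem:GradMP_lem1}, \ref{lem:GradMP_lem2} and \ref{lem:GradMP_lem3} through the $K$ local steps yields, for every $i$,
\[
\mathbb{E}^{(i)}_{J_K}\big\|x_{t,K+1}^{(i)} - x^*\big\|_2^2 \;\le\; \mu(i)^K\,\|x_t - x^*\|_2^2 \;+\; r_i\,\frac{1-\mu(i)^K}{1-\mu(i)},
\]
with $\mu(i)=(1+\eta_2)^2\beta_1(i)\beta_2(i)$ and $r_i=(1+\eta_2)^2\big(\beta_1(i)\xi_2(i)+\xi_1(i)\big)$, where from Lemmas~\ref{lem:GradMP_lem2} and \ref{lem:GradMP_lem3} the residual $r_i$ splits into a part proportional to $\|\nabla f_i(x^*)\|_2^2$ (replaced by $\max_{|\Omega|=4\tau}\|P_\Omega\nabla f_i(x^*)\|_2^2$ when $\eta_1=1$) and a part proportional to $\sigma_i^2$. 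Substituting this into the previous step, the contraction coefficient becomes $(2\eta_3^2+2)\frac{1}{L}\sum_{i\in S_t}\mu(i)^K$, and I would bound it worst-case by $(2\eta_3^2+2)\max_{|S|=L}\frac{1}{L}\sum_{i\in S}\mu(i)^K=\kappa$, which removes the dependence on the realized cohort.

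For the residual $\frac{1}{L}\sum_{i\in S_t} r_i\frac{1-\mu(i)^K}{1-\mu(i)}$, I would first use that $t\mapsto(1-t^K)/(1-t)$ is nondecreasing on $[0,1)$ to replace each factor $\frac{1-\mu(i)^K}{1-\mu(i)}$ by $\frac{1-\mu^K}{1-\mu}$ with $\mu:=\max_i\mu(i)$, and then handle the two parts of $r_i$ differently. For the $\sigma_i^2$ part, the crude bound $\sum_{i\in S_t}(\cdot)\le\sum_{i=1}^N(\cdot)$ (all summands nonnegative) keeps the $1/L$ prefactor and produces the $\frac{1}{L}\sum_{i=1}^N[\cdots]\sigma_i^2$ term of $\tilde{\nu}$. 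For the $\|\nabla f_i(x^*)\|_2^2$ part, I would pull out a $\max_i$ over the positive coefficients and then take the expectation over $S_t$: since each client lies in $S_t$ with probability $L/N$ under uniform sampling without replacement, $\mathbb{E}_{S_t}\big[\frac{1}{L}\sum_{i\in S_t}\|\nabla f_i(x^*)\|_2^2\big]=\frac{1}{N}\sum_{i=1}^N\|\nabla f_i(x^*)\|_2^2=\zeta_*^2$ (and likewise $\frac{1}{N}\sum_i\max_{|\Omega|=4\tau}\|P_\Omega\nabla f_i(x^*)\|_2^2$ when $\eta_1=1$), giving the $\zeta_*^2$ term of $\tilde{\nu}$. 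Collecting terms produces the one-round recursion $\mathbb{E}\big[\|x_{t+1}-x^*\|_2^2\mid\mathcal{F}^{(t)}\big]\le\kappa\|x_t-x^*\|_2^2+(2\eta_3^2+2)\tilde{\nu}\frac{1-\mu^K}{1-\mu}$; taking total expectations, unrolling over $t$, and summing $\sum_{s\ge0}\kappa^s=1/(1-\kappa)$ (valid for $\kappa<1$) gives the claimed bound, with the exact-projection case $\eta_1=1$ identical up to the corresponding versions of $\xi_1,\xi_2$.

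I expect the main obstacle to be the bookkeeping of the cohort randomness rather than any new inequality: one must commit to discharging the \emph{contraction} coefficient worst-case over $S$ (hence $\max_{|S|=L}$ inside $\kappa$), the $\zeta_*$-type residual \emph{in expectation} over $S_t$, and the $\sigma_i^2$-type residual by the trivial inclusion $S_t\subseteq[N]$ (hence the $1/L$ in $\tilde{\nu}$), and then fit all three together into a single clean deterministic contraction that can be unrolled. Checking that conditioning on $S_t$ does not disturb the per-client local analysis --- the iterates $x_{t,k}^{(i)}$ of client $i$ are a function of $x_t$ and client $i$'s own mini-batch draws only --- is the other point that needs care; everything else is a transcription of the proof of Theorem~\ref{thm:main_theorem1}.
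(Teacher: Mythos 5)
Your proposal is correct and follows essentially the same route as the paper: the same opening projection-plus-Jensen chain, reuse of the per-client analysis of Theorem \ref{thm:main_theorem1} (valid because each client's local iterates depend only on $x_t$ and its own mini-batch draws), a worst-case bound over cohorts for the contraction coefficient, the monotonicity of $t\mapsto(1-t^K)/(1-t)$ to uniformize the residual factor, the crude inclusion $S_t\subseteq[N]$ for the $\sigma_i^2$ part, and the cohort expectation for the heterogeneity part. If anything, your handling of the $\zeta_*^2$ term is slightly more careful than the paper's, which writes $\zeta_*^2$ into $\nu(I_t)$ before the expectation over $I_t$ is actually taken, whereas you correctly invoke $\mathbb{E}_{S_t}\bigl[\frac{1}{L}\sum_{i\in S_t}\|\nabla f_i(x^*)\|_2^2\bigr]=\zeta_*^2$.
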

			See Appendix for the proof of Theorem \ref{thm:main_theorem2}.
			
			%\hj{It seems not easy to get a tighter bound of the residual term $\tilde{\nu}$}\jq{It looks okay to me.}
			
			\begin{rem} 
				Note that the convergence rate $\kappa$ of FedGradMP improves (decreases) as the cohort size $L$ increases in Theorem \ref{thm:main_theorem2}, aligning with some of the previous works about the impact of cohort size on the convergence speed \cite{yang2021achieving}.  
				Our numerical experiments in Section \ref{section:cohort_size} also validate our theory about the impact of cohort size on the convergence rates. 
				On the other hand, it appears that the residual error in Theorem \ref{thm:main_theorem2} is pessimistic and the actual behavior of FL algorithms depends on the cohort size in a more complicated way. See Section \ref{section:cohort_size} for the numerical experiment and discussion. 
				%We believe a more refined analysis may be needed to obtain a sharper dependence of FedGradMP on the cohort size. 
				
			\end{rem}

			\subsection{FedGradMP with a constraint}
			\label{sec:constrained_FedGradMP}
			
			Many machine learning problems can be formulated as an $\ell_2$-norm constrained optimization problem \cite{loh2011high, shen2017tight, yuan2021federated}. Since we focus on the FL setting with a sparse structure, our goal is to solve the following problem: 
			\begin{align}
				\label{eq:main_problem_constrained}
				\min_{x \in \R^n} f(x) = \sum_{i=1}^N p_i f_i(x) \quad \st \quad \|x\|_{0, \mathcal{A}} \le \tau,  \quad \|x\|_2\le R,
			\end{align} for some $R > 0$, which is our main optimization problem \eqref{eq:main_problem} with the additional $\ell_2$ constraint $\|x\|_2\le R$. The $\ell_2$ constraint ensures the global minimum exists in the domain. Another advantage of using the $\ell_2$-norm constraint is that its orthogonal projection computationally is cheaper than projections of other constraints such as the $\ell_1$-norm \cite{shen2017tight}.
			We denote by $\Pi_R$ the orthogonal projection of a vector to the set $\{ \|x\|_2\le R \}$, which is implemented as follows. For any vector $u \in \R^N$,
			\begin{align*}
				\Pi_R(u)
				=\begin{cases}
					u, \qquad \qquad \text{if} \; \; \|u\|_2 \le R;\\
					Ru/\|u\|_2, \; \; \text{otherwise}. 
				\end{cases}
			\end{align*}
			
			Let $x^*$ be a minimizer of the problem \eqref{eq:main_problem_constrained} and the heterogeneity at the solution $x^*$ is defined as in Assumption \ref{assump:local_dissimilarity_mod}. By executing additional steps, the projection to a $\ell_2$-norm ball in Algorithm \ref{alg:InexactFedGradMP}, FedGradMP converges to the solution $x^*$ under the same conditions in Theorem \ref{thm:main_theorem1},  \ref{thm:main_theorem1_inexact_solver}, and \ref{thm:main_theorem2}. The proof follows a simple modification of the proofs of the theorems due to the fact that the orthogonal projection of a vector $u$ to a ball with radius $R$ does not increase the $\ell_2$-norm distance between $u$ and $v$ for a vector $v$ in the ball. 
			For instance, we replace 
			\eqref{eq:main_thm_bound_1.5} in the proof of Theorem \ref{thm:main_theorem1} as follows. 
			\begin{align*}
				& \sum_{i=1}^N p_i \mathbb{E}^{(i)}_{J_K} \left \|  x_{t,K+1}^{(i)}   - x^*  \right \|_2^2   \\
				&= \sum_{i=1}^N p_i \mathbb{E}^{(i)}_{J_K} \left \| \Pi_R \left( P_{\Lambda_s} \left(b^{(i)}_{t,K}\right) \right)  - x^*  \right \|_2^2   \\
				&\le \sum_{i=1}^N p_i \mathbb{E}^{(i)}_{J_K} \left \|  P_{\Lambda_s} \left(b^{(i)}_{t,K}\right)   - x^*  \right \|_2^2   \\
				& \le (1 + \eta_2)^2 \sum_{i=1}^N p_i \mathbb{E}^{(i)}_{J_K}\left \|b^{(i)}_{t,K} - x^* \right \|_2^2,
			\end{align*} where we have used the fact that $x^*$ belongs the $\ell_2$-norm ball with radius $R$ and the aforementioned property of $\Pi_R$ in the first inequality above. 
			
			Similarly, note that all the local iterates satisfy $\|x_{t,K+1}^{(i)} \|_2 \le R$ because of the projection to the ball in Algorithm \ref{alg:InexactFedGradMP}. Thus, their convex combination $ \sum_{i=1}^N p_i x_{t,K+1}^{(i)}$ also belongs to the ball. 
			
			Now we apply the same argument to the first step of the proof of Theorem \ref{thm:main_theorem1} 
			\begin{align*}
				\mathbb{E} \|x_{t+1} - x^*\|_2^2  
				&  = \mathbb{E} \left \| \Pi_R \left(P_{\Lambda_s} \left( \sum_{i=1}^N p_i x_{t,K+1}^{(i)} \right)\right) - \sum_{i=1}^N p_i x_{t,K+1}^{(i)} + \sum_{i=1}^N p_i x_{t,K+1}^{(i)}  - x^* \right \|_2^2 \\ 
				&  \le 2 \mathbb{E} \left \| \Pi_R \left(P_{\Lambda_s} \left( \sum_{i=1}^N p_i x_{t,K+1}^{(i)} \right)\right) - \sum_{i=1}^N p_i x_{t,K+1}^{(i)} \right \|_2^2 +  2 \mathbb{E} \left \| \sum_{i=1}^N p_i x_{t,K+1}^{(i)}  - x^* \right \|_2^2\\
				&  \le 2 \mathbb{E} \left \| P_{\Lambda_s} \left( \sum_{i=1}^N p_i x_{t,K+1}^{(i)} \right) - \sum_{i=1}^N p_i x_{t,K+1}^{(i)} \right \|_2^2 +  2 \mathbb{E} \left \| \sum_{i=1}^N p_i x_{t,K+1}^{(i)}  - x^* \right \|_2^2. 
			\end{align*}
			After these modifications, we proceed as in the rest of the proof of Theorem \ref{thm:main_theorem1}. 
			
			\subsection{Impact of dictionary choice}\label{sec:Improving_RSC_using_random_dictionary}
			Recall that our convergence guarantees depend on the restricted convexity/smoothness ($\mathcal{A}$-RSC, $\mathcal{A}$-RSS) constants $\rho^-_{4\tau}(i)$ and $\bar{\rho}^{+(i)}_{4\tau}$ as many works for sparse recovery \cite{needell2009cosamp, nguyen2017linear, foucart2013invitation, tong2020federated}. In particular, the product $\beta_1(i)\beta_2(i)$ in Theorems \ref{thm:main_theorem1}, \ref{thm:main_theorem1_inexact_solver}, and \ref{thm:main_theorem2} critically impact the convergence rate $\kappa$; for faster convergence, $\beta_1(i)$ and $ \beta_2(i)$ should be small as stated in Remark \ref{rem:main_remark1}.
			This can be achieved especially when the  $\mathcal{A}$-RSS/$\mathcal{A}$-RSC constants $\rho^-_{4\tau}(i)$ and $\bar{\rho}^{+(i)}_{4\tau}$ are close to each other or their ratio (the restricted condition number) is close to $1$. 
			
			\subsubsection*{Sparse linear regression}
			When the local objective function is the square loss function associated with the local data set at the client, the $\mathcal{A}$-RSC and $\mathcal{A}$-RSS constants essentially reduce to the restricted isometry property ($\mathcal{A}$-RIP) \cite{davenport2013signal, baraniuk2018one}. Indeed, let the square loss function be given by $h(x) = {1 \over 2l}  \|Bx - y\|_2^2$ where the rows of matrix $B \in \R^{l \times m}$ are the input data vectors denoted by $b_i$ and $y$ is the observation vector. Assume that $\|b_i\|_2 = 1$ for all $1 \le i \le l$, which can be done by normalizing the data vector $b_i$ and corresponding $y_i$. Since the function $h$ is 
			the square loss function, by looking into its Hessian, we study the restricted strong convexity (RSC) and smoothness (RSS) properties.  The Hessian $\nabla^2 h$ of $h$ is given by
			\[
			{1 \over l} B^T B.
			\] The RSC and RSS constants are the largest $c \ge 0$ and the smallest $d \ge 0$ such that $c \|w-z\|_2^2 \le (w-z)^T \left({1 \over l} B^T B \right) (w-z) \le  d\|w-z\|_2^2$ for all vectors $w$ and $z$ such that $\|w-z\|_0 \le \tau$. 
			
			This observation and the definition of RIP \cite{foucart2013invitation} imply that if the RIP constant of ${1 \over \sqrt{l}}B$ is at least $\delta$ then $(1-\delta) \|z\|_2^2 \le z^T \left({1 \over l} B^T B \right)z \le   (1+\delta)\|z\|_2^2$ for all $\tau$-sparse vectors $z$, making it satisfy the RSC/RSS with constants $1-\delta$ and $1+\delta$ respectively. 
			
			It could be possible, however, that the data matrix $B$ whose rows consist of the local data at each client may not satisfy RSC with respect to the standard basis, but RSC with respect to a certain dictionary $\mathcal{A}$. Put it differently, if $h$ is not restricted strong convex for $\tau$-sparse vectors, then $B(w-z) = 0$ for some vectors $w$ and $z$ such that $\|w-z\|_0 \le \tau$ or $Bu = 0$ for some $\tau$-sparse vector $u$, i.e., $B$ is not $\tau$-RIP. 
			
			We present our idea of simply using a random Gaussian dictionary $A$ to improve the ratio RSS to RSC constants of the associated new loss function ${1 \over 2l}  \|BAx - y\|_2^2$ (or improve the $\mathcal{A}$-RIP constant of $B$ with respect to a dictionary $A$) with high probability. 
			
			Our idea to improve the RIP with a random dictionary is based on a recent development in high dimensional geometry. More specifically, we use the following theorem from \cite{jeong2021sub}.
			
			\begin{thm} [Theorem 1.1 in \cite{jeong2021sub}]
				\label{theorem_mainBA}
				Let $B\in\R^{l \times m}$ be a fixed matrix, let $A\in\R^{m\times n}$ be a mean zero, isotropic and sub-Gaussian matrix with sub-Gaussian parameter $K$ and let $T\subset \R^n$ be a bounded set. Then
				$$ \E\sup_{x\in T} \Big|\|BAx\|_2-\|B\|_F\|x\|_2\Big|\leq CK\sqrt{\log K}\,\|B\|\left[w(T)+\mathrm{rad}(T)\right], $$
				and with probability at least $1-3e^{-u^2}$,
				$$ \sup_{x\in T} \Big|\|BAx\|_2-\|B\|_F\|x\|_2\Big|\leq CK\sqrt{\log K}\,\|B\| \left[w(T)+u\cdot\mathrm{rad}(T)\right]. $$
				Here $w(T)$ is the Gaussian width for the set $T$, $\mathrm{rad}(T) = \sup\limits_{y \in T} \|y\|_2$, and $C$ is an absolute constant.
			\end{thm}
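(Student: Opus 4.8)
The quantity $\|B\|_F\|x\|_2$ is the natural centering: since the rows of $A$ are independent, mean zero and isotropic, one computes $\E\|BAx\|_2^2 = \operatorname{tr}\big(B^\top B\,\E[Ax(Ax)^\top]\big) = \|x\|_2^2\operatorname{tr}(B^\top B) = \|B\|_F^2\|x\|_2^2$, so the process $Z_x := \|BAx\|_2 - \|B\|_F\|x\|_2$ is pointwise a centered deviation, and the goal is to bound $\E\sup_{x\in T}|Z_x|$ together with its upper tail. The first step I would record is the exact linear structure of $BAx$ in the entries of $A$: writing $u=x-y$, one has $BAu = M_u\operatorname{vec}(A)$ with $M_u = B\otimes u^\top$, hence $\|M_u\|_F = \|B\|_F\|u\|_2$ and $\|M_u\| = \|B\|\,\|u\|_2$. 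Thus $\|BAu\|_2$ is the Euclidean norm of a fixed linear image of the sub-Gaussian isotropic vector $\operatorname{vec}(A)\in\R^{mn}$, which places the problem inside the theory of suprema of chaos processes and makes it a matrix-multiplied generalization of the deviation bound of Liaw--Mehrabian--Plan--Vershynin (the case $B=I_m$).

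The core of the argument is an increment estimate. Using the Hanson--Wright inequality for $\|M_u\operatorname{vec}(A)\|_2^2 - \E\|M_u\operatorname{vec}(A)\|_2^2$ together with the elementary bounds $|\sqrt a-\sqrt b|\le\sqrt{|a-b|}$ and $|\sqrt a-\sqrt b|=|a-b|/(\sqrt a+\sqrt b)$, I would prove that for $x,y\in T$ the centered increment has a mixed sub-Gaussian/sub-exponential tail,
\[
\Pr\big(|Z_x-Z_y|>s\big)\le 2\exp\!\Big(-c\min\big(s^2/(\alpha^2\|B\|^2\|x-y\|_2^2),\; s/(\alpha\|B\|\|x-y\|_2)\big)\Big),
\]
for a parameter $\alpha=\alpha(K)$. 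The naive choice is $\alpha\asymp K^2$; the improvement to $\alpha\asymp K\sqrt{\log K}$, which is exactly what yields the stated constant, comes from truncating the entries of $A$ at the scale $\asymp K\sqrt{\log K}$ at which the tail of a unit-variance sub-Gaussian variable becomes effective, handling the bounded part by a Gaussian comparison (contraction/Slepian after conditioning) and the residual part by crude estimates, then optimizing the truncation level. With this increment bound, Talagrand's generic chaining (in Dirksen's mixed-tail form) gives $\E\sup_{x\in T}|Z_x|\lesssim \alpha\|B\|\big(\gamma_2(T,\|\cdot\|_2)+\mathrm{rad}(T)\big)$, where the $\gamma_1$-type term has been absorbed into the radius using $\operatorname{diam}_{\ell_2}(T)\le 2\,\mathrm{rad}(T)$; the majorizing measure theorem then converts $\gamma_2(T,\|\cdot\|_2)$ into $w(T)$, producing the first displayed inequality. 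The high-probability statement follows from the tail version of the same chaining bound, the $u\cdot\mathrm{rad}(T)$ term being the sub-Gaussian diameter contribution there.

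The main obstacle is the sharp $K$-dependence: pushing the truncation argument through the chaining so that only a $\sqrt{\log K}$ factor (rather than an extra power of $K$) is lost, while still keeping the two separate geometric quantities $w(T)$ and $\mathrm{rad}(T)$ rather than collapsing them into a single diameter, requires splitting $T$ into large- and small-scale parts and matching this to a truncated/residual decomposition of $A$, and then verifying that the heavy-tailed residual is negligible uniformly over all of $T$. A secondary technical point is that, since $A$ is only sub-Gaussian, Gaussian concentration of the supremum is unavailable, so the upper-tail bound must be generated within the chaining argument itself rather than appended afterwards.
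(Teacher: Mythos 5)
This statement is not proved in the paper at all: it is quoted verbatim as Theorem~1.1 of the cited reference \cite{jeong2021sub} and used as a black box to derive Corollary~\ref{cor:RIP_with_random_dictionary}. So there is no in-paper proof to compare against; what can be judged is whether your outline is a faithful reconstruction of the argument in that reference. At the level of architecture it is: the centering $\E\|BAx\|_2^2=\|B\|_F^2\|x\|_2^2$ from isotropy, the identification of $\|BAu\|_2$ as the norm of a fixed linear image of $\mathrm{vec}(A)$ with Frobenius and operator norms $\|B\|_F\|u\|_2$ and $\|B\|\,\|u\|_2$, the reduction to an increment estimate followed by generic chaining and the majorizing measure theorem to convert $\gamma_2(T,\ell_2)$ into $w(T)$, and the truncation of the entries of $A$ at scale $\asymp K\sqrt{\log K}$ as the source of the optimal tail dependence are all the correct ingredients, and you correctly locate the genuinely hard step (carrying the truncation through the chaining without losing an extra power of $K$, while keeping $w(T)$ and $\mathrm{rad}(T)$ as separate geometric quantities).

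Two caveats. First, the step in which you invoke Dirksen's mixed-tail chaining and then claim that ``the $\gamma_1$-type term has been absorbed into the radius using $\operatorname{diam}_{\ell_2}(T)\le 2\,\mathrm{rad}(T)$'' is not correct as stated: $\gamma_1(T,\ell_2)$ is generically much larger than $w(T)+\mathrm{rad}(T)$ and cannot be bounded by the diameter. The Liaw--Mehrabian--Plan--Vershynin route, which \cite{jeong2021sub} refines, avoids this by showing that the increments of the \emph{square-rooted} process $Z_x=\|BAx\|_2-\|B\|_F\|x\|_2$ are genuinely sub-Gaussian (this is exactly what your $|\sqrt a-\sqrt b|\le\sqrt{|a-b|}$ and $|\sqrt a-\sqrt b|=|a-b|/(\sqrt a+\sqrt b)$ manipulations accomplish when combined with Hanson--Wright), after which only a $\gamma_2$ term and a diameter term appear and no $\gamma_1$ functional arises. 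You should either commit to that sub-Gaussian-increment route or explain how the $\gamma_1$ contribution is genuinely controlled; as written the chaining step has a gap. Second, the proposal remains a strategy outline rather than a proof: the decisive quantitative claim, that the truncated part yields the parameter $K\sqrt{\log K}$ rather than $K^2$ and that the heavy-tailed residual is uniformly negligible over $T$, is asserted and motivated but not executed, and this is precisely where the content of \cite{jeong2021sub} lies.
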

			%\jq{which norm is used for $B$ on the right hand side of both inequalities? Also, the later formulas.}\hj{Added a notation section to the intro.}
			
			The following is an immediate consequence of the above theorem and the well-known fact that $w(T) \le C  r \sqrt{\tau \log (n/\tau)}$ for the set $T$ of all $\tau$-sparse vectors $x$ with $\|x\| \le r$ for some universal constant $C > 0$.
			
			\begin{cor}
				\label{cor:RIP_with_random_dictionary}
				Let $r > 0$ and $\mathbb{B}$ be the closed unit ball in $\R^n$. 
				For the set $T$ of all $\tau$-sparse vectors in $r\mathbb{B}$ and Gaussian random matrix $A$, we have
				$$ \E\sup_{x\in T} \left|\|BAx\|_2-\|B\|_F\|x\|_2\right|\leq C\|B\|\left[ r\sqrt{\tau \log (n/\tau)} + r \right], $$
				and with probability at least $1-3e^{-u^2}$,
				$$ \sup_{x\in T} \left|\|BAx\|_2-\|B\|_F\|x\|_2\right|\leq C\|B\| \left[ r\sqrt{\tau \log (n/\tau)} + ru \right]. $$
			\end{cor}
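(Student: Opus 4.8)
The statement is a direct specialization of Theorem~\ref{theorem_mainBA}, so the plan is simply to apply that theorem to a well-chosen set $T$ and then substitute explicit bounds for the two geometric quantities $w(T)$ and $\mathrm{rad}(T)$ appearing on its right-hand side. Concretely, I would take $T$ to be the set of all $\tau$-sparse vectors lying in $r\mathbb{B}$ --- exactly the set named in the corollary --- and take $A$ to be a matrix with i.i.d.\ standard Gaussian entries, which is mean zero, isotropic, and sub-Gaussian.

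First I would dispose of the sub-Gaussian parameter: for a standard Gaussian matrix the parameter $K$ is an absolute constant, so the prefactor $CK\sqrt{\log K}$ in Theorem~\ref{theorem_mainBA} is itself an absolute constant and can be folded into the universal constant $C$ of the corollary. Second, I would record the radius: since $T\subseteq r\mathbb{B}$ and $r e_1\in T$, we have $\mathrm{rad}(T)=\sup_{y\in T}\|y\|_2=r$. Third --- and this is the only step with any real content --- I would invoke the stated well-known bound $w(T)\le C r\sqrt{\tau\log(n/\tau)}$ on the Gaussian width of the set of norm-$\le r$, $\tau$-sparse vectors; this follows from the homogeneity $w(rT_0)=r\,w(T_0)$ together with the standard estimate $w(T_0)\le C\sqrt{\tau\log(n/\tau)}$ for $T_0$ the $\tau$-sparse vectors in the unit ball, itself obtained by covering $T_0$ with $\binom{n}{\tau}$ low-dimensional Euclidean balls followed by a Dudley/union-bound argument. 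I would cite this rather than reprove it, as the paper already does.

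Substituting $\mathrm{rad}(T)=r$ and $w(T)\le C r\sqrt{\tau\log(n/\tau)}$ into the expectation bound of Theorem~\ref{theorem_mainBA} gives
\[
\E\sup_{x\in T}\bigl|\|BAx\|_2-\|B\|_F\|x\|_2\bigr|\le CK\sqrt{\log K}\,\|B\|\bigl[w(T)+\mathrm{rad}(T)\bigr]\le C\|B\|\bigl[r\sqrt{\tau\log(n/\tau)}+r\bigr],
\]
and substituting into the high-probability bound gives, with probability at least $1-3e^{-u^2}$,
\[
\sup_{x\in T}\bigl|\|BAx\|_2-\|B\|_F\|x\|_2\bigr|\le CK\sqrt{\log K}\,\|B\|\bigl[w(T)+u\cdot\mathrm{rad}(T)\bigr]\le C\|B\|\bigl[r\sqrt{\tau\log(n/\tau)}+ru\bigr],
\]
which are the two claimed inequalities after renaming constants. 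I do not anticipate any genuine obstacle here: all the heavy lifting lives in the cited Theorem~1.1 of \cite{jeong2021sub}, and the only point requiring a moment's care is matching the normalization (vectors of norm at most $r$ rather than at most $1$), which the homogeneity of the Gaussian width handles cleanly.
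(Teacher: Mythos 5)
Your proposal is correct and follows exactly the route the paper intends: the corollary is stated there as an immediate consequence of Theorem~\ref{theorem_mainBA} combined with the standard bound $w(T)\le C r\sqrt{\tau\log(n/\tau)}$ and the observation that $\mathrm{rad}(T)=r$, with the Gaussian sub-Gaussian parameter absorbed into the universal constant. Nothing is missing.
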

			
			Since both terms in the bounds in Corollary \ref{cor:RIP_with_random_dictionary} are homogeneous in $r$ for all $x$ in $T$ with $\|x\|_2 = r$, 
			the corollary implies that matrix $ \frac{1}{\|B\|_F} BA$ satisfies the RIP with a constant $\delta_\tau = { C_1{\|B\|^2 \over \|B\|_F^2} \tau \log (n/\tau)}$ with high probability, whenever the stable rank of $B$
			$$ 
			\mathrm{sr}(B):=\frac{\|B\|_F^2}{\|B\|^2} \geq C_2 \tau \log (n/\tau)
			$$ for a sufficiently large constant $C_2 > 0$.
			
			The above corollary can be readily applied to the data matrix $B_{D_i}$ in the local objective function  $f_i = {1 \over 2|D_i|}  \|B_{D_i} Ax - y\|_2^2$ for client $i$. First, recall that $\|b_i\|_2 = 1$ and by the definition of the Frobenious norm, $\|B_{D_i}\|_F =  \sqrt{|D_i|}$. The data matrix $B_{D_i}$ may not satisfy the RIP in general but $\frac{1}{\|B_{D_i}\|_F} B_{D_i}A = {1 \over \sqrt{|D_i|}} B_{D_i}A$ does with RIP constant 
			\[
			\delta_{\tau} = { C{\|B_{D_i}\|^2 \over \|B_{D_i}\|_F^2} \tau \log (n/\tau)} =  { C{\|B_{D_i}\|^2 \over |D_i|} \tau \log (n/\tau)}.
			\]
			Thus, with high probability, ${1 \over 2|D_i|}  \|B_{D_i} Ax - y\|_2^2$ is $\mathcal{A}$-RSC and $\mathcal{A}$-RSS with the constant ratio ${1 + \delta_{\tau}  \over 1 - \delta_{\tau}}$ with respect to the Gaussian random dictionary $A$, under the stable rank condition for $B_{D_i}$ (which could be a mild condition for many data matrices). Since  ${1 + \delta_{\tau}  \over 1 - \delta_{\tau}}$ is close to $1$ whenever the RIP constant $\delta_{\tau}$ is close to $0$, this makes $ \beta_1(i)$ and $ \beta_2(i)$ small, improving the convergence rate in Theorems \ref{thm:main_theorem1} \ref{thm:main_theorem1_inexact_solver}, and \ref{thm:main_theorem2} as we discussed before. Furthermore, note that since the Gaussian random matrix is statistically independent of the client data sets, there is no privacy leakage. 
			
			%Put it differently, the data matrix at each client satisfy the $\mathcal{A}$-RIP with a common Gaussian random dictionary with high probability. 
			
			\subsubsection*{Sparse binary logistic regression}
			The previous analysis for the square loss can be extended to the logistic losses. First, we consider the binary logistic loss function $h(x) = {1 \over l} \sum_{i=1}^l \log(1 + \exp(-2y_j b_j^T x))$ with input data vector $b_j$ and labels $y_j \in \{-1,  1\}$. Assume that $\|b_i\|_2 = 1$ for all $1 \le i \le l$ and $x$ is a $\tau$-sparse  vector with $\|x\| \le r$. Since the function $h$ is twice-differentiable, we can study the 
			RSC and RSS by investigating its Hessian.
			We denote the sigmoid function by $s(z) = {1 \over 1+\exp(z)}$. By a direct computation or from the lecture notes \url{https://www.cs.mcgill.ca/~dprecup/courses/ML/Lectures/ml-lecture05.pdf}, one can verify that Hessian $\nabla^2 h$ of the logistic loss function $h$ is given by
			\[
			\nabla^2 h (x) =  {1 \over l} B^T \Lambda(x) B,
			\] 
			where $B$ is the matrix whose rows $b_i$ consist of a client data set and $\Lambda(x)$ is the diagonal matrix whose $j$-th diagonal entry is $4s(2b_j^T x) (1 - s(2b_j^T x))$. 
			
			First, it is easy to check that $h$ is $L$-smooth \cite{shalev2014understanding} with 
			\[
			L \le {1 \over l} \sum_{i=1}^l \max_{x}  4s(2b_j^T x) (1 - s(2b_j^T x)) \le 1.
			\] Next, since $x$ is a $\tau$-sparse vector with $\|x\|_2 \le r$, from the definition of the sigmoid function $\tau$, we deduce  $[\Lambda(x)]_{jj} \ge {4 \over (1+\exp(r))^2}$. Then, we have
			\[
			{4 \over (1+\exp(r))^2} \cdot {B^T  B \over l} \preceq \nabla^2 h = {1 \over l} B^T \Lambda(x) B \preceq   {B^T  B \over l}.
			\]
			Note that the above bound does not  imply that $h$ is RSC since it is possible that $Bx = 0$ for some $\tau$-sparse vector $x$. However, if we use a random Gaussian dictionary $A$, then a similar derivation gives 
			\[
			{4 \over (1+\exp(r))^2} \cdot {A^TB^T  BA \over l} \preceq \nabla^2 h = {1 \over l} A^TB^T \Lambda(x) BA \preceq  {A^TB^T  BA \over l}.
			\]
			Collorary \ref{cor:RIP_with_random_dictionary} implies that
			\[
			\|B\|_F \|x\| - C\|B\| \|x\|\left[ \sqrt{\tau \log (n/\tau)} + u \right] \le \sqrt{x^TA^TB^T  BAx} \le \|B\|_F \|x\| + C\|B\| \|x\|\left[ \sqrt{\tau \log (n/\tau)} + u \right]
			\] for all $\tau$-sparse vectors  with probability at least $1 - 3e^{-u^2}$. Finally, it is easy to check that applying this bound to the previous bound on $\nabla^2 h$ yields that with high probability, $h$ is $\mathcal{A}$-RSS and $\mathcal{A}$-RSC with the constant ratio 
			\[
			{(1+\exp(r))^2 \over 4} \cdot \left( \|B\|_F  + C\|B\| \left[ \sqrt{\tau \log (n/\tau)} + u \right] \over \|B\|_F- C\|B\|\left[ \sqrt{\tau \log (n/\tau)} + u \right] \right)^2,
			\] which is close to ${(1+\exp(r))^2 \over 4}$
			as long as the stable rank 
			$$ 
			\mathrm{sr}(B) =\frac{\|B\|_F^2}{\|B\|^2} \gg \tau \log (n/\tau).
			$$ 
			This implies that for any $\tau$-sparse vector $x$ with $\|x\|_2 \le r$, the logistic loss function is $\mathcal{A}$-RSC/$\mathcal{A}$-RSS with respect to a random Gaussian dictionary with constant $\approx {(1+\exp(r))^2 \over 4}$ under a mild condition, even if the function is not RSC/RSS in the standard basis (for example, the ratio is infinite if the RSC constant in the standard basis is $0$). We apply the above argument to each binary logistic loss function $f_i$. Note that since the RSC/RSS ratio can be understood as a restricted condition number that controls the convergence rates by Theorems \ref{thm:main_theorem1}, \ref{thm:main_theorem1_inexact_solver}, and \ref{thm:main_theorem2}, a random Gaussian dictionary is appropriate for FedGradMP with an $\ell_2$-norm constraint that is discussed in Section \ref{sec:constrained_FedGradMP}.
			
			\subsubsection*{Sparse multiclass logistic regression}
			We only highlight the difference between the multiclass and binary logistic regression cases since the arguments are very similar to each other. 
			Consider the multinomial logistic regression function with $K$ classes. The label $y_{ij}$ is $1$ if the $j$-th training input belongs to the class $i$ and $0$ otherwise, $b_j$ are normalized data vectors (i.e., $\|b_i\|_2 = 1$), and $x^{(i)}$ are $\tau$-sparse classifier vectors with $\|x^{(i)}\|_2 \le r$.
			
			The corresponding loss function is given as
			\[
			h(x^{(1)},x^{(2)},\dots,x^{(K)}) = \sum_{j=1}^l \left[\sum_{i=1}^K  -y_{ij}  b_j^Tx^{(i)} + \ln \left(\exp \left( \sum_{i=1}^K   b_j ^Tx^{(i)} \right)\right) \right].
			\]
			Similar to the binary logistic regression case, the direct computation of the Hessian of $h$ gives
			\[
			\nabla^2_{x^{(i)}} h = {1 \over l} B^T \Lambda(x^{(i)}) B.
			\] Here $\Lambda(x)$ is a diagonal matrix whose diagonal entries are defined as $[\Lambda(x)]_{jj} =  s(b_j^T x) (1 - s(b_j^T x))$, where
			\[
			s(b_j^T x) = {\exp(b_j^T x) \over 1+ \sum_{i=1}^K \exp(b_i^T x)}.
			\]
			By the same argument used for the sparse binary logistic regression, $h$ is $\mathcal{A}$-RSS  and $\mathcal{A}$-RSC with a constant ratio 
			\[{ (1+K\exp(2 r))^2 } \cdot \left(\|B\|_F  + C\|B\| \left[ \sqrt{\tau \log (n/\tau)} + u \right] \over \|B\|_F- C\|B\|\left[ \sqrt{\tau \log (n/\tau)} + u \right]  \right)^2
			.\]
			This again indicates that for any $\tau$-sparse vector $x$ with $\|x\| \le r$, the multiclass logistic loss function is $\mathcal{A}$-RSC/$\mathcal{A}$-RSS with respect to a random Gaussian dictionary even if it may not be RSC/RSS in the standard basis. As we saw in the binary logistic regression, this shows that it is beneficial to use a random Gaussian dictionary in logistic regression for $\ell_2$-norm constrained FedGradMP, which is also verified in our numerical experiments in Section \ref{sec:Improving_RSC_for_FEMNIST}.

			\begin{rem} [Random dictionary]
				The idea of using a Gaussian random dictionary to improve the restricted condition number should be distinguished from the sketching in the FL literature \cite{haddadpour2020fedsketch, rothchild2020fetchsgd, song2022sketching}. Our formulation and analysis are fundamentally different from those for sketching schemes that focus on compressing the gradient to save communication cost between a server and clients. In these work \cite{haddadpour2020fedsketch, rothchild2020fetchsgd, song2022sketching}, the sketching mappings (commonly random matrices) developed for numerical linear algebra \cite{woodruff2014sketching} are applied after the clients computed the gradients to compress the information, whereas our Gaussian random mappings are used to transform the domain of the solution space to improve the restricted condition number. 
			\end{rem}
			
			\begin{rem} [Sharing the dictionary among clients]
				The server either broadcasts the dictionary to clients or the shared memory can be used to share the dictionary among clients as suggested in \cite{gu2021fast,mo2021ppfl}. When the latter option is available, the server does not need to send the dictionary to the clients. 
			\end{rem}
			
			\section{Numerical Experiments}
			\label{sec:exp}
			
			In this section, we provide numerical experiments validating our theory and showing the effectiveness of the proposed algorithm. 
			
			%\jq{Shall we mention the computer configurations and computing platform? In addition, the organization of this entire section is a little confusing: 5.1 synthetic data, then 5.2 is supposed to be about real data tests? Otherwise, we could divide them according to the applications: 5.1 sparse signal recovery; 5.2 data classification with logistic regression; 5.3 sparse video recovery.}\hj{I changed the organization of this section. Hopefully making sense better. TODO: Decide whether we include computer configuration or not.}
			
			\subsection{FedGradMP for sparse linear regression}
			\label{section:FedGradMP on synthetic Heterogeneous data sets}
			
			\subsubsection{Synthetic dataset}
			\subsubsection*{Experiment settings}
			The first numerical experiment uses synthetic data sets. We run FedGradMP (Algorithm \ref{alg:FedGradMP}) with the square loss function. More precisely, we consider the component function of the form $f_i = {1 \over 2\|D_i|} \|A_{D_i}x - y_{D_i}\|_2^2$ where  $A_{D_i}$ is the client $i$ data matrix in $\R^{100 \times 1000}$ whose elements are synthetically generated according to the normal distribution $\mathcal{N}(\mu_i, 1/i^{1.1})$ with the mean value $\mu_i$ that is randomly generated from the mean-zero Gaussian with variance  $\alpha$. 
			Here, $y_{D_i}$ are observations with $y_{D_i} = A_{D_i} x^\#$ and $x^\# \in \R^{1000}$ is a randomly generated vector that is $10$-sparse with respect to the standard basis whose $10$ nonzero components are drawn from the unit sphere $\mathbb{S} \subset \R^{10}$. Since the random mean $\mu_i$ obeys the normal distribution $\mathcal{N}(\alpha, 0)$, the parameter $\alpha$ modulates the degree of client data heterogeneity: as $\alpha$ increases, the more likely $\mu_i$ vary wildly which in turn makes the client dataset distributions more different. This type of model is commonly used in FL numerical experiments to generate synthetic datasets \cite{wang2021field, tong2020federated, yuan2021federated} since randomly generated mean $\mu_i$ and decreasing  variance $1/i^{1.1}$ make the client data set heterogeneous. 
			
			%\jq{Shall we also mention the distribution for the vector entries?}\hj{Added the description about it above.}
			
			%\jq{Shall we formally define the degree of data heterogeneity? Does it rely on the distribution?}\hj{After reading other papers about how they describe the degree of heterogeneity of synthetic data sets, I would like to avoid making a rigorous connection between our notion of heterogeneity (Assumption \ref{assump:local_dissimilarity_mod}) and the data set distribution. It would be difficult to give a mathematical relation between the quantitiy in Assumption \ref{assump:local_dissimilarity_mod} and the data set heterogeneity. As far as I know, FL papers just intuitively state that the client sets are more heterogeneous as the parameters of client data set generating distributions (such as mean or variance) differ more between clients \cite{wang2021field}. I added more explanation about how the client data sets are generated and the implication of distribution parameters on the heterogeneity. }
			
			The number of clients is $50$, the number of data points of each client is $100$, and the mini-batch size of each client for FedGradMP is $40$.
			
			\subsubsection*{Simulation results}
			Figure \ref{fig:FedGradMP_noniid_rounds} shows that FedGradMP converges linearly for various heterogeneity level $\alpha$, validating Theorem \ref{thm:main_theorem1}. Note that the higher $\alpha$ is, the larger the variance of random mean shift $\mu_i$ or the higher the degree of heterogeneity is. 
			The curves on the left panel are the relative error of FedGradMP for the noiseless case and the curves on the right are for the Gaussian noise case.  
			We observe that FedGradMP still converges for highly heterogeneous data sets but with slower convergence rates in both cases.
			\begin{figure}
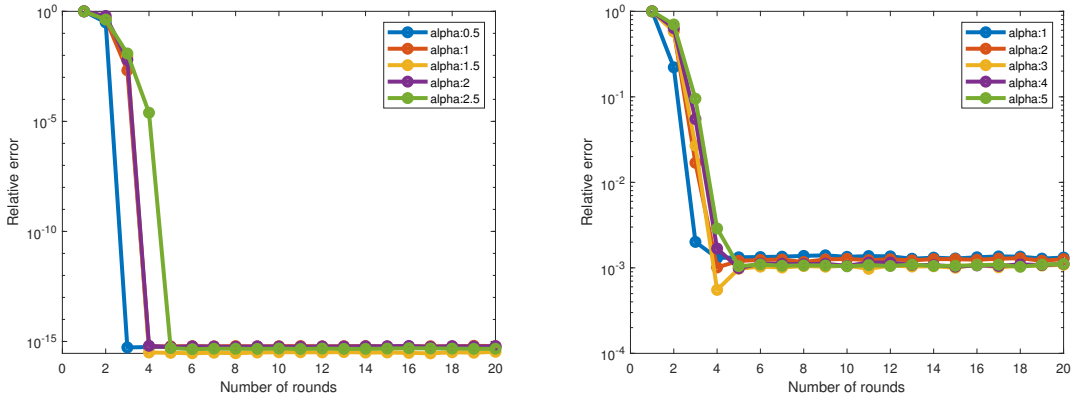

				\centering
				\begin{minipage}{0.45\textwidth}
					\centering 
					\includegraphics[ width=1.0 \textwidth]{./pics/FedGradMP_noniid_rounds}
				\end{minipage}
				\begin{minipage}{0.45\textwidth}
					\centering 
					\includegraphics[ width=1.0 \textwidth]{./pics/FedGradMP_noniid_with_noise_rounds}
				\end{minipage}
				\caption[] {Linear convergence of FedGradMP with for data sets with various heterogeneity levels. 
				}
				\label{fig:FedGradMP_noniid_rounds}
			\end{figure}
			
			\subsubsection{Real data set: sparse video recovery}
			
			In this experiment, we test FedGradMP on video frame recovery from a real-world dataset. Our dataset is a xylophone video consisting of $120$ frames from YouTube   \url{https://www.youtube.com/watch?v=ORipY6OXltY}, which can be also downloaded from the MathWorks website \url{https://www.mathworks.com/help/matlab/ref/videoreader.html}. Each frame is of size $240  \times 320$ after the conversion to gray-scale frames. 
			We reshape the $82$-th frame as a vector in $\mathbb{R}^{76800}$ and our goal is to recover this frame. 
			
			For this experiment, we use the K-SVD algorithm \cite{aharon2006k} to generate a dictionary $\Psi \in \mathbb{R}^{76800 \times 50}$ consisting of $50$ atoms that are trained over the first $80$ frames. 
			
			The number of clients to reconstruct this video  frame is $50$ and  non i.i.d. random matrix of size $30 \times 76800$ is used for each client. More specifically, it is generated according to the normal distribution $\mathcal{N}(\mu_i, 1/i^{0.9})$ where $\mu_i \sim \mathcal{N}(0,\alpha = 0.5)$, similar to the one in Sections \ref{section:FedGradMP on synthetic Heterogeneous data sets} and \ref{section:Comparison of FedGradMP}.
			
			Figure \ref{fig:FedGradMP_xylophone} shows one frame of the input image sequence on the left, the image recovered by FedGradMP + K-SVD in the middle, and the difference on the right.
			Considering that the sensing matrices for clients are highly heterogeneous, the recovered image quality is reasonably satisfactory. 
			
			\begin{figure}
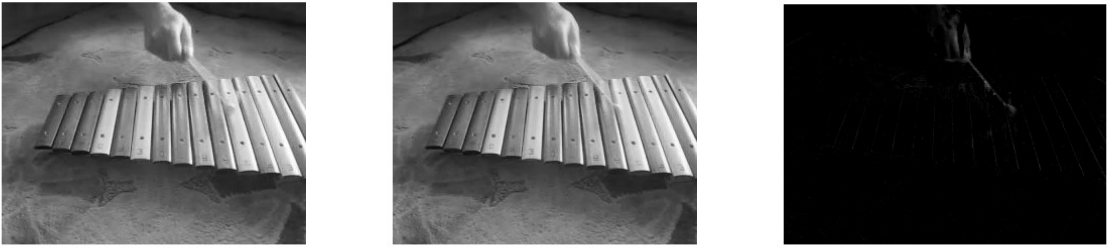

				\centering
				\setlength{\tabcolsep}{-20pt}
				\begin{tabular}{ccc}	\includegraphics[width=0.4\textwidth]{./pics/xylophone_input_client_50_noniid_sparsity_5}
					&\includegraphics[width=0.4\textwidth]{./pics/xylophone_recovered_client_50_noniid_sparsity_5}
					&\includegraphics[width=0.4\textwidth]{./pics/xylophone_difference_client_50_noniid_sparsity_5}
				\end{tabular}
				\vspace{-20pt}
				\caption[]{Input image on the left:  82-th frame of the xylophone video. The output image of FedGradMP with K-SVD dictionary in the middle. The difference of the two images is displayed on the right. }%\jq{The new residual image looks good but I'd suggest moving the top title to the caption.}} \hj{removed the title.}
			\label{fig:FedGradMP_xylophone}
		\end{figure}

		\subsection{Comparison of FedGradMP with other FL algorithms}
		
		\subsubsection{Federeated sparse linear regression}
		\label{section:Comparison of FedGradMP}
		The next experiments illustrate FedGradMP outperforms other FL algorithms in both low and highly heterogeneous data environments. 
		
		\subsubsection*{Experiment settings}
		We compare FedGradMP with FedAvg, FedIterHT, FedMid, and FedDualAvg for the sparse linear regression or compressed sensing. The $\ell_1$ regularization hyperparameter for FedMid and FedDualAvg is $0.5^5$. The client learning rates for FedAvg, FedIterHT, FedMid, and FedDualAvg are chosen by grid search with grid $\{0.01, 0.005, 0.001, 0.0005, 0.0001, 0.00005, 0.00001,0.000005\}$ to achieve their best performance. The number of clients is $50$ and the mini-batch size of each client is $50$. The loss function for client $i$ is given by $f_i = {1 \over 2|D_i|} \|A_{D_i}x - y_{D_i}\|_2^2$, where $y_{D_i} = A_{D_i}x^{\#} + e$ are noisy measurements of a $15$-sparse vector $x^{\#}$ and $e$ is a noise vector whose components are generated according to $\mathcal{N}(0,0.005)$. 
		
		In the low-heterogeneity data experiments for Figure \ref{fig:low_heterogenity}, the $100 \times 1000$ data matrices $A_{D_i}$ are generated by the randomly shifted mean Gaussian model used for the experiments for Figure \ref{fig:FedGradMP_noniid_rounds} with whose elements are synthetically generated according to $\mathcal{N}(\mu_i, 1/i^{0.2})$ where $\mu_i \sim \mathcal{N}(0,\alpha = 0.2)$.
		
		On the other hand, under the same setting as before but a higher value of the parameter $\alpha = 0.5$ is used to generate the data matrices  $A_{D_i}$ to obtain a more heterogeneous client data set for the experiment for Figure \ref{fig:Comparison_noniid}. 
		
		The previous two experiments are conducted for relatively low-sparsity level signals. The relative error curves in Figure \ref{fig:Comparison_noniid_high_sparsity} are obtained for a signal $x^{\#}$ that $400$-sparse under the same heterogeneous model as in Figure \ref{fig:Comparison_noniid}. Because of the high sparsity level (about the same order as the ambient dimension $1000$), we run the Inexact-FedGradMP (Algorithm \ref{alg:InexactFedGradMP}) with gradient descent to solve the sub-optimization problem more efficiently as we have discussed in Section \ref{sec:ext}.
		
		\subsubsection*{Simulation results}
		The plots for Figure \ref{fig:low_heterogenity} demonstrate FedGradMP converges faster than other methods in the number of communication rounds for a low heterogeneous environment both in the number of rounds and wall-clock time. FedIterHT converges linearly as shown in  \cite{tong2020federated}, but with a slower convergence rate than FedGradMP. FedMid and FedDualAvg also appear to converge as their theory suggest \cite{li2018federated, wang2021field, yuan2021federated} 
		but slower than FedGradMP. FedAvg is the slowest among all algorithms we tested and it generally does not produce
		a sparse solution. We also notice that FedGradMP offers the smallest residual error evidencing our theory that FedGradMP guarantees the optimal statistical bias in Remark \ref{rem:main_remark2}.

		In the highly heterogenous environment setting, FedGradMP still performs well whereas other algorithms start degrading significantly, as we observe in the plots in Figure \ref{fig:Comparison_noniid}. 
		
		As for the signals with higher sparsity levels, from the plots in Figure \ref{fig:Comparison_noniid_high_sparsity} show, we see that FedGradMP performs better than other baseline algorithms in terms of both criteria. 
		
		\begin{figure}
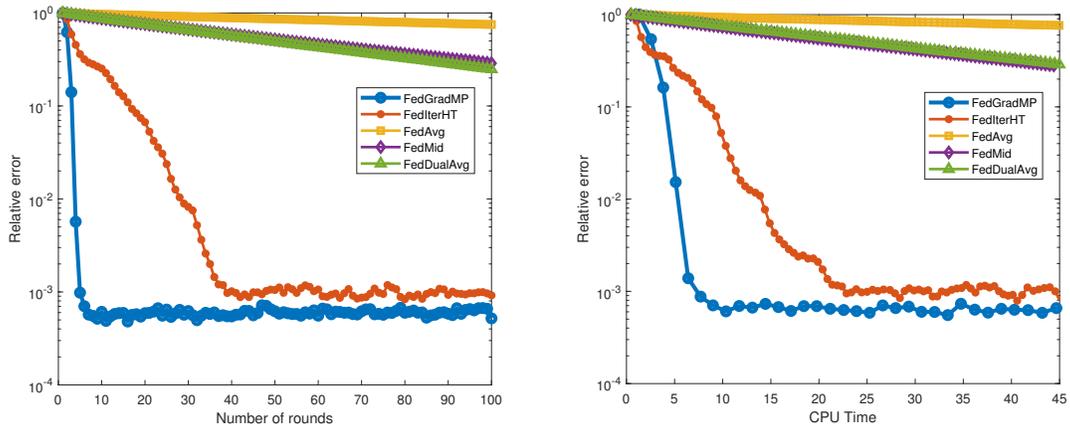

			\centering
			\begin{minipage}{0.45\textwidth}
				\centering 
				\includegraphics[ width=1.0 \textwidth]{./pics/Comparison_noniid_rounds6_sparsity15_30trials}
			\end{minipage}
			\begin{minipage}{0.45\textwidth}
				\centering 
				\includegraphics[ width=1.0 \textwidth]{./pics/Comparison_noniid_CPUtime6_sparsity15_30trials}
			\end{minipage}
			\caption[] {FedGradMP outperforms other methods in a low data heterogeneous environment. 
			}
			\label{fig:low_heterogenity}
		\end{figure}

		\begin{figure}
			\centering
			\begin{minipage}{0.45\textwidth}
				\centering 
				\includegraphics[ width=1.0 \textwidth]{./pics/Comparison_noniid_high_alpha_rounds8_sparsity15_30trials}
			\end{minipage}
			\begin{minipage}{0.45\textwidth}
				\centering 
				\includegraphics[ width=1.0 \textwidth]{./pics/Comparison_noniid_high_alpha_CPUTime8_sparsity15_30trials}
			\end{minipage}
			\caption[] {FedGradMP outperforms other methods in a high data heterogeneous environment. 
			}
			\label{fig:Comparison_noniid}
		\end{figure}

		\begin{figure}
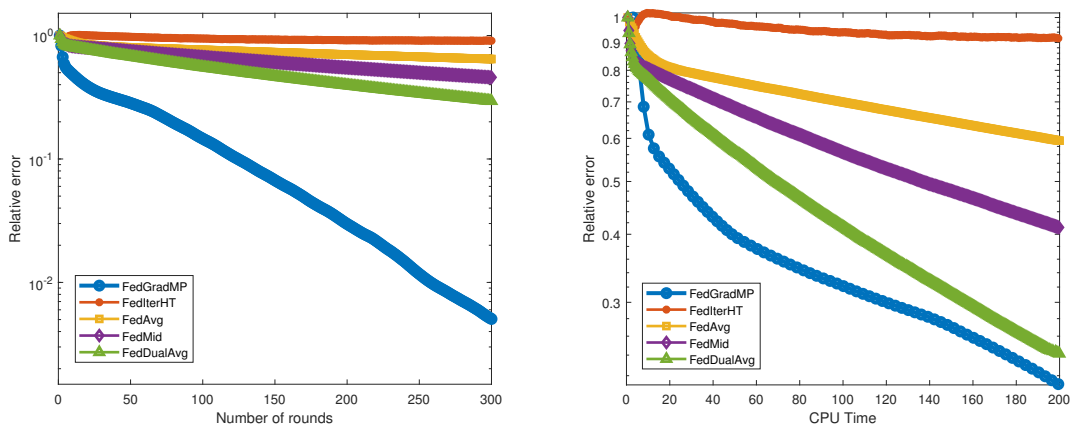

			\centering
			\begin{minipage}{0.45\textwidth}
				\centering 
				\includegraphics[ width=1.0 \textwidth]{./pics/Comparison_noniid_round6_high_sparsity400}
			\end{minipage}
			\begin{minipage}{0.45\textwidth}
				\centering 
				\includegraphics[ width=1.0 \textwidth]{./pics/Comparison_noniid_CPUTime6_high_sparsity400}
			\end{minipage}
			\caption[] {FedGradMP outperforms other methods in a high data heterogeneous environment for high sparsity level signals.  
			}
			\label{fig:Comparison_noniid_high_sparsity}
		\end{figure}

		\subsubsection{Logistic regression for Federated EMNIST dataset}
		\label{sec:Logistic regression for FEMNIST}
		
		\subsubsection*{Experiment settings}
		The data set we use is the Federated EMNIST-10 dataset (FEMNIST-10), a commonly used dataset to test FL algorithms. FEMNIST-10 is a collection of handwritten digits and $10$ labels, grouped by writers. Each data point of FEMNIST-10 consists of a $28 \times 28$ gray-scale image and its label belongs to one of the $10$ classes. 
		Note that the dimension of solution space is $28 \times 28 = 784$. 
		
		In the experiment, we use $350$ clients, which is about $10\%$ of the original dataset with $100$ examples each. We split the data into a training dataset with $300$ clients and a test dataset with $50$ clients. The number of participating clients per round is $10$ and the mini-batch size is $50$.
		This is similar to the standard settings used for FL algorithm benchmark \cite{yuan2021federated, bao2022fast}. We run the Inexact-FedGradMP with an $\ell_2$ norm constraint with $20$ local iterations, in which we solve the sub-optimization problem in FedGradMP by SGD with $2$ iterations. The number of local iterations for FedIterHT, FedAvg, FedMid, FedDualAvg is $40$. Note that the total number of the effective number of local iterations for all the algorithms is the same, $40$ iterations. The number of communication rounds is $1000$.
		
		The local objective function $f_i (x^{(1)},x^{(2)},\dots,x^{(N)})  = {1 \over |D_i|} \sum\limits_{j=1}^{|D_i|} \left[\sum\limits_{i=1}^{10}  -y_{ij}  b_j^Tx^{(i)} + \ln \left(\exp \left( \sum\limits_{i=1}^{10}   b_j ^Tx^{(i)}  \right)\right)\right]$, the multiclass logistic regression function. Additionally, we use $\ell_1$ regularization with  hyperparameter $\lambda$ is chosen to be $0.0001$ for FedMid and FedDualAvg as in \cite{yuan2021federated, chen2020optimal} and the $\ell_2$ ball constraint $\|x\| \le 10^5$ for FedGradMP.
		
		\subsubsection*{Simulation results}
		Figure \ref{fig:Comparison_FedGradMP} demonstrates that FedGradMP outperforms the baseline algorithms in terms of prediction accuracy on training and test datasets. 
		
		\begin{figure}
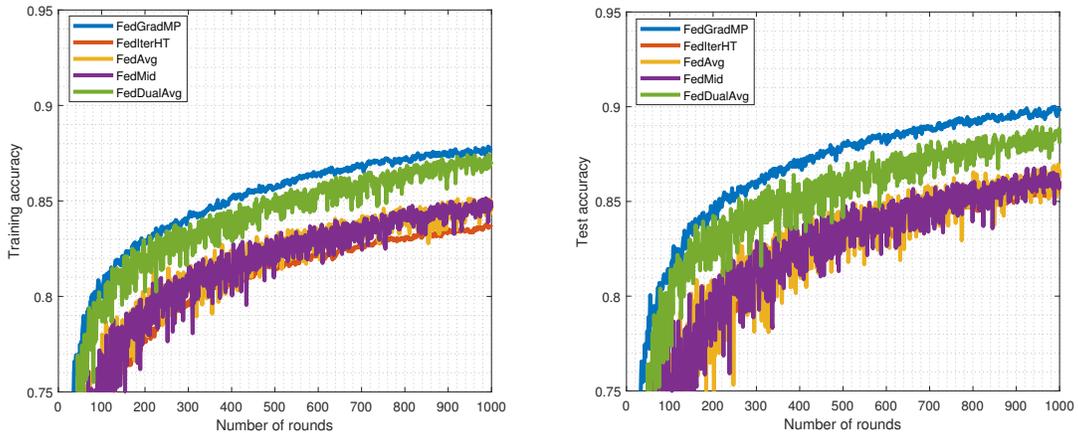

			\centering
			\begin{minipage}{0.45\textwidth}
				\centering 
				\includegraphics[ width=1.0 \textwidth]{./pics/Comparison_dictionary_FEMNIST_corrected_Standard_dictionary5}
			\end{minipage}
			\begin{minipage}{0.45\textwidth}
				\centering 
				\includegraphics[ width=1.0 \textwidth]{./pics/Comparison_dictionary_FEMNIST_corrected_Standard_dictionary_test2}
			\end{minipage}
			\caption[] {In both of experiments using the traning dataset (on the left) and the test dataset (on the right), the performance of FedGradMP is better than other baseline methods. 
			}
			\label{fig:Comparison_FedGradMP}
		\end{figure}

		\subsubsection*{Improving FedGradMP performance using random dictionaries}

		\label{sec:Improving_RSC_for_FEMNIST}
		In this section, we show that FedGradMP combined with a random Gaussian dictionary empirically outperforms the one with the standard basis. The experiment settings are the same as the ones in Section \ref{sec:Logistic regression for FEMNIST} except we use the random Gaussian dictionary of size $200 \times 784$. As a comparison, we have also included the prediction accuracy curves of FedGradMP in Figure \ref{fig:Comparison_FedGradMP}. 
		
		The plot in Figure
		\ref{fig:FedGradMP_FEMINST} indicates that FedGradMP + random Gaussian dictionary outperforms FedGradMP + the standard basis, supporting our theory in Section \ref{sec:Improving_RSC_using_random_dictionary}. 
		%\jq{Maybe "+" $\to$ "and"?} \hj{I like "+" better than "and" for some reason, but please feel free to fix this if you want.}
		
		\begin{figure}
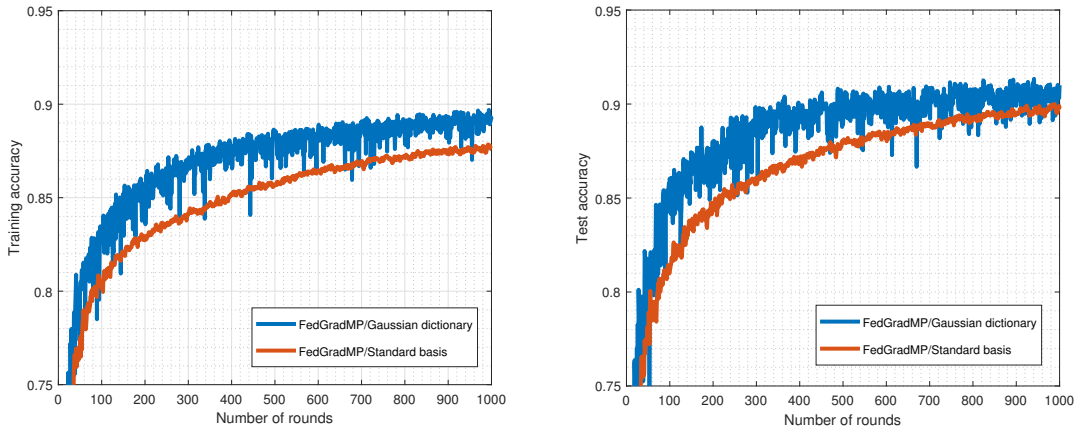

			\centering
			\begin{minipage}{0.45\textwidth}
				\centering 
				\includegraphics[ width=1.0 \textwidth]{./pics/Comparison_dictionary_FEMNIST_corrected_Gaussian_Standard}
			\end{minipage}
			\begin{minipage}{0.45\textwidth}
				\centering 
				\includegraphics[ width=1.0 \textwidth]{./pics/Comparison_dictionary_FEMNIST_corrected_Gaussian_Standard_test}
			\end{minipage}
			\caption[] {Training accuracy curves of FedGradMP for the FEMINST dataset with respect to the random Gaussian dictionary and the standard basis. 
			}
			\label{fig:FedGradMP_FEMINST}
		\end{figure}

		\subsection{Difficulties of tuning learning rates for FL methods}
		\label{section:Challenges in tuning LR}
		As we saw in the numerical experiments, Section \ref{section:Comparison of FedGradMP}, other FL methods suffer especially in a highly heterogeneous environment. This can be  alleviated by tuning hyperparameters individually for each client such as learning rates, but it could be challenging or at least time-consuming. To showcase the difficulties of tuning the learning rates of FL methods, we study FedIterHT but we empirically observed the same phenomenon for other baseline algorithms. Another reason we tested FedIterHT is that it is actually the only method among baseline that aims to solve the sparsity-constrained problem \eqref{eq:main_problem} as ours.

		The convergence of FedIterHT in \cite{tong2020federated} strongly depends on the learning rates. Although they provide the learning rates that depend on the dissimilarity parameter and restricted strong convexity/smoothness parameters at the clients, they are quite often not available and difficult to estimate in practice since the data at clients are non i.i.d.. FedGradMP is free from this issue at least for sparse linear regression and is often still computationally efficient since clients only solve optimization problems over smaller spaces after the support estimation. 
		
		\subsubsection*{Experiment settings}
		We run FedIterHT for the squared loss function with a randomly generated $10$-sparse vector as ground truth. The local loss function $f_i = {1 \over 2\|D_i|} \|A_{D_i}x - y_{D_i}\|_2^2$ where  $A_{D_i}$ is the client $i$ data matrix in $\R^{100 \times 1000}$ whose elements are synthetically generated according to $\mathcal{N}(\mu_i, 1/i^{1.1})$ with randomly generated mean $\mu_i$ from the mean-zero Gaussian with variance  $\alpha = 1.0$. This setting is similar to the synthetic dataset in \cite{tong2020federated} except we have common sparse ground truth. The number of clients is $30$ with mini-batch size $40$. The number of total data points $m = 3000$ and the dimension of solution space $n = 1000$.
		The client learning rate combinations for the experiment are $\{0.0001, 0.0005, 0.001, 0.002, 0.004, 0.01, 0.02 \}$.
		
		\subsubsection*{Simulation results}

		\begin{figure}
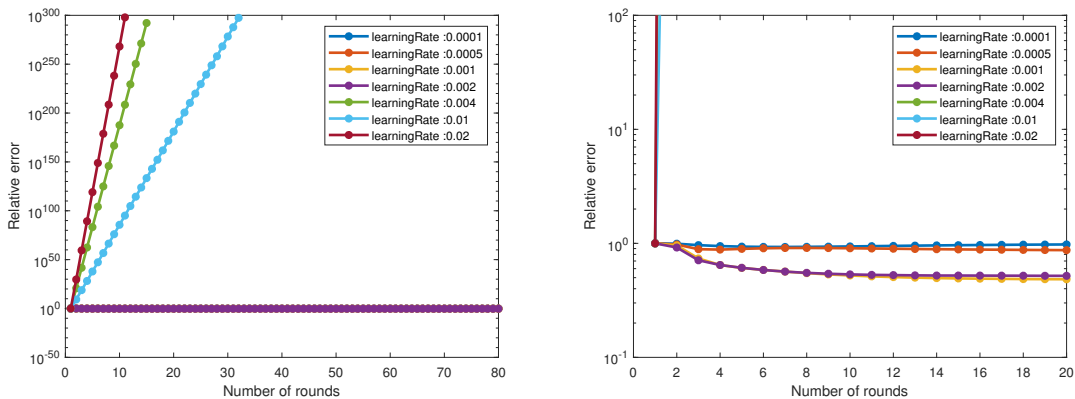

			\centering
			\begin{minipage}{0.45\textwidth}
				\centering 
				\includegraphics[ width=1.0 \textwidth]{./pics/FedIterHT_noniid4_fig1}
			\end{minipage}
			\begin{minipage}{0.45\textwidth}
				\centering 
				\includegraphics[ width=1.0 \textwidth]{./pics/FedIterHT_noniid4_fig2}
			\end{minipage}
			\caption[] {FedIterHT with learning rates  $\{0.0001, 0.0005, 0.001, 0.002, 0.004, 0.01, 0.02 \}$ for non i.i.d. data sets.  %\jq{It seems the dark blue is used to represent two cases 0.0005 and 0.1? Shall we also use different dot shapes in the line properties, say diamond, round, triangle etc?} \hj{Thank you very much for catching it! I have corrected the plots now.}
			}
			\label{fig:FedIterHT_noniid_saturated_diverege}
		\end{figure}
		
		\begin{figure}
			\centering 
			\includegraphics[ height=6cm]{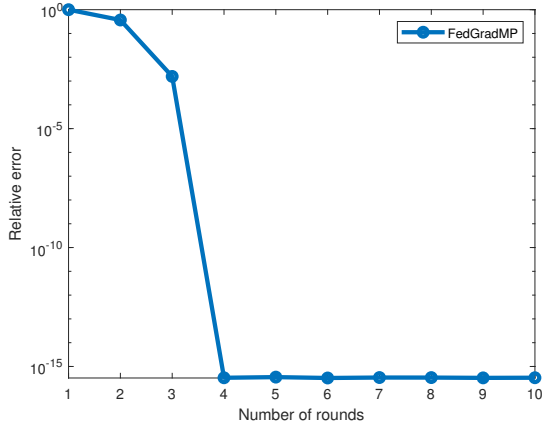}
			\caption[] {FedGradMP for non i.i.d. data sets.  
			}
			\label{fig:FedGradMP_noniid_new}
		\end{figure}
		
		If the learning rates are chosen from $\{ 0.004, 0.01, 0.02\}$, then the left plot in Figure \ref{fig:FedIterHT_noniid_saturated_diverege} show that they quickly diverge from the optimal solution.  
		
		On the other hand, the right panel in Figure \ref{fig:FedIterHT_noniid_saturated_diverege} shows the relative error and squared loss curves for FedIterHT when the learning rate is in $\{0.0001, 0.0005, 0.001, 0.002 \}$. For these smaller learning rates, the iterates of FedIterHT tend to converge to a highly suboptimal local solution.
		It has been observed in the literature \cite{aghazadeh2018mission}  that approaches based on stochastic gradient descents combined with hard-thresholding (such as FedIterHT) suffer from such phenomena when the learning rates are chosen to be too small.
		
		Hence, our numerical experiments indicate that the learning rates should be chosen very carefully for each client. Working learning rates should depend on the statistics and heterogeneity of the local data set at the client. Obtaining this information could be challenging because it might not be available in general, so usually, a grid search is performed to find  learning rates. 
		
		%and even if they are, the recommended learning rates for FedHT/FedIterHT in  depends on the statistics in a complicated way --- dissimilarity parameter $\mathcal{B}$ and restricted strong convexity/smoothness parameters at the clients \cite{tong2020federated}
		
		On the other hand, the iterates of FedGradMP converge to the ground truth up to (almost) machine precision as shown in Figure \ref{fig:FedGradMP_noniid_new}
		under the same setting, only in four rounds with three local iterations at the clients. Unlike FedIterHT, FedGradMP does not require fine tuning of learning rates per client. 
		
		%\jq{It may be controversial since the ``fine tuning" is not concrete here. Does it mean that FedGradMP can converge regardless of the choice of learning rates? Or simply FedGradMP does not have the learning parameter at all?}\hj{It does not have the learning parameter for this experiment since it relies on computing matrix inversion on a subspace of small dimension, which can be efficiently computed for example by QR decomposition. }
		
		\subsection{Impact of the number of local iterations}
		\label{section:Impact of the number of local iterations}
		
		We provide numerical evidence supporting Theorem \ref{thm:main_theorem1} about how the number of local iterations at clients affects the convergence rate and the residual error of FedGradMP. 
		
		\subsubsection*{Experiment settings}
		The number of clients is $50$, the dimension of solution space is $1000$, the number of data points of each client is $100$, the mini-batch size of each client is $30$, and the cohort size is $50$. The local objective function $f_i$ is the squared loss with associated data matrix $A_{D_i}$ for client $i$, similar to the one used for the heterogeneous case with $\alpha = 2.5$ in Section \ref{section:FedGradMP on synthetic Heterogeneous data sets}. 
		We run FedGradMP with local iterations $3, 6, 9, 12, 15$ for noiseless and noisy setup ($y_{D_i} = A_{D_i}x^{\#} + e$, where $e$ is a Gaussian noise where each component are independently generated according to $\mathcal{N}(0, 4 \times 10^{-6}$) ). 
		
		\subsubsection*{Simulation results}
		
		We display the relative error curves of iterates of FedGradMP 
		on the left and right panels of Figure \ref{fig:FedGradMP_LOCAL_ITERATIONS} for noiseless and noisy case respectively.  
		
		The error decay curves in the left plot for the  noiseless case demonstrate that as we increase the number of local iterations at clients, FedGradMP converges faster or the convergence rates improve. The plot on the right for the noisy case also exhibits a similar pattern but with a few exceptions probably due to the noise. This supports our theory about the dependence of convergence rate $\kappa$ on the number of local iterations in Theorem \ref{thm:main_theorem1} as explained in Remark \ref{rem:main_remark1}.
		
		As for the residual error of FedGradMP, we observe a general trend in the right panel that increasing the local iterations decreases the residual error, but this effect is not as noticeable as the convergence rate. This is somewhat expected since the residual error term in  \ref{thm:main_theorem1} depends on the local iteration numbers complicated way as explained in Remark \ref{rem:main_remark1}.
		
		\begin{figure}
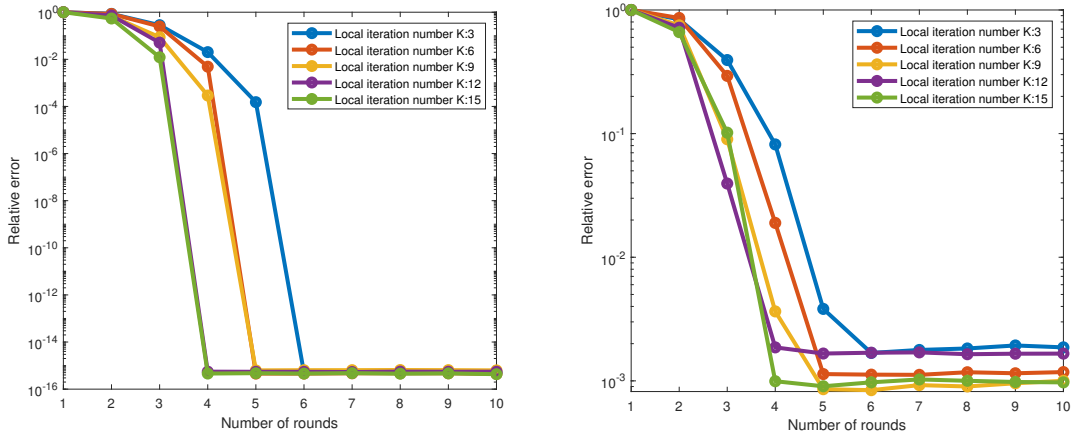

			\centering
			\begin{minipage}{0.45\textwidth}
				\centering 
				\includegraphics[ width=1.0 \textwidth]{./pics/FedGradMP_local_iterations_rounds_noniid3}
			\end{minipage}
			\begin{minipage}{0.45\textwidth}
				\centering 
				\includegraphics[ width=1.0 \textwidth]{./pics/FedGradMP_local_iterations_rounds_noisy_noniid3}
			\end{minipage}
			\caption[] {The convergence rate improves as the local iterations at clients increase in Theorem \ref{thm:main_theorem1}. 
			}
			\label{fig:FedGradMP_LOCAL_ITERATIONS}
		\end{figure}

		\subsection{Impact of cohort size}
		\label{section:cohort_size}
		The next experiment illustrates how well FedGradMP performs when cohort size (the number of participating clients per round) varies. We notice that Figure \ref{fig:FedGradMP_COHORT_SIZE} provides numerical evidence supporting Theorem \ref{thm:main_theorem2} about how the cohort size affects the convergence rate and the residual error of FedGradMP. 
		
		\subsubsection*{Experiment settings}
		The number of clients is $50$, the dimension of solution space is $50$ and we set the mini-batch size $30$. The local objective function $f_i$ is the squared loss with associated non iid data matrix $A_{D_i}$ for client $i$, similar to the one used for the heterogeneous case with $\alpha = 2.5$ in Section \ref{section:FedGradMP on synthetic Heterogeneous data sets}. We run FedGradMP with cohort size $10, 15, 20, 25, 30$ for noiseless and noisy setup.

		\subsubsection*{Simulation results}
		The relative error curves of iterates of FedGradMP are given on the left panel (noiseless case) and right panels (noisy case) of Figure \ref{fig:FedGradMP_COHORT_SIZE}. These error plots indicate that the convergence rate improves as we increase the cohort size, for both noiseless and noisy cases as predicted in Theorem \ref{thm:main_theorem2}. 
		On the other hand, a careful reader might have noticed that the residual error actually slightly increases as the cohort size increases. This implies that the dependence of our residual error bound on the cohort size in Theorem \ref{thm:main_theorem2} is pessimistic and may not capture the true dependence as most of the other works in FL algorithm analysis. For more details, see the discussion and criticism on the gap between the current theoretical analyses of the impact of cohort size in FL algorithms and their empirical performance \cite{charles2021large}.

		\begin{figure}
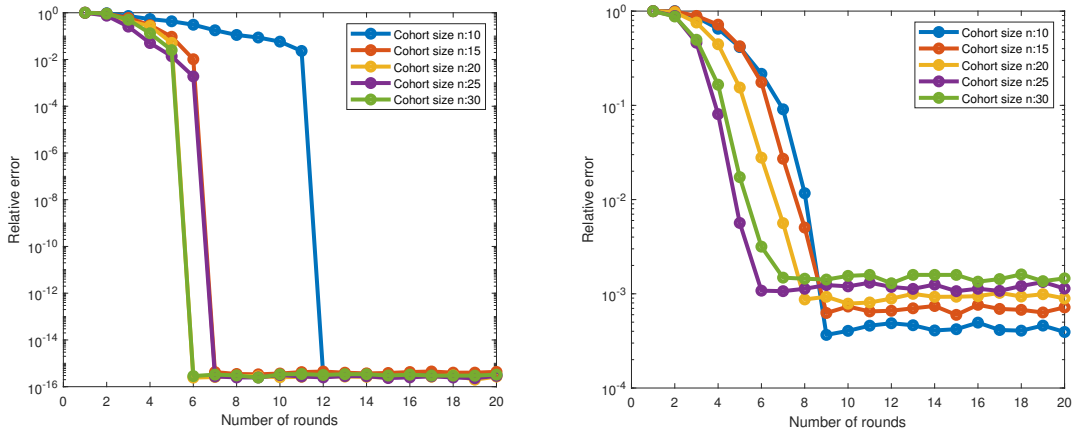

			\centering
			\begin{minipage}{0.45\textwidth}
				\centering 
				\includegraphics[ width=1.0 \textwidth]{./pics/FedGradMP_cohort_size_rounds_noniid6}
			\end{minipage}
			\begin{minipage}{0.45\textwidth}
				\centering 
				\includegraphics[ width=1.0 \textwidth]{./pics/FedGradMP_cohort_size_rounds_noisy_noniid5}
			\end{minipage}
			\caption[] {The convergence rates improve as the cohort size increases as predicted in Theorem \ref{thm:main_theorem2}. Note that the residual errors in the right panel decay to zero (up to the machine precision) since all the non i.i.d. measurements are noiseless with the squared loss function. 
			}
			\label{fig:FedGradMP_COHORT_SIZE}
		\end{figure}

		\section{Conclusion}
		\label{sec:con}
		In this paper, we propose a novel federated stochastic gradient matching pursuit algorithm framework and show the linear convergence in expectation under certain assumptions of the objective function, including the dictionary restricted-RSS/RSC % \jq{Shall we mention "dictionary restricted-RSS/RSC" here?} \hj{Fixed. Thank you!}
		conditions and the unbounded dissimilarity assumption. For the sparse linear regression problem, our method does not require learning rate tuning at the client side, which could be challenging for existing baseline algorithms in highly heterogeneous data environments. 
		Numerical experiments on large scale heterogeneous data sets such as EFMINIST and videos have shown the effectiveness of the proposed approach over the state-of-the-art federated learning algorithms. Our analysis reveals the benefits of adopting random dictionaries such as Gaussian random dictionary, which is also confirmed by our numerical experiments. 
		
		\section*{Acknowledgements}
		DN is supported by NSF DMS 2011140 and NSF DMS 2108479. The research of Qin is supported by the NSF grant DMS-1941197.

		\appendices
		
		\section{Proofs}
		
		\begin{proof}[Proof of Corollary~\ref{cor:7}]
			First, we recall that the global objective function $f(x) = \sum_{i=1}^N p_i f_i(x)$ and $f_i(x) = {1 \over M} \sum_{j=1}^M g_{i,j}(x)$. From Assumption \ref{assumption:main_assumptions2} on the  $\mathcal{A}$-RSS property of $g_{i,j}$ with constant $\rho^{+}_\tau(i,j) $, we have
			\begin{align*}
				\| \nabla g_{i,j}(x_1) - \nabla  g_{i,j} (x_2) \|_2 \le \rho^{+}_\tau(i,j) \|x_1 - x_2\|_2
			\end{align*} for all $x_1, x_2 \in \R^n$ with $\|x_1 - x_2\|_{0, \mathcal{A}} \le \tau$. By Lemma \ref{lem:consequence_RSS}, we have 
			\begin{align*}
				&\inner{\nabla g_{i,j}(x_1), x_2} \ge g_{i,j}(x_1 + x_2) - g_{i,j}(x_1) - {\rho^{+}_\tau(i,j)  \over 2} \|x_2\|_2^2.
			\end{align*}	
			Taking average $g_{i,j}$ over $j$ to recover $f_i$ and over $i$ with probability $p_i$ to recover $f$, the above inequality implies that 
			\begin{align*}
				\inner{\nabla f(x_1), x_2} \ge f(x_1 + x_2) - f(x_1) - {1 \over 2} \sum_{i=1}^N p_i \bar{\rho}^{+{(i)}}_\tau \|x_2\|_2^2.
			\end{align*} 
			Denote $ \sum_{i=1}^N p_i \bar{\rho}^{+{(i)}}_\tau$ by $\rho$. 
			Setting $x_2 = x_{t+1} - x^*$ and $x_1 = x^*$ in the above inequality yields
			\begin{align*}
				f(x_{t+1}) 
				&\le f(x^*) + \inner{\nabla f(x^*), x_{t+1} - x^*} + {\rho\over 2}  \|x_{t+1} - x^*\|_2^2\\
				&\le f(x^*) + \|\nabla f(x^*)\|_2 \| x_{t+1} - x^*\|_2 + {\rho\over 2}  \|x_{t+1} - x^*\|_2^2\\
				&\le f(x^*) + {1\over 2\rho} \|\nabla f(x^*)\|_2^2 + {\rho\over 2} \| x_{t+1} - x^*\|_2^2 + {\rho\over 2}  \|x_{t+1} - x^*\|_2^2\\
				&\le f(x^*) + {1\over 2\rho} \|\nabla f(x^*)\|_2^2 + \rho \| x_{t+1} - x^*\|_2^2.
			\end{align*} Here the third inequality follows from the AM-GM inequality. 
			Taking the expectation to the last inequality, we have
			\begin{align*}
				\mathbb{E} f(x_{t+1}) 
				&\le f(x^*) + {1\over 2\rho} \|\nabla f(x^*)\|_2^2 + \rho \mathbb{E} \| x_{t+1} - x^*\|_2^2.
			\end{align*}
			Finally, we apply Theorem \ref{thm:main_theorem1} to the above inequality to establish the statement in the corollary. 
		\end{proof}
		
		\begin{proof} [Proof of Theorem \ref{thm:main_theorem1_inexact_solver}]
			We follow the same arguments used in the first few steps of the proof of Theorem \ref{thm:main_theorem1} and obtain the following inequality.
			\begin{align}
				\nonumber
				& \mathbb{E} \|x_{t+1} - x^*\|_2^2 \le  (2 \eta_3^2 + 2)   \sum_{i=1}^N p_i \mathbb{E}^{(i)}_{J_K} \left \|  x_{t,K+1}^{(i)}   - x^*  \right \|_2^2. 
			\end{align} 
			
			Because we are solving $b^{(i)}_{t,k} = \argmin{x} f_i(x)$ for $ x \in R(D_{\widehat{\Gamma}})$ with an accuracy $\delta$, 
			we have 
			\begin{align}
				\nonumber
				& \sum_{i=1}^N p_i \mathbb{E}^{(i)}_{J_K} \left \|  x_{t,K+1}^{(i)}   - x^*  \right \|_2^2   \\
				\nonumber
				& \le (1 + \eta_2)^2 \sum_{i=1}^N p_i \mathbb{E}^{(i)}_{J_K}\left \|b^{(i)}_{t,K} - x^* \right \|_2^2 \\
				\nonumber
				& \le (1 + \eta_2)^2 \sum_{i=1}^N p_i \left[ 2\mathbb{E}^{(i)}_{J_K}\left \|b^{(i, \text{opt})}_{t,K} - x^* \right \|_2^2 + 2\mathbb{E}^{(i)}_{J_K}\left \|b^{(i, \text{opt})}_{t,K-1} - b^{(i)}_{t,K} \right \|_2^2  \right] \\
				\nonumber
				& \le (1 + \eta_2)^2 \sum_{i=1}^N p_i \left[ 2\mathbb{E}^{(i)}_{J_K}\left \|b^{(i, \text{opt})}_{t,K} - x^* \right \|_2^2 + 2 \delta^2 \right] \\
				\nonumber
				& \le  2(1 + \eta_2)^2 \sum_{i=1}^N  p_i \left[  \beta_1(i)  \mathbb{E}^{(i)}_{J_K} \|P^\perp_{\widehat{\Gamma}}( b^{(i)}_{t,K} - x^*)\|_2^2 + \xi_1(i) +  \delta^2 \right].
			\end{align}
			The rest of the proof is similar to that of Theorem \ref{thm:main_theorem1}. 
		\end{proof}

		\begin{proof} [Proof of Theorem \ref{thm:main_theorem2}]
			As in the proof of Theorem \ref{thm:main_theorem1}, let $\mathcal{F}^{(t)}$ be the filtration by all the randomness up to the $t$-th communication round, but in this case, it is all the selected participating clients and the selected mini-batch indices at all these clients  up to the $t$-th round. Let us denote the client subset selected at round $t$ by $I_t$. Note that $I_t$ is chosen uniformly at random over all possible subsets of cardinality $L$ whose elements belong to $[N]$, so $|I_t| = L$.
			Again, as we did in the proof of Theorem \ref{thm:main_theorem1}, by abusing the notation slightly, $\mathbb{E} \left[ \cdot | \mathcal{F}^{(t)} \right]$ will be denoted $\mathbb{E}_{(I_t)} \left[\mathbb{E} [\cdot] \right]$, where $\mathbb{E}_{(I_t)}$ is the expectation taken over the randomly selected participating clients at round $t$.
			
			We first consider the case for $\eta_1 > 1$.
			By following the same argument for the first step of the proof for Theorem \ref{thm:main_theorem1}, we have
			\begin{align}
				\nonumber
				\mathbb{E}_{(I_t)} \mathbb{E} \|x_{t+1} - x^*\|_2^2  
				&  = \mathbb{E}_{(I_t)} \mathbb{E} \left \| P_{\Lambda_s} \left( \sum_{i \in I_t} {1 \over L}  x_{t,K+1}^{(i)} \right) - \sum_{i \in I_t}  {1 \over L}  x_{t,K+1}^{(i)} + \sum_{i \in I_t} {1 \over L}  x_{t,K+1}^{(i)}  - x^* \right \|_2^2 \\ \nonumber
				&  \le 2 \mathbb{E}_{(I_t)} \mathbb{E} \left \| P_{\Lambda_s} \left( \sum_{i \in I_t} {1 \over L}  x_{t,K+1}^{(i)} \right) - \sum_{i \in I_t}  {1 \over L}  x_{t,K+1}^{(i)} \right \|_2^2 +  2 \mathbb{E}_{(I_t)} \mathbb{E} \left \| \sum_{i \in I_t} {1 \over L}  x_{t,K+1}^{(i)}  - x^* \right \|_2^2 \\ \nonumber
				& = (2 \eta_3^2 + 2) \mathbb{E}_{(I_t)} \mathbb{E} \left \|  \sum_{i \in I_t} {1 \over L}  x_{t,K+1}^{(i)}   -  \sum_{i \in I_t}  {1 \over L}  x^*  \right \|_2^2 \\
				& \le  (2 \eta_3^2 + 2) \mathbb{E}_{(I_t)}  \left[ \sum_{i \in I_t}  {1 \over L} \mathbb{E} \left \|  x_{t,K+1}^{(i)}   - x^*  \right \|_2^2 \right] \\
				& \le  (2 \eta_3^2 + 2) \mathbb{E}_{(I_t)}  \left[ \sum_{i \in I_t}  {1 \over L}  \mathbb{E}^{(i)}_{J_K} \left \|  x_{t,K+1}^{(i)}   - x^*  \right \|_2^2 \right].
			\end{align} 
			
			Moreover, the argument used in the proof of Theorem \ref{thm:main_theorem1} yields
			
			\begin{align*}
				&  \sum_{i \in I_t}  {1 \over L}  \mathbb{E}^{(i)}_{J_K} \left \|  x_{t,K+1}^{(i)}   - x^*  \right \|_2^2    
				\le (1 + \eta_2)^2  \sum_{i \in I_t}  {1 \over L}  \beta_1(i) \beta_2(i) \mathbb{E}^{(i)}_{J_{K-1}}  \| x^{(i)}_{t,K} - x^* \|_2^2 \\
				&\qquad + (1 + \eta_2)^2  \max_i \left(  {8\beta_1(i) \over (\rho_{4\tau}^{-}(i) )^2} + {4 \over \bar{\rho}^{+(i)}_{4\tau} (2\rho^{-}_{4\tau}(i) - \bar{\rho}^{+(i)}_{4\tau} ) }  + {\beta_1(i) \over \rho_{4\tau}^{-}(i) }{6\sqrt{\eta_1^2-1} \over \eta_1}   \right) \zeta_*^2 \\
				&\qquad +  (1 + \eta_2)^2  \sum_{i \in I_t} {1 \over L}  \left[{\beta_1(i) \over \rho_{4\tau}^{-}(i) }  \left({2\theta^2 } + {6\sqrt{\eta_1^2-1} \over \eta_1} \right) + {4 \over \bar{\rho}^{+(i)}_{4\tau} (2\rho^{-}_{4\tau}(i) - \bar{\rho}^{+(i)}_{4\tau} ) } \right]\sigma_i^2.
			\end{align*}
			
			We define $\nu (I_t)$ that depends on the random index set $I_t$ as follows:
			\begin{align*}
				\nu (I_t)  & =   (1 + \eta_2)^2  \max_i \left(  {8\beta_1(i) \over (\rho_{4\tau}^{-}(i) )^2} + {4 \over \bar{\rho}^{+(i)}_{4\tau} (2\rho^{-}_{4\tau}(i) - \bar{\rho}^{+(i)}_{4\tau} ) }  + {\beta_1(i) \over \rho_{4\tau}^{-}(i) }{6\sqrt{\eta_1^2-1} \over \eta_1}   \right) \zeta_*^2 \\
				&\qquad +  (1 + \eta_2)^2  \sum_{i \in I_t} {1 \over L}  \left[{\beta_1(i) \over \rho_{4\tau}^{-}(i) }  \left({2\theta^2 } + {6\sqrt{\eta_1^2-1} \over \eta_1} \right) + {4 \over \bar{\rho}^{+(i)}_{4\tau} (2\rho^{-}_{4\tau}(i) - \bar{\rho}^{+(i)}_{4\tau} ) } \right]\sigma_i^2.
			\end{align*} After rewriting the previous inequality, we obtain
			\begin{align}
				&  \sum_{i \in I_t}  {1 \over L} \mathbb{E}^{(i)}_{J_K} \left \|  x_{t,K+1}^{(i)}  - x^*  \right \|_2^2  \le   \sum_{i \in I_t}   {1 \over L} \mu (i) \mathbb{E}^{(i)}_{J_{K-1}} \| x^{(i)}_{t,K} - x^* \|_2^2 + \nu (I_t).
			\end{align}
			
			Hence, by the induction on $K$ and using the fact that the cohort set $I_t$ is fixed while the local iterations are running, we obtain a similar upper bound on $	\mathbb{E} \|x_{t+1} - x^*\|_2^2$ as follows. 
			\begin{align*}
				\mathbb{E} \|x_{t+1} - x^*\|_2^2 
				& \le  (2 \eta_3^2 + 2) \mathbb{E}_{(I_t)}  \sum_{i \in I_t} {1 \over L} \left( \mu(i)^K \left[  \mathbb{E}^{(i)} \|x^{(i)}_{t,1} - x^*\|_2^2 \right] +  {\nu (I_t) (1 - \mu(i)^K) \over 1 - \mu(i)} \right) \\
				&=  (2 \eta_3^2 + 2) \mathbb{E}_{(I_t)} \left(  \left( \sum_{i \in I_t} {1 \over L} \mu(i)^K \right) \mathbb{E} \|x_t - x^*\|_2^2+ \nu (I_t)  \sum_{i \in I_t} {1 \over L}  {(1 - \mu(i)^K) \over 1 - \mu(i)} \right)\\
				&\le  (2 \eta_3^2 + 2) \mathbb{E}_{(I_t)} \left(  \left( \sum_{i \in I_t} {1 \over L} \mu(i)^K \right) \mathbb{E} \|x_t - x^*\|_2^2+ \nu (I_t)   {(1 - \mu^K) \over 1 - \mu} \right). 
			\end{align*} 
			Recall that the index set $I_t$ is a subset of $[N]$, uniformly selected at random, for the communication round $t$. By taking the maximum of $\sum\limits_{i \in I_t} {1 \over L} \mu(i)^K$ over all  possible subsets, we have 
			\begin{align*}
				\mathbb{E} \|x_{t+1} - x^*\|_2^2 
				&\le   \kappa \mathbb{E} \|x_t - x^*\|_2^2 +   {(2 \eta_3^2 + 2) (1 - \mu^K) \over 1 - \mu} \mathbb{E}_{(I_t)} [\nu (I_t)]\\
				&\le   \kappa \mathbb{E} \|x_t - x^*\|_2^2 +   {(2 \eta_3^2 + 2) \tilde{\nu} (1 - \mu^K) \over 1 - \mu} 
			\end{align*}
			where 
			\[
			\kappa =  (2 \eta_3^2 + 2)  \max_{\substack{ S \subset [N] \\ |S| = L}} {1 \over L} \sum_{z \in S}  \left[(1 + \eta_2)^2   \beta_1(z) \beta_2(z) \right]^K,
			\]
			and
			\begin{align*}
				&\tilde{\nu}  = (1 + \eta_2)^2  \max_i \left(  {8\beta_1(i) \over (\rho_{4\tau}^{-}(i) )^2} + {4 \over \bar{\rho}^{+(i)}_{4\tau} (2\rho^{-}_{4\tau}(i) - \bar{\rho}^{+(i)}_{4\tau} ) }  + {\beta_1(i) \over \rho_{4\tau}^{-}(i) }{6\sqrt{\eta_1^2-1} \over \eta_1}   \right) \zeta_*^2 \\
				&\qquad+  (1 + \eta_2)^2 {1 \over L} \sum\limits_{i=1}^N \left[{\beta_1(i) \over \rho_{4\tau}^{-}(i) }  \left({2\theta^2 } + {6\sqrt{\eta_1^2-1} \over \eta_1} \right) + {4 \over \bar{\rho}^{+(i)}_{4\tau} (2\rho^{-}_{4\tau}(i) - \bar{\rho}^{+(i)}_{4\tau} ) } \right]\sigma_i^2. 
			\end{align*}  
			
			Hence, by the induction on $t$, we have
			\[
			\mathbb{E} \|x_{t+1} - x^*\|_2^2 
			\le  \kappa^{t+1} \mathbb{E} \|x_0 - x^*\|_2^2  +  {(2 \eta_3^2 + 2) \tilde{\nu} (1 - \mu^K)   \over (1-\kappa)(1 - \mu)}.
			\]
			
			The case for $\eta_1 = 1$ follows from a similar argument. 
		\end{proof}
		
		\section{FedGradMP convergence without the bounded variance condition of stochastic gradients} 
		
		We start with the following lemma replacing the bounded variance condition of stochastic gradients \eqref{asump:local_gradients2} in Assumption \ref{asump:local_gradients} only under the $\mathcal{A}$-RSS condition. 
		
		\begin{lem}
			\label{lem:variance_stochastic_gradients}
			Let $\mathbb{E}_{j}$ be the expectation over the uniform distribution on all possible mini-batches. Then, for all $\tau$-sparse vectors $x$, we have
			\begin{align*}
				&\mathbb{E}_{j} \|\nabla g_{i,j}(x) - \nabla f_i \left(x \right)\|_2^2 \le 3 \mathbb{E}_{j} ((\rho^{+}_\tau(i,j))^2 + \bar{\rho}^{+(i)}_{4\tau}) \| \Delta \|_2^2 + 12 \mathbb{E}_{j} \|\nabla g_{i,j}(x^*)\|_2^2
			\end{align*} and 
			\begin{align*}
				&\mathbb{E}_{j} \|P_{\Gamma} ( \nabla g_{i,j}(x) - \nabla f_i \left(x \right) )\|_2^2 \le 3 \mathbb{E}_{j} ((\rho^{+}_\tau(i,j))^2 + \bar{\rho}^{+(i)}_{4\tau}) \| \Delta \|_2^2 + 12 \mathbb{E}_{j} \|P_{\Gamma} \nabla g_{i,j}(x^*)\|_2^2,
			\end{align*}
			where $x^*$ is a solution to \eqref{eq:main_problem} and $\Delta = x - x^*$.
		\end{lem}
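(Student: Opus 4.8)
The plan is to reduce the claim to a three-way split of $\nabla g_{i,j}(x)-\nabla f_i(x)$ centered at the minimizer $x^*$, and then bound each piece by either an $\mathcal{A}$-RSS estimate (Assumption~\ref{assumption:main_assumptions2}) or Jensen's inequality. Concretely, I would write
\[
\nabla g_{i,j}(x)-\nabla f_i(x)=\bigl(\nabla g_{i,j}(x)-\nabla g_{i,j}(x^*)\bigr)-\bigl(\nabla f_i(x)-\nabla f_i(x^*)\bigr)+\bigl(\nabla g_{i,j}(x^*)-\nabla f_i(x^*)\bigr),
\]
apply the elementary bound $\|a-b+c\|_2^2\le 3\|a\|_2^2+3\|b\|_2^2+3\|c\|_2^2$, and take the expectation $\mathbb{E}_j$ over a uniformly random mini-batch, recalling that $\mathbb{E}_j\nabla g_{i,j}=\nabla f_i$.

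For the first term, since $x$ and $x^*$ are $\tau$-sparse, $\mathcal{A}$-RSS of $g_{i,j}$ gives $\|\nabla g_{i,j}(x)-\nabla g_{i,j}(x^*)\|_2\le \rho^{+}_\tau(i,j)\|\Delta\|_2$, and squaring then averaging produces the $3\,\mathbb{E}_j(\rho^{+}_\tau(i,j))^2\|\Delta\|_2^2$ contribution. For the second term, $f_i$ is the average of the $g_{i,j}$ and therefore itself satisfies $\mathcal{A}$-RSS with the averaged constant, so $\|\nabla f_i(x)-\nabla f_i(x^*)\|_2^2\le \bar{\rho}^{+(i)}_{4\tau}\|\Delta\|_2^2$ (with the same $4\tau$ support bookkeeping used throughout the paper), which supplies the $3\bar{\rho}^{+(i)}_{4\tau}$ part of the coefficient on $\|\Delta\|_2^2$. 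For the third term, which is the stochastic-gradient fluctuation at the optimum, I would use $\|\nabla g_{i,j}(x^*)-\nabla f_i(x^*)\|_2^2\le 2\|\nabla g_{i,j}(x^*)\|_2^2+2\|\nabla f_i(x^*)\|_2^2$ together with $\|\nabla f_i(x^*)\|_2^2=\|\mathbb{E}_j\nabla g_{i,j}(x^*)\|_2^2\le \mathbb{E}_j\|\nabla g_{i,j}(x^*)\|_2^2$ (Jensen), so that $\mathbb{E}_j\|\nabla g_{i,j}(x^*)-\nabla f_i(x^*)\|_2^2\le 4\,\mathbb{E}_j\|\nabla g_{i,j}(x^*)\|_2^2$; the factor $3$ then yields exactly the $12\,\mathbb{E}_j\|\nabla g_{i,j}(x^*)\|_2^2$ term, and summing the three bounds gives the first inequality. (A one-line alternative that avoids the third-term split is the variance identity $\mathbb{E}_j\|\nabla g_{i,j}(x)-\nabla f_i(x)\|_2^2=\mathbb{E}_j\|\nabla g_{i,j}(x)\|_2^2-\|\nabla f_i(x)\|_2^2$, but the three-way split is what reproduces the stated constants.)

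The second, projected inequality follows from the identical decomposition with $P_\Gamma$ applied to every term before squaring: $P_\Gamma$ is an orthogonal projection, hence a contraction, so the two gradient-variation estimates carry over verbatim, and for the fluctuation term one uses linearity $P_\Gamma\nabla f_i(x^*)=\mathbb{E}_j P_\Gamma\nabla g_{i,j}(x^*)$ and Jensen to get $\|P_\Gamma\nabla f_i(x^*)\|_2^2\le \mathbb{E}_j\|P_\Gamma\nabla g_{i,j}(x^*)\|_2^2$. I do not anticipate a real obstacle here, since everything is elementary; the only points that need care are the sparsity/support bookkeeping when invoking $\mathcal{A}$-RSS (the difference of two $\tau$-sparse vectors together with the merged index set is what forces the $4\tau$ subscript), and, in the projected version, treating $\Gamma$ as a fixed index set of size at most $4\tau$ so that the Jensen step on $P_\Gamma\nabla f_i(x^*)$ is legitimate — which is exactly the setting in which this lemma is later used to replace the bounded-variance assumption in the convergence proof.
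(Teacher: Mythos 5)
Your proposal is correct and follows essentially the same route as the paper's proof: the identical three-way decomposition centered at $x^*$, the bound $\|a+b+c\|_2^2\le 3\|a\|_2^2+3\|b\|_2^2+3\|c\|_2^2$, the $\mathcal{A}$-RSS estimates for the two gradient-variation terms, and the $2\|a\|_2^2+2\|b\|_2^2$ split plus Jensen on the fluctuation term to arrive at $12\,\mathbb{E}_j\|\nabla g_{i,j}(x^*)\|_2^2$, with the projected version obtained by the same argument after applying the contraction $P_\Gamma$.
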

		
		\begin{proof} [Proof of Lemma]
			\begin{align*}
				&\mathbb{E}_{j} \|\nabla g_{i,j}(x) - \nabla f_i \left(x \right)\|_2^2 \\
				&\le 3 \mathbb{E}_{j} \|\nabla g_{i,j}(x) - \nabla g_{i,j}(x^*)  \|_2^2 
				+ 3 \mathbb{E}_{j} \|\nabla g_{i,j}(x^*) - \nabla f_i \left(x^* \right)\|_2^2 
				+ 3 \mathbb{E}_{j} \| \nabla f_i \left(x^* \right) - \nabla f_i \left(x\right)\|_2^2 \\
				&\le 3  \mathbb{E}_{j} (\rho^{+}_\tau(i,j))^2 \|x - x^* \|_2^2 + 6 \mathbb{E}_{j} \|\nabla g_{i,j}(x^*)\|_2^2 + 6 \| \nabla f_i \left(x^* \right)\|_2^2 + 3 \mathbb{E}_{j} \bar{\rho}^{+(i)}_{4\tau} \|x  - x^* \|_2^2\\ 
				&= 3  \mathbb{E}_{j} (\rho^{+}_\tau(i,j))^2 \| \Delta \|_2^2 + 6 \mathbb{E}_{j} \|\nabla g_{i,j}(x^*)\|_2^2 + 6 \| \nabla f_i \left(x^* \right)\|_2^2 + 3 \mathbb{E}_{j} \bar{\rho}^{+(i)}_{4\tau} \| \Delta  \|_2^2\\ 
				&\le 3 \mathbb{E}_{j} ( (\rho^{+}_\tau(i,j))^2 + \bar{\rho}^{+(i)}_{4\tau}) \| \Delta \|_2^2 + 6 \mathbb{E}_{j} \|\nabla g_{i,j}(x^*)\|_2^2 + 6 \| \nabla f_i \left(x^* \right)\|_2^2\\
				&\le 3 \mathbb{E}_{j} ((\rho^{+}_\tau(i,j))^2 + \bar{\rho}^{+(i)}_{4\tau}) \| \Delta \|_2^2 + 12 \mathbb{E}_{j} \|\nabla g_{i,j}(x^*)\|_2^2.
			\end{align*}
			The second inequality follows from the  $\mathcal{A}$-RSS condition for $\nabla g_{i,j}$ with constant $\rho^{+}_\tau(i,j)$, and the fact that $\nabla f_i$ is the average of $\nabla g_{i,j}$. The last inequality is from the Jensen's inequality. This proves the first part of the lemma and the second part follows from a similar argument. 
		\end{proof}
		
		This lemma allows us to prove a similar statement as in Lemma \ref{lem:GradMP_lem3} without the bounded variance condition \eqref{asump:local_gradients2}. Since the underlying argument of the proof of the following lemma is the same, we only point out the difference from the proof for Lemma \ref{lem:GradMP_lem3}. 
		
		\begin{lem}
			\label{lem:GradMP_lem3_without_bounded_variance}
			Let $\widehat{\Gamma}$ be the set obtained from the $k$-th iteration at client $i$. Then, for any $\theta > 0$, we have
			\[
			\mathbb{E}^{(i)}_{j_k}  \|P^\perp_{\widehat{\Gamma}}( b^{(i)}_{t,k} - x^*)\|_2^2 \le \beta_2(i) \| x^{(i)}_{t,k} - x^* \|_2^2 + \xi_2(i),
			\] where 
			\begin{align*}
				\beta_2(i) &= \left(4{(2\eta_1^2 -1) \left( {\bar{\rho}^{+(i)}_{4\tau} } +  {1 \over \theta^2} \right)  - \eta_1^2 \rho^-_{4\tau}(i)   \over \eta_1^2 \rho_{4\tau}^{-}(i)} + \left({3\theta^2} \mathbb{E}_{j_k} (\rho^{+}_\tau(i,j_k) + \bar{\rho}^{+(i)}_{4\tau}) \over \rho_{4\tau}^{-}(i) \right)  + {{2(\eta^2_1 - 1)} \over \eta_1^2} \right) \\
				\xi_2(i)  &=
				{8 \over (\rho_{4\tau}^{-}(i) )^2}   \max\limits_{\substack{ \Omega \subset [d] \\ |\Omega| = 4\tau}}  \| P_{\Omega}  \nabla f_i(x^*) \|_2^2  + 2\left({6\theta^2} + {15\sqrt{\eta_1^2-1} \over 2\eta_1} \right)  \mathbb{E}_{j_k} \|\nabla g_{i,j_k}(x^*)\|_2^2.
			\end{align*} Note that if $\eta_1=1$, then the projection operator is exact. 
			Here $\mathbb{E}^{(i)}_{j_k}$ is the expectation taken over the randomly selected index $j_k$ at the $k$-th step of the local iterations of the $i$-th client.
		\end{lem}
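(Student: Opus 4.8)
The plan is to retrace the proof of Lemma~\ref{lem:GradMP_lem3} essentially verbatim, changing only the single step that invoked the bounded-variance hypothesis \eqref{asump:local_gradients2}: wherever the earlier proof used $\mathbb{E}^{(i)}_{j_k}\|\nabla g_{i,j_k}(x)-\nabla f_i(x)\|_2^2\le\sigma_i^2$ (or, via the triangle inequality, bounded $\mathbb{E}_{j_k}\|\nabla g_{i,j_k}(x)\|_2^2$), I would instead substitute the two estimates of Lemma~\ref{lem:variance_stochastic_gradients}. No new geometric or analytic ingredient is required; the whole content is that the noise terms, which previously contributed the constant $\sigma_i^2$, now contribute an extra multiple of $\mathbb{E}_{j_k}(\rho^{+}_\tau(i,j_k)+\bar\rho^{+(i)}_{4\tau})\,\|\Delta\|_2^2$ (absorbed into $\beta_2(i)$) together with a multiple of $\mathbb{E}_{j_k}\|\nabla g_{i,j_k}(x^*)\|_2^2$ (which becomes part of $\xi_2(i)$). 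Note that, unlike in Lemma~\ref{lem:GradMP_lem3}, no case split on $\eta_1$ is needed here because the residual is expressed uniformly through $\mathbb{E}_{j_k}\|\nabla g_{i,j_k}(x^*)\|_2^2$, whose square already dominates $\|\nabla f_i(x^*)\|_2^2$.

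Concretely I would first repeat the geometric reduction: since $b^{(i)}_{t,k}$ and $x^{(i)}_{t,k}$ lie in $\mathcal{R}(\mathcal{A}_{\widehat\Gamma})$, one has $P^\perp_{\widehat\Gamma}b^{(i)}_{t,k}=P^\perp_{\widehat\Gamma}x^{(i)}_{t,k}=0$, whence $\|P^\perp_{\widehat\Gamma}(b^{(i)}_{t,k}-x^*)\|_2\le\|\Delta-P_\Gamma\Delta\|_2$ with $\Delta:=x^*-x^{(i)}_{t,k}$ and $|\supp_{\mathcal{A}}(\Delta)|\le 2\tau$. Then I would reproduce the chain starting from the $\mathcal{A}$-RSC property of $f_i$, the splitting $\|P_R\nabla g_{i,j_k}(x^{(i)}_{t,k})\|_2\le\|P_\Gamma\nabla g_{i,j_k}(x^{(i)}_{t,k})\|_2+\tfrac{\sqrt{\eta_1^2-1}}{\eta_1}\|P^\perp_\Gamma\nabla g_{i,j_k}(x^{(i)}_{t,k})\|_2$ from inequality~(15) of \cite{nguyen2017linear}, the introduction of the auxiliary vector $z=-\tfrac{P_\Gamma\nabla g_{i,j_k}(x^{(i)}_{t,k})}{\|P_\Gamma\nabla g_{i,j_k}(x^{(i)}_{t,k})\|_2}\|\Delta\|_2$, and the two applications of the AM--GM and Young inequalities, followed by the Descent Lemma~\ref{lem:consequence_RSS} applied to each $g_{i,j}$ and averaged to $f_i$. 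The two places where Lemma~\ref{lem:GradMP_lem3} used \eqref{asump:local_gradients2} are the Young-split term $\tfrac{\theta^2}{2}\mathbb{E}_{j_k}\|\nabla g_{i,j_k}(x^{(i)}_{t,k})-\nabla f_i(x^{(i)}_{t,k})\|_2^2$ and the bound $\mathbb{E}_{j_k}\|P^\perp_\Gamma\nabla g_{i,j_k}(x^{(i)}_{t,k})\|_2^2\le\mathbb{E}_{j_k}\|\nabla g_{i,j_k}(x^{(i)}_{t,k})\|_2^2$; into the first I would plug the first estimate of Lemma~\ref{lem:variance_stochastic_gradients} and into the second the second estimate. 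The $\|\Delta\|_2^2$-parts produced carry the factor $3\mathbb{E}_{j_k}(\rho^{+}_\tau(i,j_k)+\bar\rho^{+(i)}_{4\tau})$, which, after the Young weight $\theta^2/2$ and the eventual division by $\rho^-_{4\tau}(i)$, yields the $\tfrac{3\theta^2\,\mathbb{E}_{j_k}(\rho^{+}_\tau(i,j_k)+\bar\rho^{+(i)}_{4\tau})}{\rho^-_{4\tau}(i)}$ summand of $\beta_2(i)$.

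Finally I would collect all $\|\Delta\|_2^2$-coefficients into the quadratic inequality $a u^2-2bu-c\le 0$ with $u=\mathbb{E}_{j_k}\|\Delta-z\|_2$, $a=\rho^-_{4\tau}(i)$, $b=\max_{|\Omega|=4\tau}\|P_\Omega\nabla f_i(x^*)\|_2$, and $c$ now containing the $\mathbb{E}_{j_k}\|\nabla g_{i,j_k}(x^*)\|_2^2$ residuals in place of $\sigma_i^2$; solving it gives $u\le\sqrt{c/a}+2b/a$, hence $u^2\le 2c/a+8b^2/a^2$, and then using $\|z\|_2^2=\|\Delta\|_2^2$ together with $\|\Delta-P_\Gamma\Delta\|_2^2\le\|\Delta-z\|_2^2$ and simplifying produces the stated $\beta_2(i)$ and $\xi_2(i)$; I expect the factor $(2\eta_1^2-1)$ multiplying $\bar\rho^{+(i)}_{4\tau}+1/\theta^2$, as well as the $\tfrac{2(\eta_1^2-1)}{\eta_1^2}$ term, to emerge from merging the $1/\theta^2$-Young contribution with the $\|\Delta\|_2^2$-contribution of the $\|P^\perp_\Gamma\nabla g_{i,j_k}\|_2^2$ term carried through the $\tfrac{\sqrt{\eta_1^2-1}}{\eta_1}$ factor. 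The only real work — and the one place where a slip is easy — is the constant bookkeeping: because the new variance bound is itself $\|\Delta\|_2^2$-dependent, one must avoid double-counting those terms and must track the $3,6,12$ and the $\sqrt{\eta_1^2-1}/\eta_1$, $1/\theta^2$ weights through the quadratic-formula step so that the coefficients land exactly on the claimed $\beta_2(i)$ and $\xi_2(i)$. There is no conceptual obstacle beyond Lemma~\ref{lem:variance_stochastic_gradients} itself.
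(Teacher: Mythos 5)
Your plan coincides with the paper's proof: it retraces Lemma~\ref{lem:GradMP_lem3} verbatim, replacing the two uses of the bounded-variance assumption by the estimates coming from Lemma~\ref{lem:variance_stochastic_gradients} (the paper bounds $\mathbb{E}_{j_k}\|P^\perp_\Gamma\nabla g_{i,j_k}(x^{(i)}_{t,k})\|_2^2$ by an inline triangle-inequality computation yielding $3\,\mathbb{E}_{j_k}(\rho^{+}_\tau(i,j_k))^2\|\Delta\|_2^2+15\,\mathbb{E}_{j_k}\|\nabla g_{i,j_k}(x^*)\|_2^2$ rather than by quoting the lemma's second estimate, which concerns $P_\Gamma(\nabla g_{i,j}-\nabla f_i)$, but this is the same one-line argument). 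The only remaining work is exactly the constant bookkeeping through the quadratic inequality $au^2-2bu-c\le 0$ that you identify.
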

		
		\begin{proof}
			We follow the same steps in the proof of Lemma \ref{lem:GradMP_lem3} for the bound $f_i(x^*) - f_i \left(x^{(i)}_{t,k} \right) -  {\rho_{4\tau}^{-}(i) \over 2} \| x^* - x^{(i)}_{t,k} \|_2^2$ but apply Lemma \ref{lem:GradMP_lem3_without_bounded_variance} to the inequality \ref{eq:bound2_Lemma_mod} as follows.

			\begin{align}
				\nonumber
				&f_i(x^*) - f_i \left(x^{(i)}_{t,k} \right) -  {\rho_{4\tau}^{-}(i) \over 2} \| x^* - x^{(i)}_{t,k} \|_2^2 \\
				\nonumber
				& \ge  \mathbb{E}_{j_k} \inner { \nabla f_i\left(x^{(i)}_{t,k} \right),  z } -  {\theta^2 \over 2} \|\nabla g_{i,j_k}(x^{(i)}_{t,k}) - \nabla f_i \left(x^{(i)}_{t,k} \right)\|_2^2 \\
				\nonumber
				&\qquad - {1 \over 2\theta^2} \mathbb{E}_{j_k} \|z\|_2^2 -  {\bar{\rho}^{+(i)}_{4\tau} \over 2} \mathbb{E}_{j_k} \| z\|_2^2  - {\sqrt{\eta_1^2-1} \over 2 \eta_1} \left(\mathbb{E}_{j_k}  \|P^\perp_{\Gamma} \nabla g_{i,j_k}(x^{(i)}_{t,k}) \|_2^2 + \mathbb{E}_{j_k} \|  \Delta \|_2^2  \right) \\
				\label{eq:bound2_Lemma_mod}
				& \ge  \mathbb{E}_{j_k} \inner { \nabla f_i\left(x^{(i)}_{t,k} \right),  z } -  {\theta^2 \over 2} ( 3 \mathbb{E}_{j} ((\rho^{+}_\tau(i,j))^2 + \bar{\rho}^{+(i)}_{4\tau}) \| \Delta \|_2^2 + 12 \mathbb{E}_{j} \|\nabla g_{i,j}(x^*)\|_2^2) \\
				\nonumber
				&\qquad - {1 \over 2\theta^2} \mathbb{E}_{j_k} \|z\|_2^2 -{\bar{\rho}^{+(i)}_{4\tau} \over 2} \mathbb{E}_{j_k} \| z\|_2^2  - {\sqrt{\eta_1^2-1} \over 2 \eta_1} \left(\mathbb{E}_{j_k}  \|P^\perp_{\Gamma} \nabla g_{i,j_k}(x^{(i)}_{t,k}) \|_2^2 + \mathbb{E}_{j_k} \|  \Delta \|_2^2  \right).
			\end{align}
			
			Similarly, we obtain the upper bound for $\mathbb{E}_{j_k}  \|P^\perp_{\Gamma} \nabla g_{i,j_k}(x^{(i)}_{t,k}) \|_2^2$ as follows.
			\begin{align*}
				&\mathbb{E}_{j_k}  \|P^\perp_{\Gamma} \nabla g_{i,j_k}(x^{(i)}_{t,k}) \|_2^2 \\
				&\le \mathbb{E}_{j_k}  \|\nabla g_{i,j_k}(x^{(i)}_{t,k}) \|_2^2 \\
				&\le 3 \mathbb{E}_{j_k} \|\nabla g_{i,j_k}(x^{(i)}_{t,k}) - \nabla g_{i,j_k}(x^*)  \|_2^2 
				+ 3 \mathbb{E}_{j_k} \|\nabla g_{i,j_k}(x^*) - \nabla f_i \left(x^* \right)\|_2^2 
				+ 3 \mathbb{E}_{j_k} \| \nabla f_i \left(x^* \right) \|_2^2 \\
				&\le 3  \mathbb{E}_{j_k} (\rho^{+}_\tau(i,j_k))^2 \|x^{(i)}_{t,k}  - x^* \|_2^2 + 6 \mathbb{E}_{j_k} \|\nabla g_{i,j_k}(x^*)\|_2^2 + 6 \| \nabla f_i \left(x^* \right)\|_2^2 +  3  \| \nabla f_i \left(x^* \right) \|_2^2\\ 
				&= 3  \mathbb{E}_{j_k} (\rho^{+}_\tau(i,j_k))^2 \| \Delta \|_2^2 + 6 \mathbb{E}_{j_k} \|\nabla g_{i,j_k}(x^*)\|_2^2 + 9 \| \nabla f_i \left(x^* \right)\|_2^2\\
				&= 3  \mathbb{E}_{j_k} (\rho^{+}_\tau(i,j_k))^2 \| \Delta \|_2^2 + 15 \mathbb{E}_{j_k} \|\nabla g_{i,j_k}(x^*)\|_2^2,
			\end{align*} where the last inequality is from the Jensen's inequality. 
			
			Applying this bound for $\mathbb{E}_{j_k}  \|P^\perp_{\Gamma} \nabla g_{i,j_k}(x^{(i)}_{t,k}) \|_2^2$ yields 
			\begin{align*}
				&f_i(x^*) - f_i \left(x^{(i)}_{t,k} \right) -  {\rho_{4\tau}^{-}(i) \over 2} \| \Delta \|_2^2  \\
				& \ge  \mathbb{E}_{j_k} \inner { \nabla f_i\left(x^{(i)}_{t,k} \right),  z } -  {\theta^2 \over 2} ( 3 \mathbb{E}_{j} ((\rho^{+}_\tau(i,j))^2 + \bar{\rho}^{+(i)}_{4\tau}) \| \Delta \|_2^2 + 12 \mathbb{E}_{j} \|\nabla g_{i,j}(x^*)\|_2^2) - {\bar{\rho}^{+(i)}_{4\tau} \over 2} \mathbb{E}_{j_k} \| z\|_2^2  \\
				\nonumber
				&\qquad - {1 \over 2\theta^2} \mathbb{E}_{j_k} \|z\|_2^2  - {\sqrt{\eta_1^2-1} \over 2 \eta_1} \left(3  \mathbb{E}_{j_k} (\rho^{+}_\tau(i,j_k))^2 \| \Delta \|_2^2 + 15 \mathbb{E}_{j_k} \|\nabla g_{i,j_k}(x^*)\|_2^2 + \mathbb{E}_{j_k} \|  \Delta \|_2^2  \right).
			\end{align*}
			
			Following the same argument in the proof of Lemma \ref{lem:GradMP_lem3}, we have
			\begin{align*}
				&\left( {\bar{\rho}^{+(i)}_{4\tau} \over 2} +  {1 \over 2\theta^2} \right) \mathbb{E}_{j_k} \| z\|_2^2 - {1 \over 2} \left(\rho_{4\tau}^{-}(i)  -  {3\theta^2} \mathbb{E}_{j_k} ((\rho^{+}_\tau(i,j_k))^2 + \bar{\rho}^{+(i)}_{4\tau}) - {\sqrt{\eta_1^2-1} \over \eta_1} (3 \mathbb{E}_{j_k} \rho^{+}_\tau(i,j_k))^2 + 1)  \right) \|\Delta\|_2^2 \\
				& \qquad + \left({6\theta^2} + {15\sqrt{\eta_1^2-1} \over 2\eta_1} \right)  \mathbb{E}_{j_k} \|\nabla g_{i,j_k}(x^*)\|_2^2  \\
				&\ge \mathbb{E}_{j_k} f_i \left(x^{(i)}_{t,k} +  z \right) -  f_i \left(x^* \right) \\
				& \ge {\rho^-_{4\tau}(i) \over 2}  \|\Delta - z  \|_2^2  - \max_{\substack{ \Omega \subset [d] \\ |\Omega| = 4\tau}}  \| P_{\Omega}  \nabla f_i(x^*) \|_2 \mathbb{E}_{j_k} \| \Delta -z \|_2.
			\end{align*}
			
			Let $u = \mathbb{E}_{j_k} \|\Delta - y\|_2 $, $a =  \rho^-_{4\tau}(i) $, $b = \max\limits_{\substack{ \Omega \subset [d] \\ |\Omega| = 4\tau}}  \| P_{\Omega}  \nabla f_i(x^*) \|_2$, and 
			\small
			\begin{align*}
				c &=\left( {\bar{\rho}^{+(i)}_{4\tau} \over 2} +  {1 \over 2\theta^2} \right) \mathbb{E}_{j_k} \| z\|_2^2 - {1 \over 2} \left(\rho_{4\tau}^{-}(i)  -  {3\theta^2} \mathbb{E}_{j_k} ((\rho^{+}_\tau(i,j_k))^2 + \bar{\rho}^{+(i)}_{4\tau}) - {\sqrt{\eta_1^2-1} \over \eta_1} (3 \mathbb{E}_{j_k} \rho^{+}_\tau(i,j_k))^2 + 1)  \right) \|\Delta\|_2^2 \\
				& \qquad + \left({6\theta^2} + {15\sqrt{\eta_1^2-1} \over 2\eta_1} \right)  \mathbb{E}_{j_k} \|\nabla g_{i,j_k}(x^*)\|_2^2.
			\end{align*}
			\normalsize
			Then above inequality can be rewritten in $au^2 - 2bu - c \le 0$ and solving it gives
			\[
			\mathbb{E}_{j_k} \|\Delta - y\|_2 \le \sqrt{c \over a} + {2b \over a}.
			\]
			Again, following the same argument for the proof of Lemma \ref{lem:GradMP_lem3}, we get
			\begin{align*}
				\mathbb{E}^{(i)}_{j_k}\|\Delta - P_\Gamma \Delta \|_2^2 
				&\le {2c \over a} + {8b^2 \over a^2}.
			\end{align*}
			Thus, 
			\begin{align*}
				&\mathbb{E}^{(i)}_{j_k}\|\Delta - P_\Gamma \Delta \|_2^2 \\
				& \quad \le \left(2{\left( {\bar{\rho}^{+(i)}_{4\tau} } +  {1 \over \theta^2} \right)  - \eta_1^2 \rho^-_{4\tau}(i)   \over \eta_1^2 \rho_{4\tau}^{-}(i)} + \left({3\theta^2} \mathbb{E}_{j_k} ((\rho^{+}_\tau(i,j_k))^2 + \bar{\rho}^{+(i)}_{4\tau}) \over \rho_{4\tau}^{-}(i) \right)  +{\sqrt{\eta_1^2-1} \over \eta_1} (3 \mathbb{E}_{j_k} \rho^{+}_\tau(i,j_k))^2 + 1) \right) \|\Delta\|_2^2 \\
				&\qquad {8 \over (\rho_{4\tau}^{-}(i) )^2}   \max_{\substack{ \Omega \subset [d] \\ |\Omega| = 4\tau}}  \| P_{\Omega}  \nabla f_i(x^*) \|_2^2  + 2\left({6\theta^2} + {15\sqrt{\eta_1^2-1} \over 2\eta_1} \right)  \mathbb{E}_{j_k} \|\nabla g_{i,j_k}(x^*)\|_2^2.
			\end{align*}
		\end{proof}
		
		We follow the idea of the proof for Theorem \ref{thm:main_theorem1} but use Lemma \ref{lem:GradMP_lem3_without_bounded_variance} to show the following convergence theorem of FedGradMP. 
		
		\begin{thm}
			\label{thm:main_theorem3}
			Under the same notations and assumptions but without the bounded variance condition \eqref{asump:local_gradients2}, the expectation of the recovery error at the $(t+1)$-th round of FedGradMP described in Algorithm \ref{alg:FedGradMP} is upper bounded by
			\[
			\mathbb{E} \|x_{t+1} - x^*\|_2^2 
			\le  \kappa^{t+1} \|x_0 - x^*\|_2^2  +  {(2 \eta_3^2 + 2) \nu \over 1-\kappa} \sum_{i=1}^N p_i { 1 - \mu(i)^K   \over 1 - \mu(i)},	
			\]
			where 
			\[
			\kappa = (2 \eta_3^2 + 2)  \sum_{i=1}^N p_i \left[(1 + \eta_2)^2  \beta_1(i)\beta_2(i) \right]^K.
			\]
			Here 
			\begin{align*}
				&\beta_1(i) = { \bar{\rho}^{+(i)}_{4\tau} \over 2\rho^{-}_{4\tau}(i) -  \bar{\rho}^{+(i)}_{4\tau} }, \\
				&\beta_2(i) 
				= \left(2{\left( {\bar{\rho}^{+(i)}_{4\tau} } +  {1 \over \theta^2} \right)  - \eta_1^2 \rho^-_{4\tau}(i)   \over \eta_1^2 \rho_{4\tau}^{-}(i)} + \left({3\theta^2} \mathbb{E}_{j_k} ((\rho^{+}_\tau(i,j_k))^2 + \bar{\rho}^{+(i)}_{4\tau}) \over \rho_{4\tau}^{-}(i) \right)  +{\sqrt{\eta_1^2-1} \over \eta_1} (3 \mathbb{E}_{j_k} \rho^{+}_\tau(i,j_k))^2 + 1) \right), 
			\end{align*}
			\begin{align*}
				&\nu  = 
				(1 + \eta_2)^2  \max_i \left(  {8\beta_1(i) \over (\rho_{4\tau}^{-}(i) )^2}    \right) \sum\limits_{i=1}^N p_i   \max\limits_{\substack{ \Omega \subset [d] \\ |\Omega| = 4\tau}}  \| P_{\Omega}  \nabla f_i(x^*) \|_2^2 \\
				&\qquad+  (1 + \eta_2)^2  \sum\limits_{i=1}^N p_i \left[{\beta_1(i) \over \rho_{4\tau}^{-}(i) }  \left({2\theta^2 } + {6\sqrt{\eta_1^2-1} \over \eta_1} \right) + {4 \over \bar{\rho}^{+(i)}_{4\tau} (2\rho^{-}_{4\tau}(i) - \bar{\rho}^{+(i)}_{4\tau} ) } \right]\sum\limits_{i=1}^N p_i  \mathbb{E}_{j} \| \nabla g_{i,j}(x^*) \|_2^2.
			\end{align*}
		\end{thm}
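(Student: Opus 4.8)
The plan is to reproduce the proof of Theorem~\ref{thm:main_theorem1} essentially verbatim, with the single substitution of Lemma~\ref{lem:GradMP_lem3_without_bounded_variance} in place of Lemma~\ref{lem:GradMP_lem3}. Since Lemma~\ref{lem:GradMP_lem3_without_bounded_variance} never invokes the bounded-variance hypothesis \eqref{asump:local_gradients2} --- it instead absorbs the stochastic-gradient fluctuation $\|\nabla g_{i,j}(x)-\nabla f_i(x)\|_2^2$ into the $\mathcal{A}$-RSS constants via Lemma~\ref{lem:variance_stochastic_gradients} and a Young-type split controlled by $\theta$ --- every estimate used downstream remains available under $\mathcal{A}$-RSC and $\mathcal{A}$-RSS alone, which is exactly the regime claimed in Theorem~\ref{thm:main_theorem3}.

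First I would condition on the filtration $\mathcal{F}^{(t)}$ and repeat the opening step of the proof of Theorem~\ref{thm:main_theorem1} without change: expand $x_{t+1}=P_{\Lambda_s}\big(\sum_i p_i x_{t,K+1}^{(i)}\big)$, apply $\|a+b\|_2^2\le 2\|a\|_2^2+2\|b\|_2^2$, the defining inequality of $\mathrm{approx}_\tau(\cdot,\eta_3)$, the optimality of $\mathcal{H}_\tau(\sum_i p_i x_{t,K+1}^{(i)})$ as the best $\tau$-sparse approximant with respect to $\mathcal{A}$, and finally Jensen's inequality, arriving at $\mathbb{E}\|x_{t+1}-x^*\|_2^2\le (2\eta_3^2+2)\sum_{i=1}^N p_i\,\mathbb{E}^{(i)}_{J_K}\|x_{t,K+1}^{(i)}-x^*\|_2^2$, precisely \eqref{eq:main_thm_bound1}.

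Next I would bound each per-client term by chaining Lemma~\ref{lem:GradMP_lem1} (the $(1+\eta_2)^2$ inflation from the last $\tau$-projection), Lemma~\ref{lem:GradMP_lem2} (contraction by $\beta_1(i)$ onto $P^\perp_{\widehat{\Gamma}}$ with residual $\xi_1(i)$), and then Lemma~\ref{lem:GradMP_lem3_without_bounded_variance} (contraction by the new $\beta_2(i)$ with residual $\xi_2(i)$). Here $\xi_1(i)$ is handled by the elementary bound $\|P_{\widehat{\Gamma}}\nabla g_{i,j}(x^*)\|_2\le\|\nabla g_{i,j}(x^*)\|_2$ --- this is the step that formerly produced the $\sigma_i^2$ term --- while $\xi_2(i)$ already emerges from Lemma~\ref{lem:GradMP_lem3_without_bounded_variance} in terms of $\max_{|\Omega|=4\tau}\|P_\Omega\nabla f_i(x^*)\|_2^2$ and $\mathbb{E}_{j_k}\|\nabla g_{i,j_k}(x^*)\|_2^2$. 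Collecting $\beta_1(i)\xi_2(i)+\xi_1(i)$, bounding the $\max_\Omega$ coefficient by its maximum over $i$, yields a constant $\nu$ independent of the local-iteration index and of the round, and hence the one-step recursion
\[
\sum_{i=1}^N p_i\,\mathbb{E}^{(i)}_{J_K}\|x_{t,K+1}^{(i)}-x^*\|_2^2\le \sum_{i=1}^N p_i\,\mu(i)\,\mathbb{E}^{(i)}_{J_{K-1}}\|x_{t,K}^{(i)}-x^*\|_2^2+\nu,\qquad \mu(i)=(1+\eta_2)^2\beta_1(i)\beta_2(i).
\]

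Finally I would unroll this recursion over the local iterations $k=K,K-1,\dots,1$, using $x_{t,1}^{(i)}=x_t$ and the geometric sum $\sum_{l=0}^{K-1}\mu(i)^l=(1-\mu(i)^K)/(1-\mu(i))$, to get $\mathbb{E}\|x_{t+1}-x^*\|_2^2\le \kappa\|x_t-x^*\|_2^2+(2\eta_3^2+2)\nu\sum_i p_i(1-\mu(i)^K)/(1-\mu(i))$ with $\kappa=(2\eta_3^2+2)\sum_i p_i\mu(i)^K$, and then take the unconditional expectation and iterate over $t$. The only real labor is in the third step: one must track how the enlarged $\beta_2(i)$ of Lemma~\ref{lem:GradMP_lem3_without_bounded_variance} --- now carrying the extra $\bar{\rho}^{+(i)}_{4\tau}$- and $\theta$-dependent pieces, because the gradient variance is no longer a free parameter --- propagates through the $K$-fold recursion, and verify that the assorted $\xi_1(i),\xi_2(i)$ contributions assemble into exactly the advertised $\nu$; this is careful bookkeeping rather than a new idea, and no inequality beyond those already proved is required. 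I expect that bookkeeping to be the main obstacle, together with confirming $\kappa<1$ is preserved under the modified $\beta_2(i)$.
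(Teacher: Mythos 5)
Your proposal matches the paper's own argument: the paper proves Theorem~\ref{thm:main_theorem3} precisely by rerunning the proof of Theorem~\ref{thm:main_theorem1} with Lemma~\ref{lem:GradMP_lem3_without_bounded_variance} substituted for Lemma~\ref{lem:GradMP_lem3}, and you correctly identify the only other place where the bounded-variance hypothesis entered, namely the bound on $\xi_1(i)$, which you replace by $\|P_{\widehat{\Gamma}}\nabla g_{i,j}(x^*)\|_2\le\|\nabla g_{i,j}(x^*)\|_2$, yielding the $\mathbb{E}_j\|\nabla g_{i,j}(x^*)\|_2^2$ term in the advertised $\nu$. The remaining steps (the $(2\eta_3^2+2)$ aggregation bound, the $K$-fold unrolling, and the induction over rounds) are carried over unchanged, exactly as in the paper.
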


		\bibliographystyle{plain}
		\bibliography{Federated_Gradient_Matching_Pursuit}

	\end{document}